\def\eqref#1{equation~\ref{#1}}
\def\floor#1{\lfloor #1 \rfloor}
\def\1{\bm{1}}
\def\vtheta{{\bm{\theta}}}
\def\va{{\bm{a}}}
\def\vb{{\bm{b}}}
\def\vp{{\bm{p}}}
\def\vq{{\bm{q}}}
\def\vw{{\bm{w}}}
\def\vx{{\bm{x}}}
\def\vy{{\bm{y}}}
\def\vz{{\bm{z}}}
\DeclareMathAlphabet{\mathsfit}{\encodingdefault}{\sfdefault}{m}{sl}
\SetMathAlphabet{\mathsfit}{bold}{\encodingdefault}{\sfdefault}{bx}{n}
\newcommand{\E}{\mathbb{E}}
\newcommand{\R}{\mathbb{R}}
\DeclareMathOperator*{\argmax}{arg\,max}
\newtheorem{theorem}{Theorem}
\newtheorem{lemma}[theorem]{Lemma}
\def\Mcal{{ \mathcal{M} }}
\def\Ncal{{ \mathcal{N} }}
\def\drm{{ \mathrm{d} }}
\newcommand{\RR}[0]{\mathbb{R}}
\newcommand{\ip}[2]{\left\langle #1, #2 \right\rangle}
\newcommand{\Wcal}{\mathcal{W}}
\newcommand{\Fcal}{\mathcal{F}}
\newcommand{\MD}[3][\eta]{\textup{MD}_#1\left(#2,#3\right)}
\def\beq{\begin{equation}}
\def\eeq{\end{equation}}
\def\z{{\mathbf z}}
\def\X{{\mathbf X}}
\def\RR{{\mathbb R}}
\def\PP{{ \mathbb P }}
\def\EE{{ \mathbb E }}
\def\wt{{w_{t}}}
\def\wt1{{w_{t+1}}}
\def\at1{{a_{t+1}}}
\def\W2{{ \mathcal{W}_2 }}
\newcommand{\TV}[1]{\left\| #1 \right\|_{\textup{TV}}}
\newcommand{\Linf}[1]{\left\| #1 \right\|_{\mathbb{L}^{\infty} }}
\def\nn{{\nonumber}}
\def \dmu {{ \drm \mu }}
\def\Lbb {{  \mathbb{L} }}
\def\tnu{{ \tilde{\nu} }}
\def\tmu{{ \tilde{\mu} }}
\def\Zcal{{  \mathcal{Z}}}
\def\Preal{{ \PP_{\textup{real}} }}
\def\Ptheta{{ \PP_\vtheta }}
\title{Finding Mixed Nash Equilibria of \\ Generative Adversarial Networks}
\author{Ya-Ping Hsieh, Chen Liu, Volkan Cevher\\
\texttt{\{ya-ping.hsieh, chen.liu, volkan.cevher\}@epfl.ch}
}
\begin{document}

\maketitle

\begin{abstract}
We reconsider the training objective of Generative Adversarial Networks (GANs) from the \emph{mixed Nash Equilibria} (NE) perspective. Inspired by the classical prox methods, we develop a novel algorithmic framework for GANs via an infinite-dimensional two-player game and prove rigorous convergence rates to the mixed NE, resolving the longstanding problem that no provably convergent algorithm exists for general GANs. We then propose a principled procedure to reduce our novel prox methods to simple sampling routines, leading to practically efficient algorithms. Finally, we provide experimental evidence that our approach outperforms methods that seek pure strategy equilibria, such as SGD, Adam, and RMSProp, both in speed and quality. 

\if 0
Training of Generative Adversarial Networks (GANs) is known to be extremely challenging. Existing theory suggests that the objective of GANs, which corresponds to finding pure strategy equilibria in a two-player game, might result in ill-posed optimization problems. In this paper, we revisit the notion of \emph{mixed Nash Equilibria} (NE). Inspired by the classical prox methods, we develop a novel algorithmic framework for GANs via an infinite-dimensional two-player game and prove rigorous convergence rates to the mixed NE. We then propose a principled procedure to reduce our novel prox methods to simple sampling routines, leading to efficient algorithms. Finally, we provide experimental evidence that our approach outperforms methods that seek pure strategies, such as SGD, Adam, and RMSProp, both in speed and quality. 
\fi

\if 0
We revisit the problem formulation of Generative Adversarial Networks (GANs) in the context of mixed Nash equilibrium (NE). We develop a novel algorithmic framework for GANs via an infinite dimensional bilinear matrix game and prove rigorous convergence rate guarantees to the mixed NE. By mixed NE, we mean that the GAN formulation is extended to allow uncountably many pure strategies where we assign a probability to each pure strategy. Based on this framework, we then illustrate how to obtain numerical solutions via new prox methods in convex optimization as well as the stochastic Langevin dynamics in sampling. Finally, we provide experimental evidence to illustrate that our approach outperforms adaptive gradient methods that seek pure strategies, such as Adam and RMSProp, both in speed and quality. 
\fi

\if 0
Training of Generative Adversarial Networks is known to be extremely challenging. Existing theory suggests that the problem formation of GANs, which corresponds to finding the pure strategy equilibrium in a two-player game, might be ill-posed. In this paper, we initiate the first systematic study of \emph{mixed strategy} equilibria for general GANs. Inspired by the classical {prox methods} for solving games with finite strategies, we extend these algorithms to games with uncountably many strategies and rigorously prove their convergence rates. When applied to training GANs, we devise a principled procedure to reduce our novel prox methods to simple sampling routines. Finally, we report promising experimental results on both real and synthetic data.
\fi
\end{abstract}

\section{Introduction}
\if 0
Since the advent of the seminal paper \cite{goodfellow2014generative}, the Generative Adversarial Network (GAN) has become one of the most powerful paradigms in learning real-world distributions, especially for images. It has been successfully applied to a host of applications such as image translation \cite{isola2017image, kim2017learning, zhu2017unpaired}, super-resolution imaging \cite{wang2015deep}, human poses editing \cite{pumarola2018unsupervised}, and facial animation \cite{pumarola2018ganimation}. 
\fi 
The Generative Adversarial Network (GAN) \cite{goodfellow2014generative} has become one of the most powerful paradigms in learning real-world distributions, especially for image-related data. It has been successfully applied to a host of applications such as image translation \cite{isola2017image, kim2017learning, zhu2017unpaired}, super-resolution imaging \cite{wang2015deep},  pose editing \cite{pumarola2018unsupervised}, and facial animation \cite{pumarola2018ganimation}. 

\if 0
Theoretically, this is because the training objective of GANs involves finding a saddle point of a highly non-convex/non-concave function, for which the solution simply might not exist \cite{arora2017generalization}, might be degenerate \cite{sonderby2017amortised}, or cannot be reliably reached by existing algorithms \cite{mescheder2017numerics}. In the language of game theory, the classical GAN searches for a \emph{pure strategy} equilibrium, which is well-known to be ill-posed in many scenarios \cite{dasgupta1986existence}.
\fi 

\if 0
In this setting, game theoretic interpretations are only natural for GANs where a (sample) generator competes with a (sample) discriminator based on the observed data. Indeed, the learning formulation readily fits into the classical minimax optimization template, for which the existence of the various Nash equilibriums (NE) has long been established. 
\fi

Despite of the many accomplishments, the major hurdle blocking the full impact of GAN is its notoriously difficult training phase. In the language of game theory, GAN seeks for a \emph{pure strategy} equilibrium, which is well-known to be ill-posed in many scenarios \cite{dasgupta1986existence}. Indeed, it is known that a pure strategy equilibrium might not exist \cite{arora2017generalization}, might be degenerate \cite{sonderby2017amortised}, or cannot be reliably reached by existing algorithms \cite{mescheder2017numerics}. 

Empirically, it has also been observed that common algorithms, such as SGD or Adam  \cite{kingma2014adam}, lead to unstable training. While much efforts have been devoted into understanding the training dynamics of GANs \cite{balduzzi18mechanics, gemp2018global, gidel2018variational, gidel2018negative, liang2018interaction}, a provably convergent algorithm for general GANs, even under reasonably strong assumptions, is still lacking.

\if 0
Despite this negative view, a significant amount of effort has been devoted into understanding the training dynamics of GANs in recent years \cite{gemp2018global, gidel2018variational, gidel2018negative, liang2018interaction}. Empirically, it has been observed that common algorithms, such as SGD or Adam  \cite{kingma2014adam}, lead to unstable training. A provably convergent algorithm for general GANs, even under strong assumptions, is still lacking. 
\fi
%

In this paper, we address the above problems with the following contributions:
\begin{enumerate}
\item We propose to study the \emph{mixed Nash Equilibrium} (NE) of GANs: Instead of searching for an optimal pure strategy which might not even exist, we optimize over the set of \emph{probability distributions} over pure strategies of the networks. The existence of a solution to such problems was long established amongst the earliest game theory work \cite{glicksberg1952further}, leading to well-posed optimization problems.

\item We demonstrate that the prox methods of \cite{nemirovsky1983problem, beck2003mirror, nemirovski2004prox}, which are fundamental building blocks for solving two-player games with \emph{finitely} many strategies, can be extended to continuously many strategies, and hence applicable to training GANs. We provide an elementary proof for their convergence rates to learning the mixed NE.

\item We construct a principled procedure to reduce our novel prox methods to certain sampling tasks that were empirically proven easy by recent work \cite{chaudhari2017entropy, Chaudhari2018, dziugaite2018entropy}. We further establish heuristic guidelines to greatly scale down the memory and computational costs, resulting in simple algorithms whose per-iteration complexity is almost as cheap as SGD. 

\item We experimentally show that our algorithms consistently achieve better or comparable performance than popular baselines such as SGD, Adam, and RMSProp \cite{tieleman2012lecture}.
\end{enumerate}





\textbf{Related Work:} While the literature on training GANs is vast, to our knowledge, there exist only few papers on the mixed NE perspective. The notion of mixed NE is already present in \cite{goodfellow2014generative}, but is stated only as an existential result. The authors of \cite{arora2017generalization} advocate the mixed strategies, but do not provide a provably convergent algorithm. \cite{oliehoek2018beyond} also considers mixed NE, but only with {finitely} many parameters. The work \cite{grnarova2018an} proposes a provably convergent algorithm for finding the mixed NE of GANs under the unrealistic assumption that the discriminator is a single-layered neural network. In contrast, our results are applicable to arbitrary  architectures, including popular ones \cite{arjovsky2017wasserstein,gulrajani2017improved}.

Due to its fundamental role in game theory, many prox methods have been applied to study the training of GANs \cite{daskalakis2018training, gidel2018variational, mertikopoulos2018mirror}. However, these works focus on the classical pure strategy equilibria and are hence distinct from our problem formulation. In particular, they give rise to drastically different algorithms from ours and do not provide convergence rates for GANs.





In terms of analysis techniques, our framework is closely related to \cite{balandat2016minimizing}, but with several important distinctions. First, the analysis of \cite{balandat2016minimizing} is based on dual averaging \cite{nesterov2009primal}, while we consider Mirror Descent and also the more sophisticated Mirror-Prox (see Section~\ref{sec:inf_MD}). Second, unlike our work, \cite{balandat2016minimizing} do not provide any convergence rate for learning mixed NE of two-player games. Finally, \cite{balandat2016minimizing} is only of theoretical interest with no practical algorithm.
\vspace{3mm}

\textbf{Notation:} Throughout the paper, we use $\vz$ to denote a generic variable and $\Zcal \subseteq \RR^d$ its domain.
We denote the set of all Borel probability measures on $\Zcal$ by $\Mcal(\Zcal)$, and the set of all functions on $\Zcal$ by $\Fcal(\Zcal)$.\footnote{Strictly speaking, our derivation requires mild regularity (see Appendix~\ref{app:regularity}) assumptions on the probability measure and function classes, which are met by most practical applications.} We write $\drm \mu= \rho \drm \vz$  to mean that the density function of $\mu\in\Mcal(\Zcal)$ with respect to the Lebesgue measure is $\rho$. All integrals without specifying the measure are understood to be with respect to Lebesgue. For any objective of the form $\min_{\vx} \max_{\vy} F(\vx,\vy)$ with $(\vx_{\textup{NE}},\vy_{\textup{NE}})$ achieving the saddle-point value, we say that $(\vx_T,\vy_T)$ is an $O\left(T^{-\nicefrac{1}{2}}\right)$-NE if $\left| F(\vx_T,\vy_T) - F(\vx_{\textup{NE}},\vy_{\textup{NE}}) \right|= O\left(T^{-\nicefrac{1}{2}}\right)$. Similarly we can define $O\left(T^{-{1}}\right)$-NE. The symbol $\Linf{\cdot}$ denotes the $\mathbb{L}^\infty$-norm of functions, and $\TV{\cdot}$ denotes the total variation norm of probability measures.

\def\dz {{ \drm \vz }}
\section{Problem Formulation}\label{sec:motivation}
We review  standard results in game theory in Section~\ref{sec:problem_formulation-warmup}, whose proof can be found in \cite{bubeck13i, bubeck13ii, bubeck13iii}. Section~\ref{sec:mixed_strategy_gans} relates training of GANs to the two-player game in Section~\ref{sec:problem_formulation-warmup}, thereby suggesting to generalize the prox methods to infinite dimension.

\subsection{Preliminary: Prox Methods for Finite Games}\label{sec:problem_formulation-warmup}
Consider the classical formulation of a two-player game with \emph{finitely} many strategies:
\beq \label{eq:saddle_point_problem_finite_dim}
\min_{\vp \in \Delta_m}  \max_{\vq \in \Delta_n}  \ip{\vq}{\va} - \ip{\vq}{A\vp},
\eeq
where $A$ is a payoff matrix, $\va$ is a vector, and $\Delta_d \coloneqq \left\{  \vz \in \RR^d \ |\   \sum_{i=1}^d z_i = 1 \right\}$ is the probability simplex, representing the \emph{mixed strategies} (i.e., probability distributions) over $d$ pure strategies. A pair $(\vp_\textup{NE},\vq_\textup{NE})$ achieving the min-max value in (\ref{eq:saddle_point_problem_finite_dim}) is called a mixed NE.

Assume that the matrix $A$ is too expensive to evaluate whereas the (stochastic) gradients of (\ref{eq:saddle_point_problem_finite_dim}) are easy to obtain. Under such settings, a celebrated algorithm, the so-called \textbf{entropic Mirror Descent} (entropic MD), learns an $O\left(T^{-\nicefrac{1}{2}}\right)$-NE: Let $\phi(\vz) \coloneqq \sum_{i=1}^d z_i\log z_i$ be the entropy function and $\phi^\star(\vy) \coloneqq \log \sum_{i=1}^d e^{y_i} = \sup_{\z \in \Delta_d} \{  \ip{\vz}{\vy} - \phi(\vz)  \}$ be its Fenchel dual. For a learning rate $\eta$ and an arbitrary vector $\vb \in \RR^d$, define the MD iterates as
\beq\label{eq:MD_finite-dim}
\vz' = \MD{\vz}{\vb}\quad \equiv \quad \vz'= \nabla \phi^\star \left( \nabla \phi( \vz) - \eta \vb \right) \quad \equiv \quad z'_{i} = \frac{z_i e^{-\eta b_i}}{\sum_{i=1}^d z_i e^{-\eta b_i}},\ \ \forall 1\leq i\leq d.
\eeq
The equivalence of the last two formulas in (\ref{eq:MD_finite-dim}) can be readily checked. 

Denote by $\bar{\vp}_T \coloneqq \frac{1}{T}\sum_{t=1}^T \vp_t$ and $\bar{\vq}_T \coloneqq \frac{1}{T}\sum_{t=1}^T \vq_t$ the ergodic average of two sequences $\{\vp_t\}_{t=1}^T$ and $\{\vq_t\}_{t=1}^T$. Then, with a properly chosen step-size $\eta$, we have 
\beq \nn
\left\{
  \begin{array}{ll}
    \vp_{t+1} = \MD{\vp_t}{-A^\top \vq_t}  \\
    \vq_{t+1}= \MD{\vq_t}{-\va + A \vp_t}  
  \end{array} 
\right. \quad \Rightarrow  \quad (\bar{\vp}_T, \bar{\vq}_T) \textup{ is an $O \left( T^{-\nicefrac{1}{2}} \right)$-NE.}
\eeq
Moreover, a slightly more complicated algorithm, called the \textbf{entropic Mirror-Prox} (entropy MP) \cite{nemirovski2004prox}, achieves faster rate than the entropic MD:
\beq \nn
\left\{
  \begin{array}{ll}
    \vp_{t} = \MD{\tilde{\vp}_t}{-A^\top \tilde{\vq}_t}, &\tilde{\vp}_{t+1} = \MD{\tilde{\vp}_t}{-A^\top \vq_t} \\
    \vq_{t}= \MD{\tilde{\vq}_t}{-\va + A \tilde{\vp}_t}, & \tilde{\vq}_{t+1} = \MD{\tilde{\vq}_t}{-\va + A \vp_t}
  \end{array} 
\right.  \Rightarrow \quad   (\bar{\vp}_T, \bar{\vq}_T) \textup{ is an $O \left( T^{-1} \right)$-NE.}
\eeq

If, instead of deterministic gradients, one uses unbiased stochastic gradients for entropic MD and MP, then both algorithms achieve $O \left( T^{-\nicefrac{1}{2}} \right)$-NE in expectation.

%

\subsection{Mixed Strategy Formulation for Generative Adversarial Networks} \label{sec:mixed_strategy_gans}
For illustration, let us focus on the Wasserstein GAN \cite{arjovsky2017wasserstein}. The training objective of Wasserstein GAN is
\beq \label{eq:WGans_definition}
\min_{\vtheta \in \Theta}\max_{\vw \in  \Wcal } \E_{X\sim \Preal  } [f_\vw(X) ] - \E_{X\sim\Ptheta}[f_\vw(X)], 
\eeq
where $\Theta$ is the set of parameters for the generator and $\Wcal$ the set of parameters for the discriminator\footnote{Also known as ``critic'' in Wasserstein GAN literature.} $f$, typically both taken to be neural nets. As mentioned in the introduction, such an optimization problem can be ill-posed, which is also supported by empirical evidence. 

The high-level idea of our approach is, instead of solving (\ref{eq:WGans_definition}) directly, we focus on the \emph{mixed strategy} formulation of (\ref{eq:WGans_definition}). In other words, we consider the set of all probability distributions over $\Theta$ and $\Wcal$, and we search for the optimal distribution that solves the following program:
\begin{align}
\min_{\nu \in \Mcal(\Theta)} \max_{\mu \in  \Mcal(\Wcal) } \E_{\vw \sim \mu } \E_{X\sim \Preal  }  [f_\vw(X) ] - \E_{\vw \sim \mu }\E_{\vtheta \sim \nu}\E_{X\sim\Ptheta} [f_\vw(X)]. \label{eq:mixed_GAN}
\end{align}Define the function $g:\Wcal \rightarrow \RR$ by $g(\vw) \coloneqq \E_{X\sim \Preal  }[f_\vw(X) ]$ and the operator $G:\Mcal(\Theta) \rightarrow \Fcal(\Wcal)$ as $(G\nu) (\vw) \coloneqq \E_{\vtheta \sim \nu, \X\sim \Ptheta}[f_\vw(X)]$. Denoting $\ip{\mu}{h} \coloneqq \E_\mu h$ for any probability measure $\mu$ and function $h$, we may rewrite (\ref{eq:mixed_GAN}) as
\beq
\min_{\nu \in \Mcal(\Theta)} \max_{\mu \in  \Mcal(\Wcal) } \ip{\mu}{g} -\ip{\mu}{G\nu}.   \label{eq:mixed_WGans_definition}
\eeq
Furthermore, the Fr\'{e}chet derivative (the analogue of gradient in infinite dimension) of (\ref{eq:mixed_WGans_definition}) with respect to $\mu$ is simply $g-G \nu$, and the derivative of (\ref{eq:mixed_WGans_definition}) with respect to $\nu$ is $-G^\dag \mu$, where $G^\dag: \Mcal(\Wcal) \rightarrow \Fcal(\Theta)$ is the adjoint operator of $G$ defined via the relation
\beq \label{eq:adjoint_definition}
\forall \mu \in \Mcal(\Wcal)  ,\nu \in \Mcal(\Theta), \quad \ip{\mu}{G\nu} = \ip{\nu}{ G^\dag \mu }.
\eeq
One can easily check that $(G^\dag \mu)(\vtheta) \coloneqq \E_{X\sim \Ptheta, \vw \sim \mu} [f_\vw(X)]$ achieves the equality in (\ref{eq:adjoint_definition}).

To summarize, the mixed strategy formulation of Wasserstein GAN is (\ref{eq:mixed_WGans_definition}), whose derivatives can be  expressed in terms of $g$ and $G$. We now make the crucial observation that (\ref{eq:mixed_WGans_definition}) is exactly the infinite-dimensional analogue of (\ref{eq:saddle_point_problem_finite_dim}): The distributions over finite strategies are replaced with probability measures over a continuous parameter set, the vector $\va$ is replaced with a function $g$, the matrix $A$ is replaced with a linear operator\footnote{The linearity of $G$ trivially follows from the linearity of expectation.} $G$, and the gradients are replaced with Fr\'{e}chet derivatives. Based on Section \ref{sec:problem_formulation-warmup}, it is then natural to ask:
\begin{quote}
\emph{Can the entropic Mirror Descent and Mirror-Prox be extended to infinite dimension to solve (\ref{eq:mixed_WGans_definition})? Can we retain the convergence rates?}
\end{quote}


We provide an affirmative answer to both questions in the next section.


\emph{Remark.} The derivation in Section~\ref{sec:mixed_strategy_gans} can be applied to any GAN objective.

%

\section{Infinite-Dimensional Prox Methods}\label{sec:inf_MD}

This section builds a rigorous infinite-dimensional formalism in parallel to the finite-dimensional prox methods and proves their convergence rates. While simple in retrospect, to our knowledge, these results are new. 


\subsection{Preparation: The Mirror Descent Iterates}\label{sec:MDpreparation}
%

We first recall the notion of (Fr\'{e}chet) derivative in infinite-dimensional spaces. A (nonlinear) functional $\Phi: \Mcal(\Zcal) \rightarrow \R$ is said to possess a derivative at $\mu \in \Mcal(\Zcal)$ if there exists a function $\drm \Phi(\mu) \in \Fcal(\Zcal)$ such that, for all $\mu' \in \Mcal(\Zcal)$, we have
\begin{equation}\nn
\Phi(\mu + \epsilon \mu') = \Phi(\mu) + \epsilon \ip{\mu'}{ \drm\Phi(\mu)} + o(\epsilon).
\end{equation}
Similarly, a (nonlinear) functional $\Phi^\star : \Fcal(\Zcal) \rightarrow \R$ is said to possess a derivative at $h \in \Fcal(\Zcal)$ if there exists a measure $\drm \Phi^\star(h) \in \Mcal(\Zcal)$ such that, for all $h' \in \Fcal(\Zcal)$, we have
\begin{equation}\nn
\Phi^\star(h + \epsilon h') = \Phi^\star(h) + \epsilon \ip{\drm\Phi^\star(h)}{ h'} + o(\epsilon).
\end{equation}
The most important functionals in this paper are the (negative) Shannon entropy 
\beq \nn
\mu \in \Mcal(\Zcal), \quad \Phi(\mu) \coloneqq \int \dmu \log \frac{\dmu}{\dz}
\eeq
and its Fenchel dual
\beq \nn
h\in\Fcal(\Zcal), \quad \Phi^\star(h) \coloneqq \log \int e^h\dz.
\eeq
The first result of our paper is to show that, in direct analogy to  (\ref{eq:MD_finite-dim}), the infinite-dimensional MD iterates can be expressed as:
\begin{theorem}[Infinite-Dimensional Mirror Descent, informal]\label{cor:inf_MD_iterates}
For a learning rate $\eta$ and an arbitrary function $h$, we can equivalently define
\beq \label{eq:MD_iterates}
\mu_+ = \MD{\mu}{h} \quad \equiv\quad \mu_+ = \drm \Phi^\star\left(  \drm \Phi(\mu) - \eta h   \right)     \equiv \quad \drm\mu_+ = \frac{e^{-\eta h}\drm\mu}{\int e^{-\eta h} \drm \mu}.
\eeq
Moreover, most the essential ingredients in the analysis of finite-dimensional prox methods can be generalized to infinite dimension.
\end{theorem}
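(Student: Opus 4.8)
The plan is to route everything through the Fenchel--Legendre duality between the negative entropy $\Phi$ and the log-partition functional $\Phi^\star$, then differentiate, recognise the resulting update as a Bregman-proximal step, and finally track which elementary facts about the finite-dimensional $\phi$ the analyses of entropic MD/MP actually use. First I would verify that $\Phi^\star(h)=\log\int e^h\,\drm\vz$ is genuinely the convex conjugate of $\Phi$ over $\Mcal(\Zcal)$, i.e.\ that $\Phi^\star(h)=\sup_{\mu\in\Mcal(\Zcal)}\left\{\ip{\mu}{h}-\Phi(\mu)\right\}$ with the supremum attained at $\drm\mu = e^h\,\drm\vz/\int e^h\,\drm\vz$. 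This is the Gibbs variational principle (the Donsker--Varadhan formula): for any $\mu$ with density $\rho$, writing $\rho_h\coloneqq e^h/\int e^h$, one has the identity $\ip{\mu}{h}-\Phi(\mu)=\Phi^\star(h)-\KL(\mu\,\|\,\rho_h\,\drm\vz)$, and the $\KL$ term is nonnegative and vanishes exactly at $\mu=\rho_h\,\drm\vz$. This single computation is what drives the rest.

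Next I would compute the two Fr\'echet derivatives directly from the definitions in the excerpt. A first-order expansion of $(\rho+\epsilon\rho')\log(\rho+\epsilon\rho')$ gives $\drm\Phi(\mu)=\log\frac{\drm\mu}{\drm\vz}$ up to an additive constant (immaterial, since the MD step renormalises), and expanding $\log\int e^{h+\epsilon h'}\,\drm\vz$ gives $\drm\Phi^\star(h)=e^h\,\drm\vz/\int e^h\,\drm\vz$, the Gibbs measure with potential $h$ --- the continuum analogue of $\nabla\phi^\star(\vy)_i=e^{y_i}/\sum_j e^{y_j}$. Substituting $h\leftarrow\drm\Phi(\mu)-\eta h=\log\frac{\drm\mu}{\drm\vz}-\eta h$ into $\drm\Phi^\star$ yields $\drm\mu_+\propto e^{\log(\drm\mu/\drm\vz)-\eta h}\,\drm\vz=e^{-\eta h}\,\drm\mu$, which after normalisation is exactly the third expression in (\ref{eq:MD_iterates}); this establishes the equivalence of the last two displays. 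To connect with the abstract notation $\MD{\mu}{h}$, I would recall that, as in finite dimension, it denotes the Bregman-proximal step $\arg\min_{\mu'\in\Mcal(\Zcal)}\left\{\eta\ip{\mu'}{h}+D_\Phi(\mu',\mu)\right\}$; since the Bregman divergence of the negative entropy is the $\KL$ divergence, a Lagrange-multiplier argument for the constraint $\mu'\in\Mcal(\Zcal)$ recovers the Gibbs solution $\drm\mu'\propto e^{-\eta h}\,\drm\mu$, closing the loop.

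For the second assertion I would isolate the handful of properties of $\phi$ that the finite-dimensional proofs of the $O(T^{-\nicefrac{1}{2}})$ and $O(T^{-1})$ rates rely on and check that each survives the passage to $\Mcal(\Zcal)$: (i) the three-point (``law of cosines'') identity for Bregman divergences, a purely algebraic consequence of the definition of $D_\Phi$; (ii) $1$-strong convexity of $\Phi$ with respect to the total-variation norm, which is precisely Pinsker's inequality $\KL(\mu'\,\|\,\mu)\ge\tfrac12\TV{\mu'-\mu}^2$; (iii) the H\"older/Fenchel--Young pairing $\ip{\mu}{h}\le\TV{\mu}\,\Linf{h}$ together with uniform boundedness of the relevant ``gradients'' $g$ and $G\nu$ in $\Linf{\cdot}$; and (iv) the telescoping of the per-step regret inequality, which is formal and dimension-free. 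Assembling (i)--(iv) reproduces the finite-dimensional regret and duality-gap bounds essentially verbatim, giving the claimed $O(T^{-\nicefrac{1}{2}})$-NE, and $O(T^{-1})$-NE for the Mirror-Prox variant, for (\ref{eq:mixed_WGans_definition}).

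The main obstacle is not any single estimate but the infinite-dimensional bookkeeping: ensuring $\Phi$ is finite and the Fr\'echet derivatives exist on the relevant subset of $\Mcal(\Zcal)$, that the proximal minimiser stays in $\Mcal(\Zcal)$ rather than escaping to the boundary (absolute continuity, positivity of densities), and that perturbation directions are taken among signed measures of total mass zero so the additive ``$+1$'' in $\drm\Phi$ may legitimately be discarded. These are exactly the mild regularity hypotheses deferred to Appendix~\ref{app:regularity}, under which all of the manipulations above are rigorous.
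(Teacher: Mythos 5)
Your proposal is correct and follows essentially the same route as the paper's Appendix~\ref{app:proof_infMD}: Gibbs variational principle for the conjugacy $\Phi \leftrightarrow \Phi^\star$, direct first-order expansions for $\drm\Phi(\mu)=1+\log\rho$ (modulo the additive constant you rightly note is immaterial) and $\drm\Phi^\star(h)=e^h\dz/\int e^h\dz$, substitution to get the Gibbs update, and then the Pinsker/Bregman/three-point toolkit for the rates. The only cosmetic difference is that you characterize $\MD{\mu}{h}$ as a Bregman-proximal $\arg\min$ recovered via the Gibbs principle, whereas the paper takes the dual-derivative composition as the definition and verifies the density formula directly (its Theorem~\ref{thm:infMD_foundations}.10) --- these are equivalent under the regularity assumptions you correctly identify.
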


See \textbf{Theorem \ref{thm:infMD_foundations}} of {Appendix \ref{app:proof_infMD}} for precise statements and a long list of ``essential ingredients of prox methods'' generalizable to infinite dimension.

%
%


\subsection{Infinite-Dimensional Prox Methods and Convergence Rates}

Armed with results in Section~\ref{sec:MDpreparation}, we now introduce two ``conceptual'' algorithms for solving the mixed NE of Wasserstein GANs: The infinite-dimensional entropic MD in \textbf{Algorithm \ref{alg:InfMD}} and MP in \textbf{Algorithm \ref{alg:InfMP}}. These algorithms iterate over probability measures and cannot be directly used in practice, but they possess rigorous convergence rates, and hence motivate the reduction procedure in Section~\ref{sec:implementable} to come. 
\SetKwInput{Kw}{Hyperparameters}
\begin{algorithm}[!h] \label{alg:InfMD}
   \caption{\textsc{Infinite-Dimensional Entropic MD}}
   \KwIn{Initial distributions $\mu_1, \nu_1$, learning rate $\eta$}
   \For{$t =1, 2, \dots ,T-1$}
   {
      $\nu_{t+1}=\MD{\nu_t}{-G^\dag \mu_t}, \quad \mu_{t+1}=\MD{\mu_t}{ -g+G\nu_t}$\;
   }
  return $\bar{\nu}_T = \frac{1}{T}\sum_{t=1}^T\nu_{t}$ and $\bar{\mu}_T = \frac{1}{T}\sum_{t=1}^T\mu_t$.
\end{algorithm}
\begin{algorithm}[t]\label{alg:InfMP}
   \caption{\textsc{Infinite-Dimensional Entropic MP}}
   \KwIn{Initial distributions $\tmu_1, \tnu_1$, learning rate $\eta$}
   \For{$t =1, 2, \dots ,T$}
   {
      $\nu_{t}=\MD{\tnu_t}{-G^\dag \tmu_t}, \quad \mu_{t}=\MD{\tmu_t}{ -g+G\tnu_t}$\;
      $\tnu_{t+1}=\MD{\tnu_t}{-G^\dag \mu_{t}}, \quad \tmu_{t+1}=\MD{\tmu_t}{ -g+G\nu_{t}}$\;
   }
  return $\bar{\nu}_T = \frac{1}{T}\sum_{t=1}^T\nu_{t}$ and $\bar{\mu}_T = \frac{1}{T}\sum_{t=1}^T\mu_t$.
\end{algorithm}
\begin{theorem}[Convergence Rates]\label{thm:convergence_rates}
Let $\Phi(\mu)= \int \drm\mu\log\frac{\drm\mu}{\dz}$. Let $M$ be a constant such that $\max \left[\Linf{-g + G\nu},  \Linf{G^\dag\mu} \right]\leq M$, and $L$ be such that $\Linf{G(\nu - \nu')} \leq L \TV{\nu - \nu'}$ and $ \Linf{G^\dag(\mu - \mu')} \leq L \TV{\mu - \mu'}$. Let $D(\cdot, \cdot)$ be the relative entropy, and denote by $D_0 \coloneqq D(\mu_\textup{NE},\mu_1) + D(\nu_\textup{NE},\nu_1)$ the initial distance to the mixed NE. Then 
\begin{enumerate}
\item Assume that we have access to the deterministic derivatives $\left\{-G^\dag \mu_t \right\}_{t=1}^T$ and $\left\{g-G\nu\right\}_{t=1}^T$. Then \textbf{Algorithm~\ref{alg:InfMD}} achieves $O\left(  T^{-\nicefrac{1}{2}} \right)$-NE with $\eta = \frac{2}{M} \sqrt{\frac{D_0 }{T}}$, and \textbf{Algorithm~\ref{alg:InfMP}} achieves $O\left(  T^{-1} \right)$-NE with $\eta = \frac{4}{L}$.
\item Assume that we have access to unbiased stochastic derivatives $\left\{-\hat{G}^\dag \mu_t \right\}_{t=1}^T$ and $\left\{\hat{g}-\hat{G}\nu\right\}_{t=1}^T$ such that $\max \left[ \EE \Linf{-\hat{g} + \hat{G}\nu},   \EE\Linf{ \hat{G}^\dag\mu} \right]\leq M'$, and the variance is upper bounded by $\sigma^2$. Then \textbf{Algorithm~\ref{alg:InfMD}} with stochastic derivatives achieves $O\left(  T^{-\nicefrac{1}{2}} \right)$-NE in expectation with $\eta = \frac{2}{M'}\sqrt{\frac{D_0}{T}}$, and \textbf{Algorithm~\ref{alg:InfMP}} with stochastic derivatives  achieves $O\left(  T^{-\nicefrac{1}{2}} \right)$-NE in expectation with $\eta =\min \left[  \frac{4}{\sqrt{3}L}, \sqrt{\frac{2 D_0}{3 T\sigma^2}} \right]$.
\end{enumerate}
\end{theorem}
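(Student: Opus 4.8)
The plan is to transcribe, essentially line by line, the classical analysis of entropic Mirror Descent and Mirror-Prox for bilinear games into the infinite-dimensional language of Section~\ref{sec:inf_MD}, invoking Theorem~\ref{thm:infMD_foundations} to license each step. The one reusable lemma is a one-step inequality for $\mu_+ = \MD{\mu}{h}$: for every competitor $u \in \Mcal(\Zcal)$,
\[
\eta\,\ip{\mu - u}{h} \;\le\; D(u,\mu) - D(u,\mu_+) + \tfrac{\eta^2}{2}\Linf{h}^2 .
\]
I would derive this from three ingredients, all supplied by Theorem~\ref{thm:infMD_foundations}: (i) the three-point identity for the Bregman divergence $D$ generated by the negative Shannon entropy $\Phi$, which here coincides with the relative entropy and gives $\ip{u-\mu_+}{\drm\Phi(\mu)-\drm\Phi(\mu_+)} = D(u,\mu)-D(u,\mu_+)-D(\mu_+,\mu)$; (ii) Pinsker's inequality $D(\mu_+,\mu)\ge \tfrac12\TV{\mu_+-\mu}^2$, which substitutes for strong convexity of $\Phi$ with respect to $\TV{\cdot}$; and (iii) Hölder's inequality $\ip{\mu-\mu_+}{h}\le\Linf{h}\,\TV{\mu-\mu_+}$ combined with Young's inequality. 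The only genuinely new bookkeeping is to check that each iterate is a bona fide probability measure, i.e.\ that $\int e^{-\eta h}\drm\mu<\infty$, which is immediate from $\Linf{h}\le M$.

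For Part~1 I would apply this inequality to the two sequences of Algorithm~\ref{alg:InfMD} --- $h_t=-G^\dag\mu_t$ against the competitor $\nu_{\textup{NE}}$, and $h_t=-g+G\nu_t$ against $\mu_{\textup{NE}}$ --- add them and telescope over $t=1,\dots,T$. The key collapse is that, by the adjoint relation~(\ref{eq:adjoint_definition}) and bilinearity of $F(\nu,\mu)\coloneqq\ip{\mu}{g}-\ip{\mu}{G\nu}$, the summed inner products equal $\sum_t\bigl[F(\nu_t,\mu_{\textup{NE}})-F(\nu_{\textup{NE}},\mu_t)\bigr]$; dividing by $\eta T$, dropping the nonnegative $D(\cdot,\cdot)$ terms, and pulling the time-averages inside $F$ (legitimate since $F$ is linear in each block) gives the duality-gap bound $F(\bar\nu_T,\mu_{\textup{NE}})-F(\nu_{\textup{NE}},\bar\mu_T)\le \frac{D_0}{\eta T}+\eta M^2$. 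The saddle-point inequalities $F(\nu_{\textup{NE}},\bar\mu_T)\le F^\star\le F(\bar\nu_T,\mu_{\textup{NE}})$ then certify the ergodic iterates as an $O(T^{-\nicefrac{1}{2}})$-NE, upon balancing with $\eta=\frac2M\sqrt{D_0/T}$. Part~3 is the same computation with $h_t$ replaced by an unbiased estimate: since $\nu_t,\mu_t$ are measurable with respect to the past, the relevant inner products are unbiased, and the second-moment bound through $M'$ replaces $\Linf{h}^2\le M^2$, so the rate survives in expectation.

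For Parts~2 and~4 I would run the extragradient refinement. Writing $z=(\nu,\mu)$ and $\Gamma(z)=(-G^\dag\mu,\,-g+G\nu)$, combining the one-step inequality for the two sub-steps $z_t=\MD{\tilde z_t}{\Gamma(\tilde z_t)}$ and $\tilde z_{t+1}=\MD{\tilde z_t}{\Gamma(z_t)}$ gives
\[
\eta\,\ip{z_t-u}{\Gamma(z_t)} \;\le\; D(u,\tilde z_t)-D(u,\tilde z_{t+1}) + \eta\,\ip{z_t-\tilde z_{t+1}}{\Gamma(z_t)-\Gamma(\tilde z_t)} - D(z_t,\tilde z_t) - D(\tilde z_{t+1},z_t).
\]
The cross term is at most $\eta L\,\TV{z_t-\tilde z_t}\,\TV{z_t-\tilde z_{t+1}}$ by the Lipschitz hypotheses on $G$ and $G^\dag$, and for a constant step size that is a suitable multiple of $1/L$ (the $\eta=4/L$ of the statement after accounting for the two blocks) it is absorbed by the two negative Bregman terms via Young and Pinsker, leaving $\eta\,\ip{z_t-u}{\Gamma(z_t)}\le D(u,\tilde z_t)-D(u,\tilde z_{t+1})$; telescoping with $u=z_{\textup{NE}}$ and the same bilinear collapse give $O(T^{-1})$. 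In the stochastic case the cross term additionally picks up the two noise functions $\hat\Gamma(z_t)-\Gamma(z_t)$ and $\hat\Gamma(\tilde z_t)-\Gamma(\tilde z_t)$, which cannot be cancelled; pushing them through Young's inequality and taking expectations leaves an extra $O(\eta T\sigma^2)$ in the telescoped bound, and optimizing $\eta=\min\bigl[\,c/L,\ \sqrt{D_0/(T\sigma^2)}\,\bigr]$ returns $O(T^{-\nicefrac{1}{2}})$.

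The saddle-point algebra is routine once the gap-collapse identity is in place; the two things I expect to be delicate are (a) making the infinite-dimensional calculus fully rigorous --- that $\drm\Phi$, $\drm\Phi^\star$, the three-point identity and Pinsker's inequality all hold verbatim on $\Mcal(\Zcal)$ under the regularity assumptions of Appendix~\ref{app:regularity}, and that every Mirror-Descent iterate remains a probability measure --- and (b) in the stochastic Mirror-Prox step, tracking the two (independent) noise terms through the cross term so that the variance enters the final bound only additively, which is where the correct joint dependence on $\sigma$ and $L$ is easiest to lose. Finally, existence of a mixed NE $(\nu_{\textup{NE}},\mu_{\textup{NE}})$ and the minimax equality $F^\star=\min\max F=\max\min F$, both needed for ``$O(T^{-\nicefrac{1}{2}})$-NE'' to be meaningful, follow from Glicksberg's theorem \cite{glicksberg1952further}.
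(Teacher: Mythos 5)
Your proposal follows essentially the same route as the paper's Appendices~\ref{app:proof_inf_mirror-descent} and \ref{app:proof_inf_mirror-prox}: the same one-step Mirror Descent inequality obtained from the three-point identity plus Pinsker and H\"older--Young, the same telescoping and bilinear collapse of the summed inner products into the duality gap $F(\bar\nu_T,\mu_{\textup{NE}})-F(\nu_{\textup{NE}},\bar\mu_T)$, the same extragradient master lemma with the cross term absorbed by the two negative Bregman terms, and the same variance decomposition for stochastic Mirror-Prox. The only discrepancy is your Pinsker normalization $D\ge\tfrac12\TV{\cdot}^2$ versus the paper's $4$-strong convexity of $\Phi$ (giving $\tfrac{\eta^2}{8}\Linf{h}^2$ rather than $\tfrac{\eta^2}{2}\Linf{h}^2$ in the one-step bound), which shifts the optimal step sizes by constant factors but leaves all the stated rates intact.
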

The proof can be found in Appendix~\ref{app:proof_inf_mirror-descent} and \ref{app:proof_inf_mirror-prox}.

\emph{Remark.} If, as in previous work \cite{arora2017generalization}, we assume the output of the discriminator to be bounded by $U$, then we have $M, M' \leq 2U$ and $L\leq U$ in \textbf{Theorem \ref{thm:convergence_rates}}.

\section{From Theory to Practice}\label{sec:implementable} 
Section~\ref{sec:implementable_approx}  reduces  \textbf{Algorithm \ref{alg:InfMD}} and \textbf{Algorithm \ref{alg:InfMP}} to a sampling routine \cite{welling2011bayesian} that has widely been used in machine learning. Section \ref{sec:implementable_herustic} proposes to further simplify the algorithms by summarizing a batch of samples by their mean. 


For simplicity, we will only derive the algorithm for entropic MD; the case for entropic MP is similar but requires more computation. To ease the notation, we assume $\eta =1 $ throughout this section as $\eta$ does not play an important role in the derivation below.

\subsection{Implementable Entropic MD: From Probability Measure to Samples}\label{sec:implementable_approx}
Consider \textbf{Algorithm \ref{alg:InfMD}}. The reduction consists of three steps.

\textbf{Step 1: Reformulating Entropic Mirror Descent Iterates}

The definition of the MD iterate (\ref{eq:MD_iterates}) relates the updated probability measure $\mu_{t+1}$ to the current probability measure $\mu_t$, but it tells us nothing about the density function of $\mu_{t+1}$, from which we want to sample. Our first step is to express (\ref{eq:MD_iterates}) in a more tractable form. 
By recursively applying (\ref{eq:MD_iterates}) and using \textbf{Theorem \ref{thm:infMD_foundations}.10} in Appendix~\ref{app:proof_infMD}, we have, for some constants $C_1, ..., C_{T-1}$,
\begin{align}
\drm\Phi(\mu_{T}) &= \drm\Phi(\mu_{T-1})- \left( -g+G \nu_{T-1}\right) +C_{T-1}  \nn\\
&=\drm\Phi( \mu_{T-2}) -  \left( -g+G \nu_{T-2}\right) - \left( -g+G \nu_{T-1}\right) + C_{T-1} + C_{t-2} \nn\\
&= \cdots = \drm\Phi(\mu_1) -  \left( -(T-1)g  +G \sum_{s=1}^{T-1} \nu_s  \right)  + \sum_{s=1}^{T-1} C_s.  \nn
\end{align}For simplicity, assume that $\mu_1$ is uniform so that $\drm\Phi(\mu_1)$ is a constant function. Then, by (\ref{eq:lse_derivative}) and that $\drm \Phi^\star\left(  \drm \Phi (\mu_T)\right) = \dmu_T$, we see that the density function of $\mu_T$ is simply 
$\drm \mu_T = \frac{\exp\left\{ (T-1)g- G \sum_{s=1}^{T-1} \nu_s  \right\}\drm \vw}{\int \exp\left\{ (T-1)g-G \sum_{s=1}^{T-1} \nu_s  \right\} \drm \vw}.$ 
Similarly, we have 
$\drm \nu_T= \frac{\exp \left\{ G^\dag \sum_{s=1}^{T-1} \mu_s  \right\}\drm \vtheta}{\int \exp\left\{ G^\dag \sum_{s=1}^{T-1} \mu_s  \right\} \drm \vtheta}.$ 

\textbf{Step 2: Empirical Approximation for Stochastic Derivatives}

The derivatives of (\ref{eq:mixed_WGans_definition}) involve the function $g$ and operator $G$. Recall that $g$ requires taking expectation over the real data distribution, which we do not have access to. A common approach is to replace the true expectation with its empirical average: 
\beq \nn 
g(\vw)=\EE_{X\sim \PP_\textup{real} } [ f_\vw(X)]\simeq \frac{1}{n}\sum_{i=1}^n f_\vw(X^{\textup{real}}_i) \triangleq \hat{g}(\vw)
\eeq 
where $X_i$'s are real data and $n$ is the batch size. Clearly, $\hat{g}$ is an unbiased estimator of $g$. 

On the other hand, $G\nu_t$ and $G^\dag \mu_t$ involve expectation over $\nu_t$ and $\mu_t$, respectively, and also over the fake data distribution $\PP_{\vtheta}$. Therefore, if we are able to draw samples from $\mu_t$ and $\nu_t$, then we can again approximate the expectation via the empirical average: 
\begin{alignat*}{2}
 &\textup{$\vtheta^{(1)}, \vtheta^{(2)}, ... ,\vtheta^{(n')} \sim \nu_t$, }  \left\{ X_i^{(j)} \right\}_{i=1}^{n} \sim  \PP_{\vtheta^{(j)}},   \quad && \hat{G}\nu_t (\vw) \simeq \frac{1}{n n'} \sum_{i=1}^n \sum_{j=1}^{n'} f_{\vw}\left( X_i^{(j)}  \right)       \\
 &\textup{$\vw^{(1)}, \vw^{(2)}, ... ,\vw^{(n')} \sim \mu_t$, }  \{X_i\}_{i=1}^n\sim \PP_\vtheta,   \quad && \hat{G}^\dag  \mu_t(\vtheta) \simeq \frac{1}{n n'} \sum_{i=1}^n \sum_{j=1}^{n'} f_{\vw^{(j)}}\left( X_i  \right).  
\end{alignat*}

Now, assuming that we have obtained unbiased stochastic derivatives $- \sum_{s=1}^t\hat{G}^\dag\mu_s$ and $ \sum_{s=1}^t \left(-\hat{g}+ \hat{G}\nu_s \right)$, how do we actually draw samples from $\mu_{t+1}$ and $\nu_{t+1}$? Provided we can answer this question, then we can start with two easy-to-sample distributions $(\mu_1,\nu_1)$, and then we will be able to draw samples from $(\mu_2,\nu_2)$. These samples in turn will allow us to draw samples from $(\mu_3,\nu_3)$, and so on. Therefore, it only remains to answer the above question. This leads us to:

\textbf{Step 3: Sampling by Stochastic Gradient Langevin Dynamics}

For any probability distribution with density function $e^{-h}\dz$, the Stochastic Gradient Langevin Dynamics (SGLD) \cite{welling2011bayesian} iterates as
\beq \label{eq:sgld_ite}
\vz_{k+1} = \vz_k -  \gamma \hat{\nabla} h (\vz_k) + \sqrt{2\gamma} \epsilon \xi_k,
\eeq
where $\gamma$ is the step-size, $\hat{\nabla} h$ is an unbiased estimator of $\nabla h$, $\epsilon$ is the thermal noise, and $\xi_k \sim \Ncal(0,I)$ is a standard normal vector, independently drawn across different iterations. 

Suppose we start at $(\mu_1,\nu_1)$. Plugging $h\leftarrow -\hat{G}^\dag\mu_1$ and $h\leftarrow -\hat{g}+ \hat{G}\nu_1$ into (\ref{eq:sgld_ite}), we obtain, for $\{X_i\}_{i=1}^n \sim \PP_{\vtheta_k},\ \{\vw^{(j)}\}_{j=1}^{n'} \sim \mu_1$ and $X_i^\textup{real} \sim \PP_\textup{real},\  \{\vtheta^{(j)}\}_{j=1}^{n'} \sim \nu_1,\ \{ X_i^{(j)}\} \sim \PP_{\vtheta^{(j)}}$, the following update rules: 
\begin{align} \nn
\vtheta_{k+1} & = \vtheta_k  + \gamma \nabla_\vtheta \left( \frac{1}{nn'} \sum_{i=1}^n \sum_{j=1}^{n'}f_{\vw^{(j)}}\left( X_i  \right) \right)\\ 
 \vw_{k+1}  & = \vw_k +  \gamma \nabla_{\vw} \left( \frac{1}{n} \sum_{i=1}^n f_{\vw_k}(X^\textup{real}_i) - \frac{1}{nn'} \sum_{i=1}^n\sum_{j=1}^{n'} f_{\vw_k}\left(   X_i^{(j)}  \right)  \right). \nn
\end{align}

The theory of \cite{welling2011bayesian} states that, for large enough $k$, the iterates of SGLD above (approximately) generate samples according to the probability measures $(\mu_2, \nu_2)$. We can then apply this process recursively to obtain samples from $(\mu_3, \nu_3), (\mu_4, \nu_4), ... (\mu_T, \nu_T) $. Finally, since the entropic MD and MP output the averaged measure $(\bar{\mu}_T, \bar{\nu}_T)$, it suffices to pick a random index $\hat{t} \in \{1, 2, ..., T\}$ and then output samples from $(\mu_{\hat{t}}, \nu_{\hat{t}})$.


Putting \textbf{Step 1-3} together, we obtain \textbf{Algorithm \ref{alg:pseudo_MD}} and \textbf{\ref{alg:pseudo_MP}} in Appendix~\ref{app:pseudo}. 

\emph{Remark.} In principle, any first-order sampling method is valid above. In the experimental section, we also use a RMSProp-preconditioned version of the SGLD \cite{li2016preconditioned}. 

\subsection{Summarizing Samples by Averaging: A Simple yet Effective Heuristic}\label{sec:implementable_herustic}
Although \textbf{Algorithm \ref{alg:pseudo_MD}} and \textbf{\ref{alg:pseudo_MP}} are implementable, they are quite complicated and resource-intensive, as the total computational complexity is $O(T^2)$. This high complexity comes from the fact that, when computing the stochastic derivatives, we need to store all the historical samples and evaluate new gradients at these samples.

An intuitive approach to alleviate the above issue is to try to summarize each distribution by only \emph{one} parameter. To this end, the mean of the distribution is the most natural candidate, as it not only stablizes the algorithm, but also is often easier to acquire than the actual samples. For instance, computing the mean of distributions of the form $e^{-h}\dz$, where $h$ is a loss function defined by deep neural networks, has been empirically proven successful in \cite{chaudhari2017entropy, Chaudhari2018, dziugaite2018entropy} via SGLD. In this paper, we adopt the same approach as in \cite{chaudhari2017entropy} where we use exponential damping (the $\beta$ term in \textbf{Algorithm~\ref{alg:pseudo_heuristic_MD})} to increase stability. \textbf{Algorithm~\ref{alg:pseudo_heuristic_MD}}, dubbed the \emph{Mirror-GAN}, shows how to encompass this idea into entropic MD; the pseudocode for the similar \emph{Mirror-Prox-GAN} can be found in \textbf{Algorithm \ref{alg:pseudo_heuristic_MP}} of Appendix~\ref{app:pseudo}.

\begin{algorithm}[!h]\label{alg:pseudo_heuristic_MD}
   \caption{\textsc{Mirror-GAN: Approximate Mirror Decent for GANs}}
   \KwIn{$\bar{\vw}_{1}, \bar{\vtheta}_{1} \leftarrow $ random initialization, $\{\gamma_t\}_{t=1}^T, \{\epsilon_t\}_{t=1}^T, \{K_t\}_{t=1}^{T-1}, \beta$ (see Appendix \ref{app:pseudo} for meaning of the hyperparameters). } 
   \For{$t =1, 2, \dots ,T-1$}
   {
     $\bar{\vw}_{t}, \vw_t^{(1)} \leftarrow \vw_t$\; 
     $ \bar{\vtheta}_{t}, \vtheta_t^{(1)} \leftarrow \vtheta_t$\;
      \For{$k=1,2, \dots, K_t$}  
      {
         Generate $A=\{X_1, \dots, X_n\} \sim \PP_{\vtheta_t^{(k)}}$\;
         $\vtheta_t^{(k+1)} = \vtheta_t^{(k)} + \frac{\gamma_t }{n}\nabla_\vtheta \sum_{X_i \in A}f_{\vw_t}(X_i) + \sqrt{2\gamma_t} \epsilon_t\Ncal(0,I)  $\;
         Generate $B=\{X_1^\textup{real}, \dots, X^\textup{real}_n\} \sim \PP_{\textup{real}}$\;
         Generate $B'=\{X'_1, \dots, X'_n\} \sim \PP_{\vtheta_t}$\;
          \begin{align*}\vw_t^{(k+1)} = \vw_t^{(k)} +    \frac{\gamma_t }{n}\nabla_\vw\sum_{X^\textup{real}_i \in B}  f_{\vw_t^{(k)}}(X^\textup{real}_i) - \frac{\gamma_t}{n} \nabla_\vw\sum_{X'_i \in B' } f_{\vw_t^{(k)}}(X'_i) + \sqrt{2\gamma_t}\epsilon_t\Ncal(0,I);
           \end{align*}
          $\bar{\vw}_{t} \leftarrow (1-\beta)\bar{\vw}_{t} + \beta \vw_t^{(k+1)}$\; 
      $\bar{\vtheta}_{t} \leftarrow (1-\beta)\bar{\vtheta}_{t} + \beta \vtheta_t^{(k+1)}$ \;
      }
      $\vw_{t+1} \leftarrow (1-\beta)\vw_t + \beta \bar{\vw}_{t}$\;
      $\vtheta_{t+1} \leftarrow (1-\beta)\vtheta_t + \beta \bar{\vtheta}_{t}$\;
   }
  return ${\vw}_T, {\vtheta}_T$.
\end{algorithm}

\section{Experimental Evidence}

The purpose of our experiments is twofold. First, we use established baselines to demonstrate that Mirror- and Mirror-Prox-GAN consistently achieve better or comparable performance than common algorithms. Second, we report that our algorithms are stable and always improve as the training process goes on. This is in contrast to unstable training algorithms, such as Adam, which often collapse to noise as the iteration count grows. \cite{cha2017implementations}.

We  use visual quality of the generated images to evaluate different algorithms. We avoid reporting numerical metrics, as recent studies \cite{barratt2018note, borji2018pros, lucic2018gans} suggest that these metrics might be flawed.
Setting of the hyperparameters and more auxiliary results can be found in Appendix \ref{sec:app_experiments}.


\subsection{Synthetic Data}\label{sec:experiments_syntehtic}
We repeat the synthetic setup as in \cite{gulrajani2017improved}. The tasks include learning the distribution of 8 Gaussian mixtures, 25 Gaussian mixtures, and the Swiss Roll. For both the generator and discriminator, we use two MLPs with three hidden layers of 512 neurons. We choose SGD and Adam as baselines, and we compare them to Mirror- and Mirror-Prox-GAN. All algorithms are run up to $10^5$ iterations\footnote{One iteration here means using one mini-batch of data. It does not correspond to the $T$ in our algorithms, as there might be multiple SGLD iterations within each time step $t$.}. The results of 25 Gaussian mixtures are shown in Figure \ref{fig:gauss25}; An enlarged figure of 25 Gaussian Mixtures and other cases can be found in Appendix \ref{sec:app_exp_synthetic_data}.

\begin{figure}[!h]
\centering
\subfigure[SGD]{\label{fig:gauss25_SGD} \includegraphics[scale = 0.225]{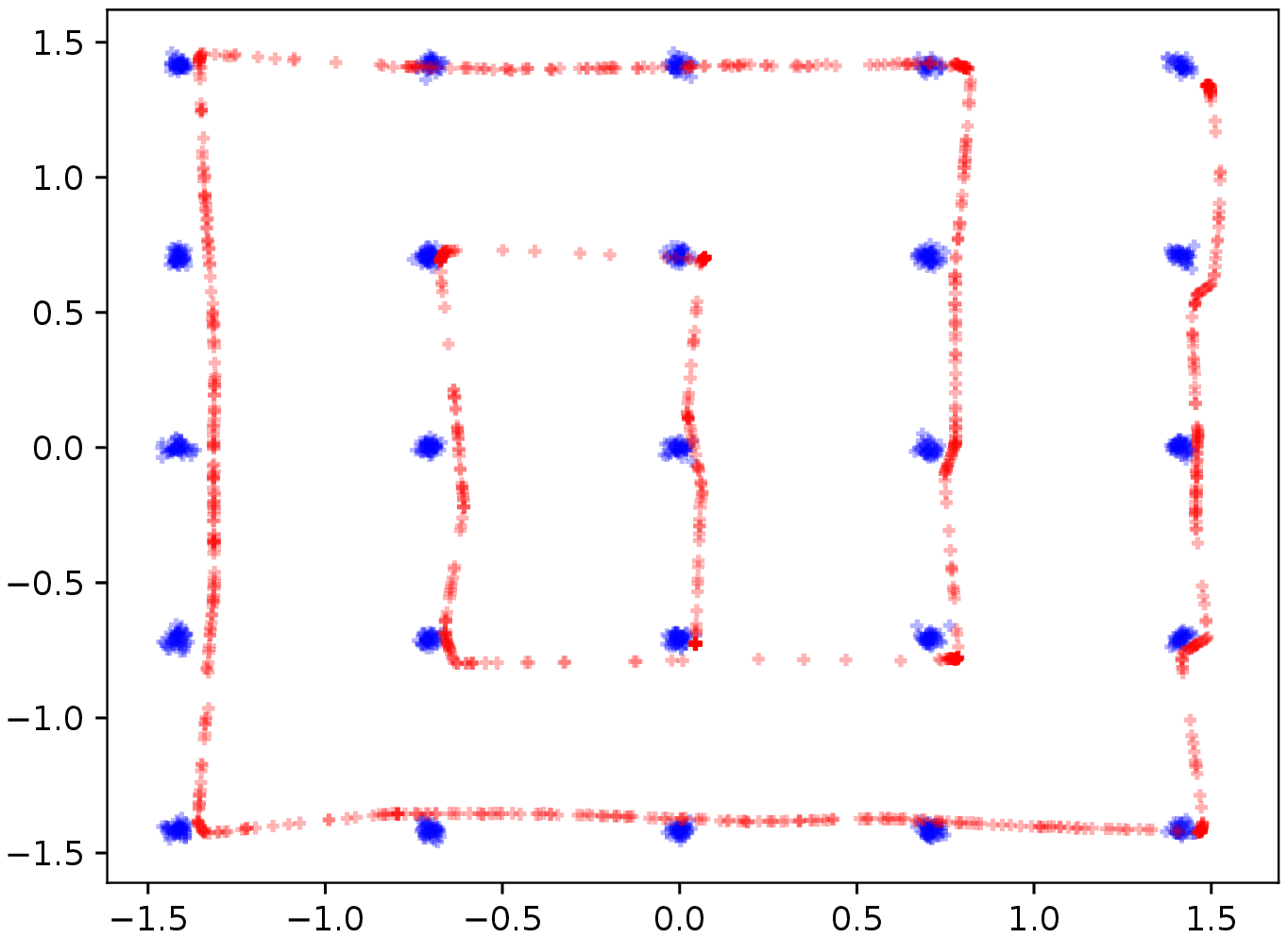}}
\subfigure[Adam]{\label{fig:gauss25_Adam} \includegraphics[scale = 0.225]{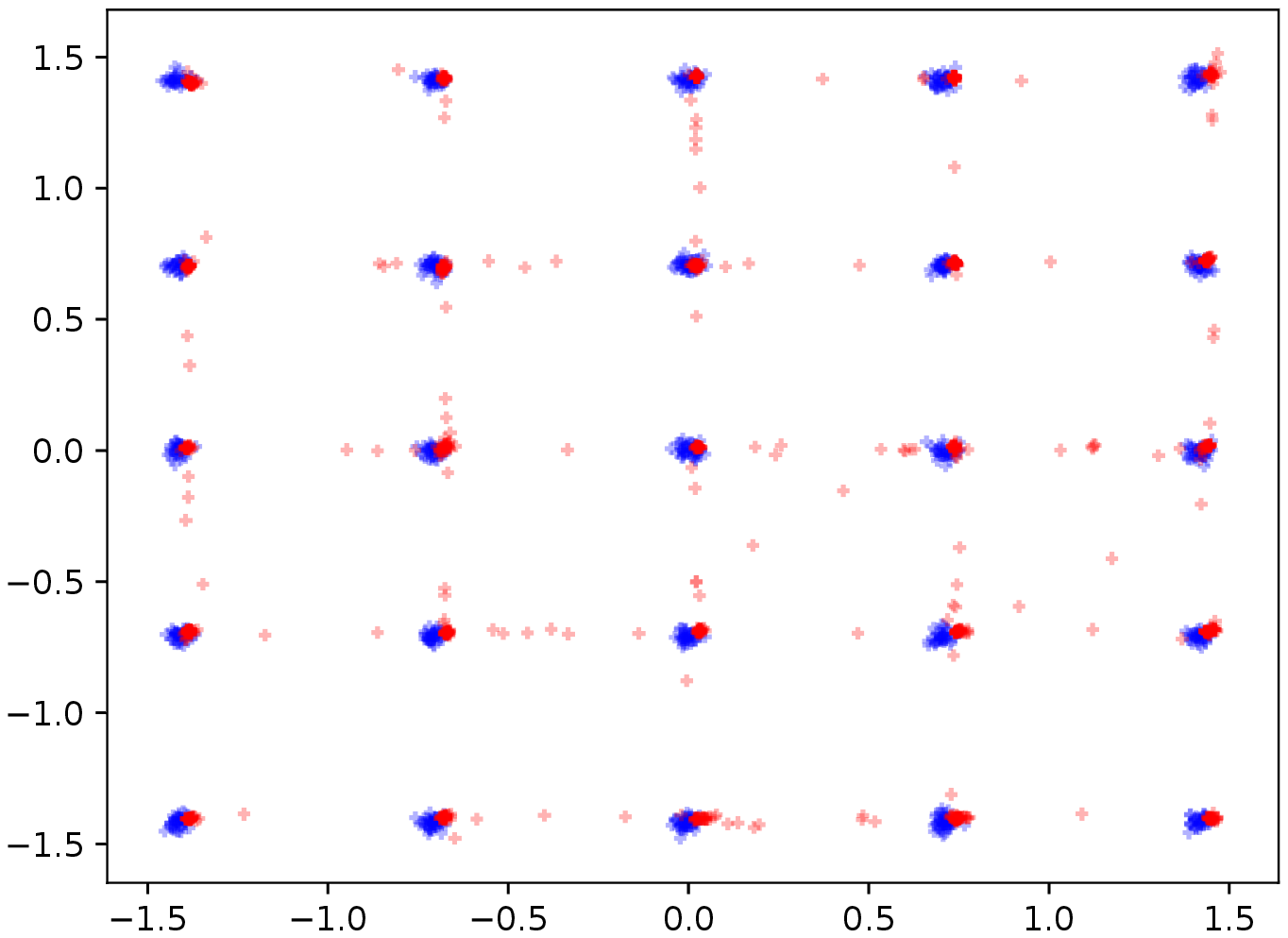}}
\subfigure[Mirror-GAN]{\label{fig:gauss25_md} \includegraphics[scale = 0.225]{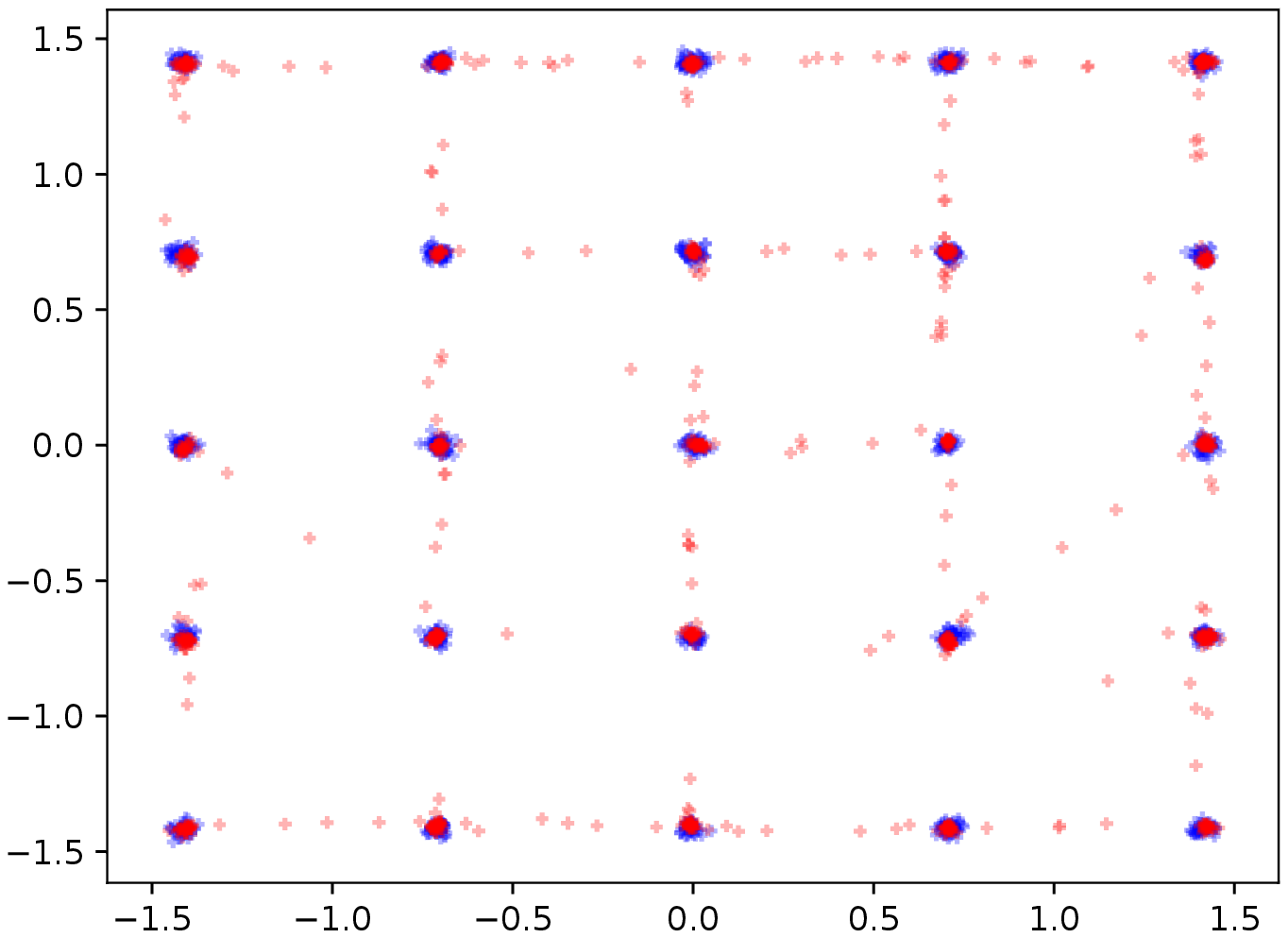}}
\subfigure[Mirror-Prox-GAN]{\label{fig:gauss25_mp} \includegraphics[scale = 0.225]{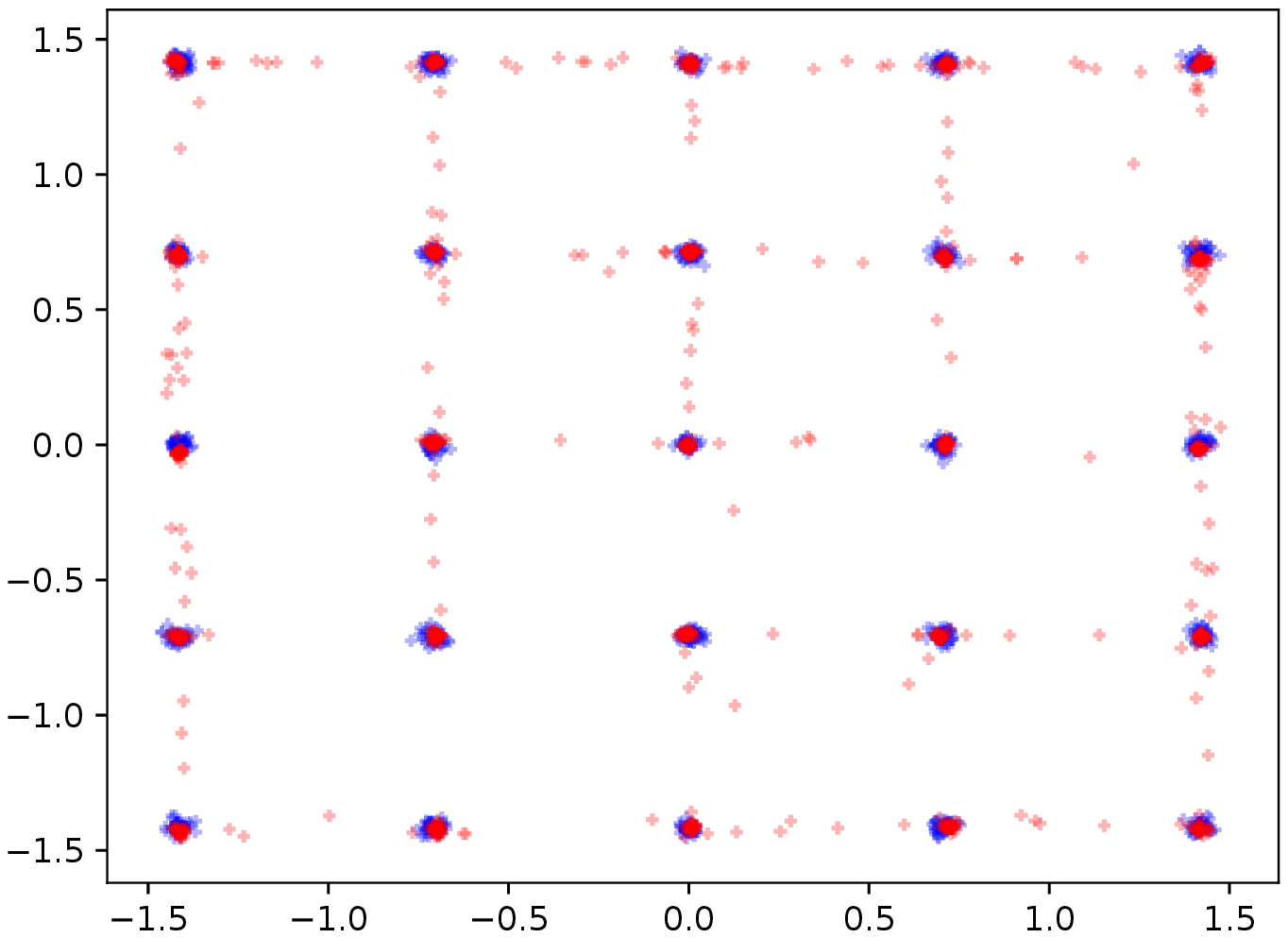}}
\caption{Fitting 25 Gaussian mixtures up to $10^5$ iterations. Blue dots represent the true distribution and red ones are from the trained generator. }\label{fig:gauss25}
\end{figure}


As Figure \ref{fig:gauss25} shows, SGD performs poorly in this task, while the other algorithms yield reasonable results. 
However, compared to Adam, Mirror- and Mirror-Prox-GAN fit the true distribution better in two aspects.
First, the modes found by Mirror- and Mirror-Prox-GAN are more accurate than the ones by Adam, which are perceivably biased. Second, Mirror- and Mirror-Prox-GAN perform much better in capturing the variance (how spread the blue dots are), while Adam tends to collapse to modes. These observations are consistent throughout the synthetic experiments; see Appendix \ref{sec:app_exp_synthetic_data}.


\subsection{Real Data}
For real images, we use the \texttt{LSUN bedroom} dataset \cite{yu2015lsun}. We have also conducted a similar study with \texttt{MNIST}; see Appendix \ref{sec:app_exp_MNIST} for details.



We use the same architecture (DCGAN) as in \cite{radford2015unsupervised} with batch normalization. As the networks become deeper in this case, the gradient magnitudes differ significantly across different layers. As a result, non-adaptive methods such as SGD or SGLD do not perform well in this scenario. To alleviate such issues, we replace SGLD by the RMSProp-preconditioned SGLD \cite{li2016preconditioned} for our sampling routines. For baselines, we consider two adaptive gradient methods: RMSprop and Adam.

\begin{figure}[!t]
\centering
\subfigure[RMSProp]{\label{fig:lsun_rmsprop} \includegraphics[scale = 0.35]{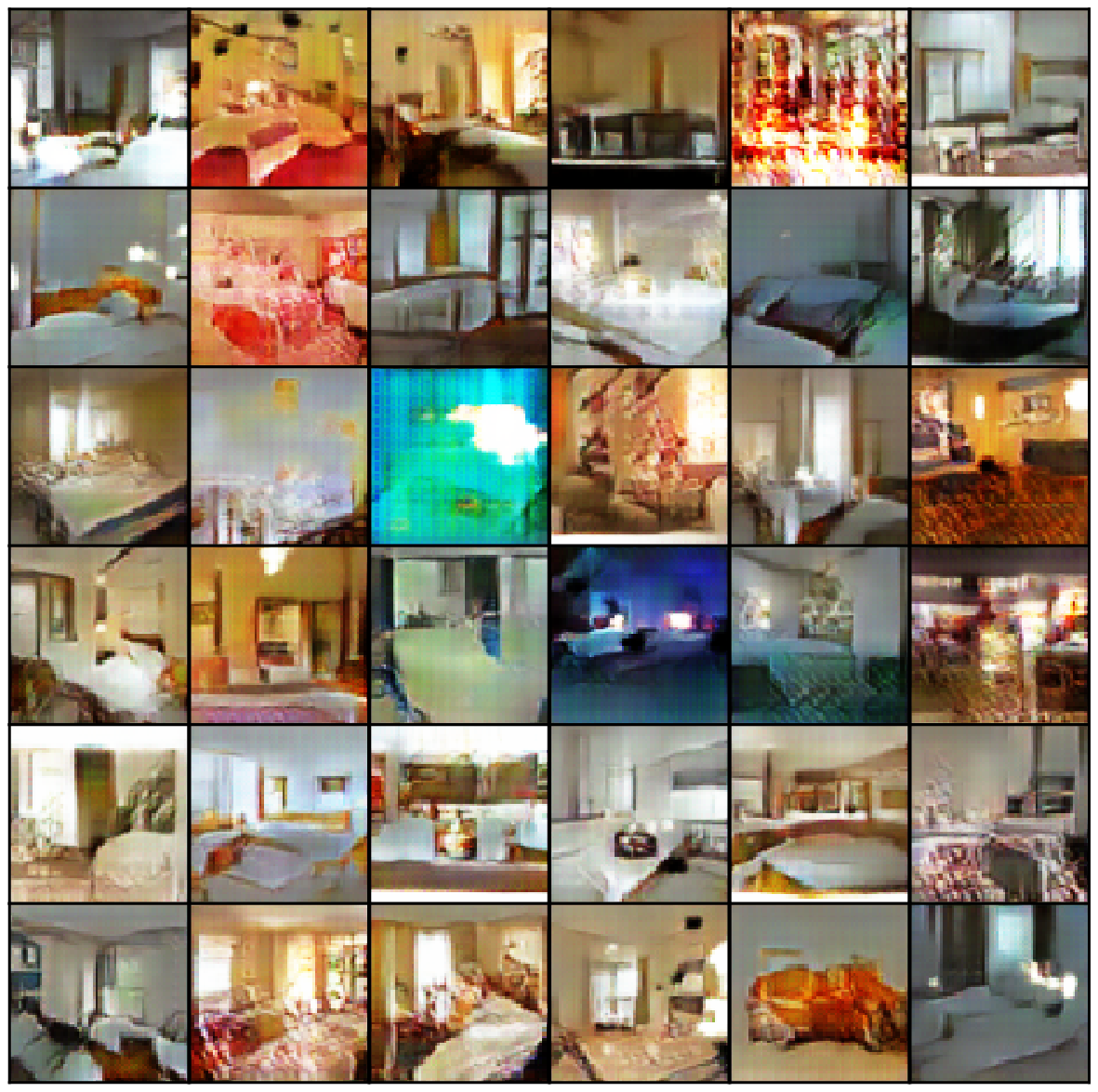}} 
\subfigure[Adam]{\label{fig:lsun_adam} \includegraphics[scale = 0.35]{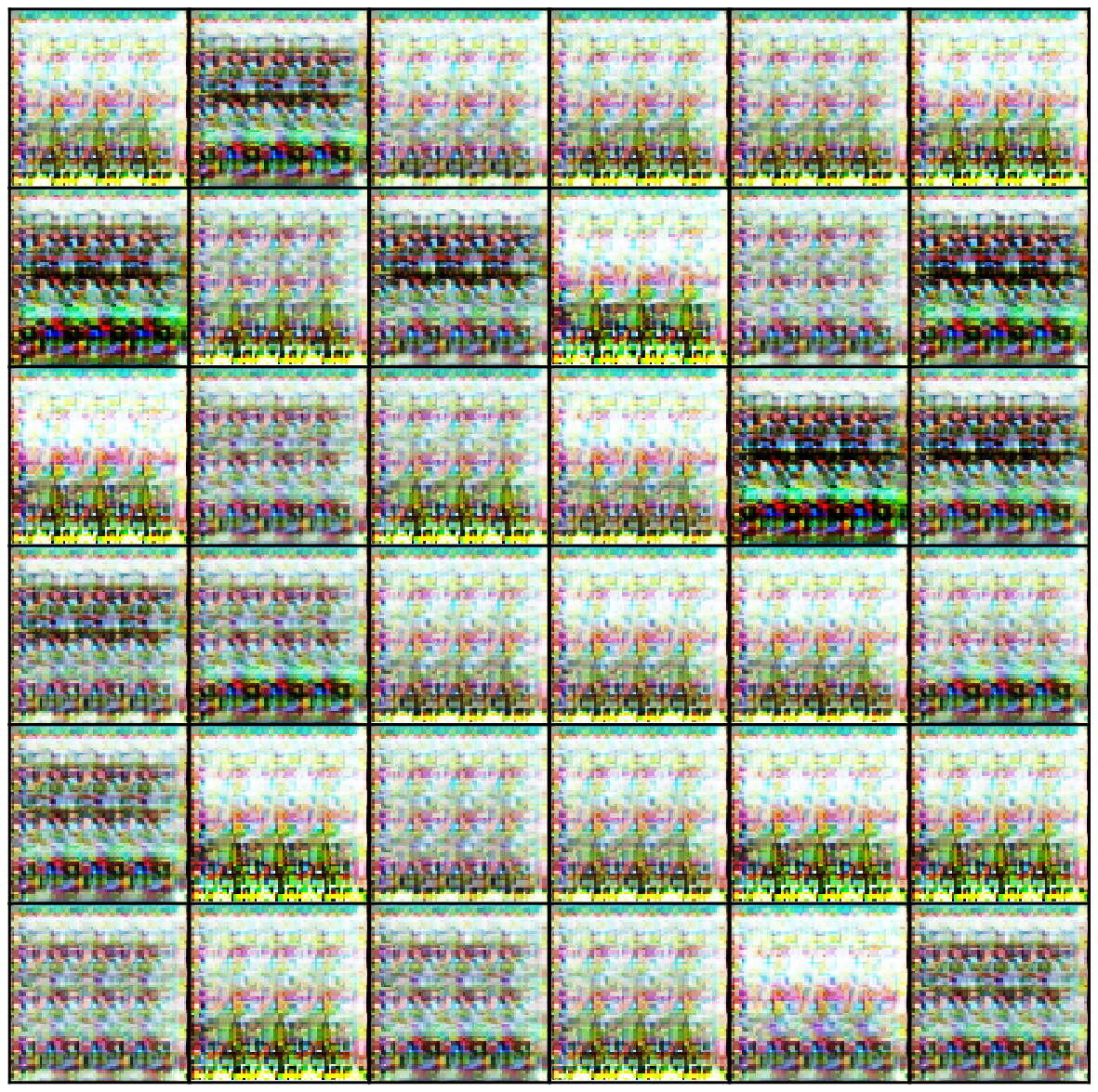}}
\subfigure[Mirror-GAN]{\label{fig:lsun_md} \includegraphics[scale = 0.35]{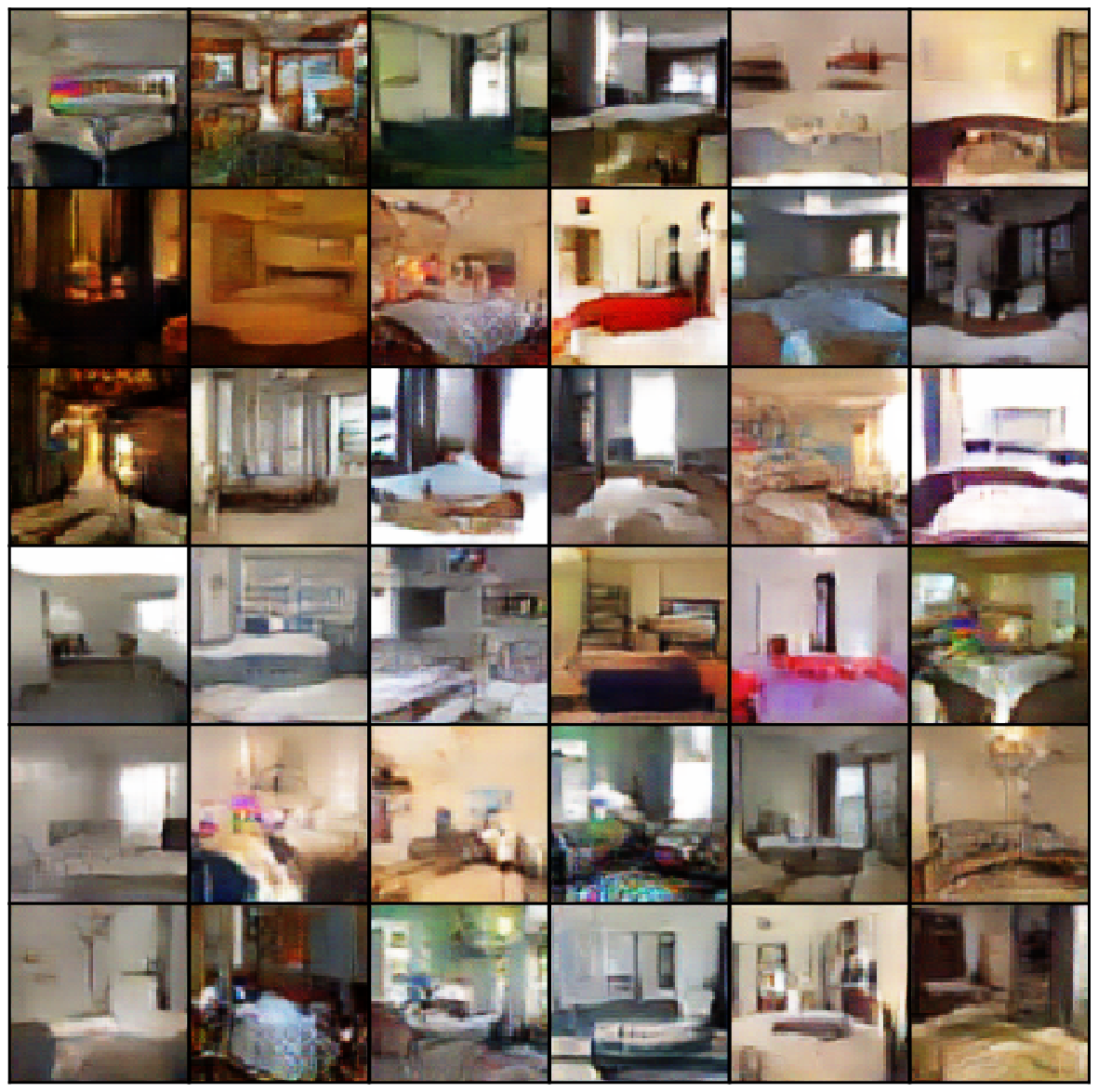}}
\caption{Dataset \texttt{LSUN bedroom}, $10^5$ iterations.}\label{fig:lsun}
\end{figure}

Figure \ref{fig:lsun} shows the results at the $10^5$th iteration. The RMSProp and Mirror-GAN produce images with reasonable quality, while Adam outputs noise. The visual quality of Mirror-GAN is better than RMSProp, as RMSProp sometimes generates blurry images (the $(3,3)$- and $(1,5)$-th entry of Figure \ref{fig:lsun_steps}.(b)).

It is worth mentioning that Adam can learn the true distribution at intermediate iterations, but later on suffers from mode collapse and finally degenerates to noise; see Appendix \ref{sec:app_exp_LSUN}.



\section{Conclusions}
Our goal of systematically understanding and expanding on the game theoretic perspective of mixed NE along with stochastic Langevin dynamics for training GANs is a promising research vein. While simple in retrospect, 
we provide guidelines in developing approximate infinite-dimensional prox methods that mimic closely the provable optimization framework to learn the mixed NE of GANs. Our proposed Mirror- and Mirror-Prox-GAN algorithm feature cheap per-iteration complexity while rapidly converging to solutions of good quality. 



\subsubsection*{Acknowledgments}
This project has received funding from the European Research Council (ERC) under the European Union's Horizon 2020 research and innovation programme (grant agreement n$^\circ$ 725594 - time-data), and Microsoft Research through its PhD scholarship Programme.


\bibliography{biblio,cliu_biblio}

\begin{thebibliography}{47}
\providecommand{\natexlab}[1]{#1}
\providecommand{\url}[1]{\texttt{#1}}
\expandafter\ifx\csname urlstyle\endcsname\relax
  \providecommand{\doi}[1]{doi: #1}\else
  \providecommand{\doi}{doi: \begingroup \urlstyle{rm}\Url}\fi

\bibitem[Arjovsky et~al.(2017)Arjovsky, Chintala, and
  Bottou]{arjovsky2017wasserstein}
Martin Arjovsky, Soumith Chintala, and L{\'e}on Bottou.
\newblock Wasserstein gan.
\newblock \emph{arXiv preprint arXiv:1701.07875}, 2017.

\bibitem[Arora et~al.(2017)Arora, Ge, Liang, Ma, and
  Zhang]{arora2017generalization}
Sanjeev Arora, Rong Ge, Yingyu Liang, Tengyu Ma, and Yi~Zhang.
\newblock Generalization and equilibrium in generative adversarial nets (gans).
\newblock In \emph{International Conference on Machine Learning}, pp.\
  224--232, 2017.

\bibitem[Balandat et~al.(2016)Balandat, Krichene, Tomlin, and
  Bayen]{balandat2016minimizing}
Maximilian Balandat, Walid Krichene, Claire Tomlin, and Alexandre Bayen.
\newblock Minimizing regret on reflexive banach spaces and nash equilibria in
  continuous zero-sum games.
\newblock In \emph{Advances in Neural Information Processing Systems}, pp.\
  154--162, 2016.

\bibitem[Balduzzi et~al.(2018)Balduzzi, Racaniere, Martens, Foerster, Tuyls,
  and Graepel]{balduzzi18mechanics}
David Balduzzi, Sebastien Racaniere, James Martens, Jakob Foerster, Karl Tuyls,
  and Thore Graepel.
\newblock The mechanics of n-player differentiable games.
\newblock In Jennifer Dy and Andreas Krause (eds.), \emph{Proceedings of the
  35th International Conference on Machine Learning}, volume~80 of
  \emph{Proceedings of Machine Learning Research}, pp.\  354--363,
  Stockholmsmässan, Stockholm Sweden, 10--15 Jul 2018. PMLR.

\bibitem[Barratt \& Sharma(2018)Barratt and Sharma]{barratt2018note}
Shane Barratt and Rishi Sharma.
\newblock A note on the inception score.
\newblock \emph{arXiv preprint arXiv:1801.01973}, 2018.

\bibitem[Borji(2018)]{borji2018pros}
Ali Borji.
\newblock Pros and cons of gan evaluation measures.
\newblock \emph{arXiv preprint arXiv:1802.03446}, 2018.

\bibitem[Bubeck(2013{\natexlab{a}})]{bubeck13i}
Sebastien Bubeck.
\newblock Orf523: Mirror descent, part i/ii, 2013{\natexlab{a}}.
\newblock URL
  \url{https://blogs.princeton.edu/imabandit/2013/04/16/orf523-mirror-descent-part-iii/}, 2013.

\bibitem[Bubeck(2013{\natexlab{b}})]{bubeck13ii}
Sebastien Bubeck.
\newblock Orf523: Mirror descent, part ii/ii, 2013{\natexlab{b}}.
\newblock URL
  \url{https://blogs.princeton.edu/imabandit/2013/04/18/orf523-mirror-descent-part-iiii/}, 2013.

\bibitem[Bubeck(2013{\natexlab{c}})]{bubeck13iii}
Sebastien Bubeck.
\newblock Orf523: Mirror prox, 2013{\natexlab{c}}.
\newblock URL
  \url{https://blogs.princeton.edu/imabandit/2013/04/23/orf523-mirror-prox/}, 2013.

\bibitem[Cha(2017)]{cha2017implementations}
Junbum Cha.
\newblock Implementations of (theoretical) generative adversarial networks and
  comparison without cherry-picking.
\newblock \url{https://github.com/khanrc/tf.gans-comparison}, 2017.

\bibitem[Chaudhari et~al.(2017)Chaudhari, Choromanska, Soatto, LeCun, Baldassi,
  Borgs, Chayes, Sagun, and Zecchina]{chaudhari2017entropy}
Pratik Chaudhari, Anna Choromanska, Stefano Soatto, Yann LeCun, Carlo Baldassi,
  Christian Borgs, Jennifer Chayes, Levent Sagun, and Riccardo Zecchina.
\newblock Entropy-sgd: Biasing gradient descent into wide valleys.
\newblock In \emph{International Conference on Learning Representations}, 2017.

\bibitem[Chaudhari et~al.(2018)Chaudhari, Oberman, Osher, Soatto, and
  Carlier]{Chaudhari2018}
Pratik Chaudhari, Adam Oberman, Stanley Osher, Stefano Soatto, and Guillaume
  Carlier.
\newblock Deep relaxation: partial differential equations for optimizing deep
  neural networks.
\newblock \emph{Research in the Mathematical Sciences}, 5\penalty0
  (3):\penalty0 30, Jun 2018.

\bibitem[Dasgupta \& Maskin(1986)Dasgupta and Maskin]{dasgupta1986existence}
Partha Dasgupta and Eric Maskin.
\newblock The existence of equilibrium in discontinuous economic games, i:
  Theory.
\newblock \emph{The Review of economic studies}, 53\penalty0 (1):\penalty0
  1--26, 1986.

\bibitem[Daskalakis et~al.(2018)Daskalakis, Ilyas, Syrgkanis, and
  Zeng]{daskalakis2018training}
Constantinos Daskalakis, Andrew Ilyas, Vasilis Syrgkanis, and Haoyang Zeng.
\newblock Training {GAN}s with optimism.
\newblock In \emph{International Conference on Learning Representations}, 2018.

\bibitem[Dziugaite \& Roy(2018)Dziugaite and Roy]{dziugaite2018entropy}
Gintare~Karolina Dziugaite and Daniel Roy.
\newblock Entropy-sgd optimizes the prior of a pac-bayes bound: Generalization
  properties of entropy-sgd and data-dependent priors.
\newblock In \emph{International Conference on Machine Learning}, pp.\
  1376--1385, 2018.

\bibitem[Gemp \& Mahadevan(2018)Gemp and Mahadevan]{gemp2018global}
Ian Gemp and Sridhar Mahadevan.
\newblock Global convergence to the equilibrium of gans using variational
  inequalities.
\newblock \emph{arXiv preprint arXiv:1808.01531}, 2018.

\bibitem[Gibbs(1902)]{gibbs1902elementary}
J~Willard Gibbs.
\newblock \emph{Elementary principles in statistical mechanics}.
\newblock Yale University Press, 1902.

\bibitem[Gidel et~al.(2018{\natexlab{a}})Gidel, Berard, Vincent, and
  Lacoste-Julien]{gidel2018variational}
Gauthier Gidel, Hugo Berard, Pascal Vincent, and Simon Lacoste-Julien.
\newblock A variational inequality perspective on generative adversarial nets.
\newblock \emph{arXiv preprint arXiv:1802.10551}, 2018{\natexlab{a}}.

\bibitem[Gidel et~al.(2018{\natexlab{b}})Gidel, Hemmat, Pezeshki, Huang,
  Lepriol, Lacoste-Julien, and Mitliagkas]{gidel2018negative}
Gauthier Gidel, Reyhane~Askari Hemmat, Mohammad Pezeshki, Gabriel Huang, Remi
  Lepriol, Simon Lacoste-Julien, and Ioannis Mitliagkas.
\newblock Negative momentum for improved game dynamics.
\newblock \emph{arXiv preprint arXiv:1807.04740}, 2018{\natexlab{b}}.

\bibitem[Glicksberg(1952)]{glicksberg1952further}
Irving~L Glicksberg.
\newblock A further generalization of the kakutani fixed point theorem, with
  application to nash equilibrium points.
\newblock \emph{Proceedings of the American Mathematical Society}, 3\penalty0
  (1):\penalty0 170--174, 1952.

\bibitem[Goodfellow et~al.(2014)Goodfellow, Pouget-Abadie, Mirza, Xu,
  Warde-Farley, Ozair, Courville, and Bengio]{goodfellow2014generative}
Ian Goodfellow, Jean Pouget-Abadie, Mehdi Mirza, Bing Xu, David Warde-Farley,
  Sherjil Ozair, Aaron Courville, and Yoshua Bengio.
\newblock Generative adversarial nets.
\newblock In \emph{Advances in neural information processing systems}, pp.\
  2672--2680, 2014.

\bibitem[Gray(2011)]{gray2011entropy}
Robert~M Gray.
\newblock \emph{Entropy and information theory}.
\newblock Springer Science \& Business Media, 2011.

\bibitem[Grnarova et~al.(2018)Grnarova, Levy, Lucchi, Hofmann, and
  Krause]{grnarova2018an}
Paulina Grnarova, Kfir~Y Levy, Aurelien Lucchi, Thomas Hofmann, and Andreas
  Krause.
\newblock An online learning approach to generative adversarial networks.
\newblock In \emph{International Conference on Learning Representations}, 2018.

\bibitem[Gulrajani et~al.(2017)Gulrajani, Ahmed, Arjovsky, Dumoulin, and
  Courville]{gulrajani2017improved}
Ishaan Gulrajani, Faruk Ahmed, Martin Arjovsky, Vincent Dumoulin, and Aaron~C
  Courville.
\newblock Improved training of wasserstein gans.
\newblock In \emph{Advances in Neural Information Processing Systems}, pp.\
  5767--5777, 2017.

\bibitem[Halmos(2013)]{halmos2013measure}
Paul~R Halmos.
\newblock \emph{Measure theory}, volume~18.
\newblock Springer, 2013.

\bibitem[Isola et~al.(2017)Isola, Zhu, Zhou, and Efros]{isola2017image}
Phillip Isola, Jun-Yan Zhu, Tinghui Zhou, and Alexei~A Efros.
\newblock Image-to-image translation with conditional adversarial networks.
\newblock \emph{arXiv preprint}, 2017.

\bibitem[Juditsky \& Nemirovski(2011)Juditsky and
  Nemirovski]{juditsky2011first2}
Anatoli Juditsky and Arkadi Nemirovski.
\newblock First order methods for nonsmooth convex large-scale optimization,
  ii: utilizing problems structure.
\newblock \emph{Optimization for Machine Learning}, pp.\  149--183, 2011.

\bibitem[Kim et~al.(2017)Kim, Cha, Kim, Lee, and Kim]{kim2017learning}
Taeksoo Kim, Moonsu Cha, Hyunsoo Kim, Jung~Kwon Lee, and Jiwon Kim.
\newblock Learning to discover cross-domain relations with generative
  adversarial networks.
\newblock In \emph{International Conference on Machine Learning}, pp.\
  1857--1865, 2017.

\bibitem[Kingma \& Ba(2015)Kingma and Ba]{kingma2014adam}
Diederik~P Kingma and Jimmy Ba.
\newblock Adam: A method for stochastic optimization.
\newblock In \emph{International Conference on Learning Representations}, 2015.

\bibitem[Li et~al.(2016)Li, Chen, Carlson, and Carin]{li2016preconditioned}
Chunyuan Li, Changyou Chen, David~E Carlson, and Lawrence Carin.
\newblock Preconditioned stochastic gradient langevin dynamics for deep neural
  networks.
\newblock In \emph{AAAI}, 2016.

\bibitem[Liang \& Stokes(2018)Liang and Stokes]{liang2018interaction}
Tengyuan Liang and James Stokes.
\newblock Interaction matters: A note on non-asymptotic local convergence of
  generative adversarial networks.
\newblock \emph{arXiv preprint arXiv:1802.06132}, 2018.

\bibitem[Lucic et~al.(2018)Lucic, Kurach, Michalski, Gelly, and
  Bousquet]{lucic2018gans}
Mario Lucic, Karol Kurach, Marcin Michalski, Sylvain Gelly, and Olivier
  Bousquet.
\newblock Are gans created equal? a large-scale study.
\newblock In \emph{Advances in neural information processing systems}, 2018.

\bibitem[Mertikopoulos et~al.(2018)Mertikopoulos, Zenati, Lecouat, Foo,
  Chandrasekhar, and Piliouras]{mertikopoulos2018mirror}
Panayotis Mertikopoulos, Houssam Zenati, Bruno Lecouat, Chuan-Sheng Foo, Vijay
  Chandrasekhar, and Georgios Piliouras.
\newblock Mirror descent in saddle-point problems: Going the extra (gradient)
  mile.
\newblock \emph{arXiv preprint arXiv:1807.02629}, 2018.

\bibitem[Mescheder et~al.(2017)Mescheder, Nowozin, and
  Geiger]{mescheder2017numerics}
Lars Mescheder, Sebastian Nowozin, and Andreas Geiger.
\newblock The numerics of gans.
\newblock In \emph{Advances in Neural Information Processing Systems}, pp.\
  1825--1835, 2017.

\bibitem[Nemirovski(2004)]{nemirovski2004prox}
Arkadi Nemirovski.
\newblock Prox-method with rate of convergence o (1/t) for variational
  inequalities with lipschitz continuous monotone operators and smooth
  convex-concave saddle point problems.
\newblock \emph{SIAM Journal on Optimization}, 15\penalty0 (1):\penalty0
  229--251, 2004.

\bibitem[Nemirovsky \& Yudin(1983)Nemirovsky and Yudin]{nemirovsky1983problem}
AS~Nemirovsky and DB~Yudin.
\newblock Problem complexity and method efficiency in optimization.
\newblock 1983.

\bibitem[Nesterov(2009)]{nesterov2009primal}
Yurii Nesterov.
\newblock Primal-dual subgradient methods for convex problems.
\newblock \emph{Mathematical programming}, 120\penalty0 (1):\penalty0 221--259,
  2009.

\bibitem[Oliehoek et~al.(2018)Oliehoek, Savani, Gallego, van~der Pol, and
  Gro{\ss}]{oliehoek2018beyond}
Frans~A Oliehoek, Rahul Savani, Jose Gallego, Elise van~der Pol, and Roderich
  Gro{\ss}.
\newblock Beyond local nash equilibria for adversarial networks.
\newblock \emph{arXiv preprint arXiv:1806.07268}, 2018.

\bibitem[Pumarola et~al.(2018{\natexlab{a}})Pumarola, Agudo, Martinez,
  Sanfeliu, and Moreno-Noguer]{pumarola2018ganimation}
Albert Pumarola, Antonio Agudo, Aleix~M Martinez, Alberto Sanfeliu, and
  Francesc Moreno-Noguer.
\newblock Ganimation: Anatomically-aware facial animation from a single image.
\newblock In \emph{Proceedings of the European Conference on Computer Vision
  (ECCV)}, pp.\  818--833, 2018{\natexlab{a}}.

\bibitem[Pumarola et~al.(2018{\natexlab{b}})Pumarola, Agudo, Sanfeliu, and
  Moreno-Noguer]{pumarola2018unsupervised}
Albert Pumarola, Antonio Agudo, Alberto Sanfeliu, and Francesc Moreno-Noguer.
\newblock Unsupervised person image synthesis in arbitrary poses.
\newblock In \emph{Proceedings of the IEEE Conference on Computer Vision and
  Pattern Recognition}, pp.\  8620--8628, 2018{\natexlab{b}}.

\bibitem[Radford et~al.(2015)Radford, Metz, and
  Chintala]{radford2015unsupervised}
Alec Radford, Luke Metz, and Soumith Chintala.
\newblock Unsupervised representation learning with deep convolutional
  generative adversarial networks.
\newblock \emph{arXiv preprint arXiv:1511.06434}, 2015.

\bibitem[S{\o}nderby et~al.(2017)S{\o}nderby, Caballero, Theis, Shi, and
  Husz{\'a}r]{sonderby2017amortised}
Casper~Kaae S{\o}nderby, Jose Caballero, Lucas Theis, Wenzhe Shi, and Ferenc
  Husz{\'a}r.
\newblock Amortised map inference for image super-resolution.
\newblock 2017.

\bibitem[Tieleman \& Hinton(2012)Tieleman and Hinton]{tieleman2012lecture}
Tijmen Tieleman and Geoffrey Hinton.
\newblock Lecture 6.5-rmsprop: Divide the gradient by a running average of its
  recent magnitude.
\newblock \emph{COURSERA: Neural networks for machine learning}, 4\penalty0
  (2):\penalty0 26--31, 2012.

\bibitem[Wang et~al.(2015)Wang, Liu, Yang, Han, and Huang]{wang2015deep}
Zhaowen Wang, Ding Liu, Jianchao Yang, Wei Han, and Thomas Huang.
\newblock Deep networks for image super-resolution with sparse prior.
\newblock In \emph{Proceedings of the IEEE International Conference on Computer
  Vision}, pp.\  370--378, 2015.

\bibitem[Welling \& Teh(2011)Welling and Teh]{welling2011bayesian}
Max Welling and Yee~W Teh.
\newblock Bayesian learning via stochastic gradient langevin dynamics.
\newblock In \emph{Proceedings of the 28th International Conference on Machine
  Learning (ICML-11)}, pp.\  681--688, 2011.

\bibitem[Yu et~al.(2015)Yu, Seff, Zhang, Song, Funkhouser, and
  Xiao]{yu2015lsun}
Fisher Yu, Ari Seff, Yinda Zhang, Shuran Song, Thomas Funkhouser, and Jianxiong
  Xiao.
\newblock Lsun: Construction of a large-scale image dataset using deep learning
  with humans in the loop.
\newblock \emph{arXiv preprint arXiv:1506.03365}, 2015.

\bibitem[Zhu et~al.(2017)Zhu, Park, Isola, and Efros]{zhu2017unpaired}
Jun-Yan Zhu, Taesung Park, Phillip Isola, and Alexei~A Efros.
\newblock Unpaired image-to-image translation using cycle-consistent
  adversarial networks.
\newblock In \emph{IEEE International Conference on Computer Vision}, 2017.

\end{thebibliography}
\bibliographystyle{plain}
\appendix
\newpage
\section{A Framework for Infinite-Dimensional Mirror Descent}\label{app:proof_infMD}
\subsection{A note on the regularity}\label{app:regularity}
It is known that the (negative) Shannon entropy is \emph{not} Fr\'{e}chet differentiable in general. However, below we show that the Fr\'{e}chet derive can be well-defined if we restrict the probability measures to within the set
\begin{align*}
\Mcal(\Zcal) \coloneqq & \{ \textup{all probability measures on $\Zcal$ that admit densities w.r.t. the Lebesgue measure,} \\ \hspace{5mm}&\textup{ and the density is continuous and positive almost everywhere on $\Zcal$}  \}.
\end{align*}We will also restrict the set of functions to be bounded and integrable: 
\begin{align*}
\Fcal(\Zcal) \coloneqq  \{ \textup{all bounded integrable functions on $\Zcal$} \}.
\end{align*}
It is important to notice that $\mu \in \Mcal(\Zcal)$ and $h\in \Fcal(\Zcal)$ implies $\mu'= \MD{\mu}{h}\in \Mcal(\Zcal)$; this readily follows from the formula (\ref{eq:MD_iterates}).


\subsection{Properties of Entropic Mirror Map} 

The total variation of a (possibly non-probability) measure $\mu \in \Mcal(\Zcal)$ is defined as \cite{halmos2013measure}
\beq \nn
\|{\mu} \|_{\textup{TV}} = \sup_{ \| h\|_{\Lbb^\infty} \leq 1} \int  h \dmu  =\sup_{ \|h\|_{\Lbb^\infty} \leq 1} \ip{\mu}{h}.
\eeq

We depart from the fundamental {\em Gibbs Variational Principle}, which dates back to the earliest work of statistical mechanics \cite{gibbs1902elementary}. For two probability measures $\mu, \mu'$, denote their relative entropy by (the reason for this notation will become clear in (\ref{eq:bregman_entropy}))
\beq \nn
D_\Phi(\mu , \mu') \coloneqq \int_\Zcal \dmu\log \frac{\dmu}{\dmu'}.
\eeq

\begin{theorem}[\em Gibbs Variation Principle]\label{thm:gibbs}
Let $h \in \Fcal(\Zcal)$ and $\mu' \in \Mcal(\Zcal)$ be a reference measure. Then
\begin{equation}\label{eq:gibbs}
\log \int_{\Zcal} e^h \drm \mu' = \sup_{\mu \in \Mcal(\Zcal) } \ip{\mu}{h} - D_\Phi(\mu, \mu'),
\end{equation}and equality is achieved by $\dmu^\star = \frac{e^{h} \drm\mu'}{\int_{\Zcal} e^h \drm\mu'}$.
\end{theorem}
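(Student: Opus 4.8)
\emph{Proof plan.} The plan is to exhibit the maximizer explicitly and then reduce the claimed inequality to the nonnegativity of relative entropy. Set $Z \coloneqq \int_\Zcal e^h \drm\mu'$; since $h \in \Fcal(\Zcal)$ is bounded and $\mu'$ is a probability measure, $Z \in (0,\infty)$, and the measure $\mu^\star$ with $\drm\mu^\star = Z^{-1} e^h \drm\mu'$ is a valid element of $\Mcal(\Zcal)$ (it is a probability measure with an a.e.-positive density; this is the closure property noted in Appendix~\ref{app:regularity} applied to $\MD[1]{\mu'}{-h}$). First I would check that $\mu^\star$ attains the right-hand side: since $\log\tfrac{\drm\mu^\star}{\drm\mu'} = h - \log Z$ $\mu'$-almost everywhere, a direct computation gives $\ip{\mu^\star}{h} - D_\Phi(\mu^\star,\mu') = \int h\,\drm\mu^\star - \int(h - \log Z)\,\drm\mu^\star = \log Z = \log\int_\Zcal e^h\,\drm\mu'$.

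For the upper bound, the key step is a change of reference measure. Fix an arbitrary $\mu \in \Mcal(\Zcal)$; if $D_\Phi(\mu,\mu') = +\infty$ the inequality is trivial (as $|\ip{\mu}{h}| \le \Linf{h} < \infty$), so assume $D_\Phi(\mu,\mu') < \infty$. Using the chain rule $\log\tfrac{\drm\mu}{\drm\mu'} = \log\tfrac{\drm\mu}{\drm\mu^\star} + h - \log Z$ and substituting into the objective, I would obtain
\[
\ip{\mu}{h} - D_\Phi(\mu,\mu') \;=\; \int h\,\drm\mu - \int\Bigl(\log\tfrac{\drm\mu}{\drm\mu^\star} + h - \log Z\Bigr)\drm\mu \;=\; \log Z - D_\Phi(\mu,\mu^\star).
\]
Then I would invoke $D_\Phi(\mu,\mu^\star) \ge 0$: by Jensen's inequality for the concave function $\log$, $-D_\Phi(\mu,\mu^\star) = \int \log\tfrac{\drm\mu^\star}{\drm\mu}\,\drm\mu \le \log\int \tfrac{\drm\mu^\star}{\drm\mu}\,\drm\mu = \log \mu^\star(\Zcal) = \log 1 = 0$, where the middle equality uses that $\mu$ has an a.e.-positive density. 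Hence $\ip{\mu}{h} - D_\Phi(\mu,\mu') \le \log Z$ for all $\mu$, with equality iff $D_\Phi(\mu,\mu^\star) = 0$, i.e.\ $\mu = \mu^\star$. Together with the first paragraph this proves the identity (\ref{eq:gibbs}) and that the supremum is attained at $\mu^\star$.

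The density manipulations are routine; the only points needing genuine care are the measure-theoretic nonnegativity of relative entropy and the integrability bookkeeping — checking that every integral above is finite (boundedness of $h$, and $D_\Phi(\mu,\mu')<\infty$ where used) and that the regularity assumptions of Appendix~\ref{app:regularity} make all the log-ratios well-defined $\mu$- and $\mu'$-almost everywhere. I do not expect a serious obstacle beyond this technical care.
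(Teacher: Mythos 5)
Your proof is correct. Note that the paper itself does not prove Theorem~\ref{thm:gibbs}: it is invoked as a classical fact (citing Gibbs) and used as the starting point for Theorem~\ref{thm:infMD_foundations}, so there is no in-paper argument to compare against. What you give is the standard self-contained derivation: the identity $\ip{\mu}{h} - D_\Phi(\mu,\mu') = \log Z - D_\Phi(\mu,\mu^\star)$ via the change of reference measure, followed by nonnegativity of relative entropy, which simultaneously yields the upper bound, the attainment at $\mu^\star$, and (via the equality case of Jensen) uniqueness of the maximizer. The two points you flag as needing care are indeed the only ones: (i) under the regularity assumptions of Appendix~\ref{app:regularity} all densities are a.e.\ positive, so every log-ratio is defined and $\int \frac{\drm\mu^\star}{\drm\mu}\drm\mu = 1$ as you use; and (ii) when $D_\Phi(\mu,\mu')<\infty$, the relation $\log\frac{\drm\mu}{\drm\mu^\star} = \log\frac{\drm\mu}{\drm\mu'} - h + \log Z$ together with $\Linf{h}<\infty$ shows $D_\Phi(\mu,\mu^\star)$ is itself a well-defined finite quantity, so the rearrangement of the integrals is legitimate. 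No gap.
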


Part of the following theorem is folklore in the mathematics and learning community. However, to the best of our knowledge, the relation to the entropic MD has not been systematically studied before, as we now do.
\begin{theorem} \label{thm:infMD_foundations}
For a probability measure $\dmu = \rho\dz $, let $\Phi(\mu) = \int \rho\log \rho \dz $ be the negative Shannon entropy, and let $\Phi^\star( h) = \log \int_\Zcal e^h \dz$. Then
\begin{enumerate}
\item $\Phi^\star$ is the Fenchel conjugate of $\Phi$: 
\begin{align} 
\Phi^\star(h) &= \sup_{\mu \in \Mcal(\Zcal)} \ip{\mu}{h} - \Phi(\mu); \label{eq:duality_lse} \\
\Phi(\mu) &= \sup_{h \in \Fcal(\Zcal)} \ip{\mu}{h} - \Phi^\star(h). \label{eq:duality_KL}
\end{align}
 
\item The derivatives admit the expression
\begin{alignat}{2} 
\drm \Phi(\mu) &= 1+\log \rho &&= \argmax_{h\in \Fcal(\Zcal)} \ip{\mu}{h} - \Phi^\star(h); \label{eq:entropy_derivative}\\
\drm \Phi^\star(h) &= \frac{e^h\dz}{\int_\Zcal e^h \dz} &&=\argmax_{\mu\in \Mcal(\Zcal)} \ip{\mu}{h} - \Phi(\mu). \label{eq:lse_derivative}
\end{alignat}

\item The Bregman divergence of $\Phi$ is the relative entropy:
\beq \label{eq:bregman_entropy}
D_\Phi(\mu, \mu') = \Phi(\mu) - \Phi(\mu') -  \ip{\mu-\mu'}{\drm \Phi(\mu')} = \int_\Zcal \drm\mu \log\frac{\drm\mu}{\drm\mu'} .
\eeq

\item $\Phi$ is 4-strongly convex with respect to the total variation norm: For all $\lambda \in (0,1)$,
\beq \label{eq:entropy_strongly_convex}
 \Phi(\lambda \mu + (1-\lambda) \mu') \leq \lambda \Phi(\mu) + (1-\lambda) \Phi(\mu') - \frac{1}{2}\cdot 4\lambda(1-\lambda)\| \mu-\mu'\|^2_{\textup{TV}}.
\eeq

\item \label{item:duality}The following duality relation holds: For any constant $C$, we have
\beq \label{eq:duality_inf}
\forall \mu, \mu' \in \Mcal(\Zcal), \quad D_\Phi( \mu, \mu' ) = D_{\Phi^\star} \left(\drm \Phi(\mu'), \drm \Phi(\mu) \right) = D_{\Phi^\star} \left(\drm \Phi(\mu') + C, \drm \Phi(\mu) \right).
\eeq

\item $\Phi^\star$ is $\frac{1}{4}$-smooth with respect to $\| \cdot \|_{\Lbb^\infty}$:
\beq \label{eq:Fstar_smooth_gradient_formulation}
\forall h, h' \in \Fcal(\Zcal), \quad    \TV{   \drm \Phi^\star(h) - \drm \Phi^\star(h') } \leq \frac{1}{4}\Linf{ h-h'}.
\eeq

\item Alternative to \textup{(\ref{eq:Fstar_smooth_gradient_formulation})}, we have the equivalent characterization of $\Phi^\star$:
\beq \label{eq:Fstar_smooth}
\forall h, h'\in\Fcal(\Zcal), \quad    \Phi^\star(h) \leq \Phi^\star(h') + \ip{\drm \Phi^\star(h')}{ h-h'} + \frac{1}{2} \cdot \frac{1}{4} \Linf{h-h'}^2.
\eeq

\item Similar to \textup{(\ref{eq:duality_inf})}, we have
\beq \label{eq:duality_inf2}
\forall h,h' , \quad D_{\Phi^\star}( h, h' ) = D_{\Phi} (\drm \Phi^\star(h'), \drm \Phi^\star(h)).
\eeq

\item The following three-point identity holds for all $\mu, \mu', \mu'' \in \Mcal(\Zcal)$:
\beq \label{eq:three-point}
\ip{ \mu''-\mu }{ \drm \Phi(\mu') - \drm\Phi(\mu)} = D_\Phi( \mu, \mu' ) + D_\Phi(\mu'', \mu) - D_\Phi(\mu'',\mu').
\eeq

\item Let the Mirror Descent iterate be defined as in \textup{(\ref{eq:MD_iterates})}. Then the following statements are equivalent:
\begin{enumerate}
\item $\mu_+ = \MD{\mu}{h} $.
\item There exists a constant $C$ such that $\drm \Phi(\mu_+) = \drm \Phi(\mu) - \eta h + C $.
\end{enumerate}In particular, for any $\mu',\mu'' \in \Mcal(\Zcal)$ we have
\beq \label{eq:mirror-descent_optimality}Let
\ip{\mu'-\mu''}{\eta h} = \ip{\mu'-\mu''}{\drm \Phi(\mu)-\drm\Phi(\mu_+)}.
\eeq
\end{enumerate}

\end{theorem}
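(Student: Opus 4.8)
\emph{Proof plan.} The plan is to exploit that essentially every item is one of three things: a direct Fr\'echet-derivative computation, a formal Bregman-divergence identity, or the classical Fenchel/Pinsker calculus transcribed to $\Mcal(\Zcal)$ and $\Fcal(\Zcal)$. I would handle items~1--2 first. I would obtain the conjugacy relation $\Phi^\star(h) = \sup_\mu \ip{\mu}{h} - \Phi(\mu)$ by specialising the Gibbs Variational Principle (Theorem~\ref{thm:gibbs}) to the Lebesgue reference measure, for which $D_\Phi(\mu,\cdot) = \Phi(\mu)$; the maximiser it supplies is exactly the density $e^h\drm\vz/\int e^h\drm\vz$ in \eqref{eq:lse_derivative}. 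The reverse relation \eqref{eq:duality_KL} is Fenchel biconjugation, but I would also check it by hand: inserting $h = 1+\log\rho$ into $\ip{\mu}{h} - \Phi^\star(h)$ returns $\Phi(\mu)$ and, by Jensen, this is the supremum, which simultaneously proves the $\argmax$ claim in \eqref{eq:entropy_derivative}. The derivative formulas come from expanding the definitions: $(\rho+\epsilon\rho')\log(\rho+\epsilon\rho') = \rho\log\rho + \epsilon\rho'(1+\log\rho) + o(\epsilon)$ gives $\drm\Phi(\mu) = 1+\log\rho$, and $\log\int e^{h+\epsilon h'} = \Phi^\star(h) + \epsilon\ip{e^h\drm\vz/\int e^h\drm\vz}{h'} + o(\epsilon)$ gives $\drm\Phi^\star(h)$.

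For the Bregman-algebra items (3, 5, 8, 9), I would substitute $\drm\Phi(\mu') = 1+\log\rho'$ into the definition of $D_\Phi$ and cancel the $\int\rho - \int\rho' = 0$ terms to get \eqref{eq:bregman_entropy}. The three-point identity \eqref{eq:three-point} follows by writing out all three Bregman terms from the definition: every occurrence of $\Phi$ cancels and the linear terms regroup to the left-hand side, a purely formal computation using no structure of $\Phi$. The conjugate-duality relations \eqref{eq:duality_inf} and \eqref{eq:duality_inf2} follow from the Fenchel--Young equality $\Phi(\mu') + \Phi^\star(\drm\Phi(\mu')) = \ip{\mu'}{\drm\Phi(\mu')}$ and the inversion $\drm\Phi^\star(\drm\Phi(\mu)) = \mu$ from items~1--2: both sides collapse to $\Phi(\mu) + \Phi^\star(\drm\Phi(\mu')) - \ip{\mu}{\drm\Phi(\mu')}$. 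Invariance under an additive constant $C$ is immediate, since $\Phi^\star(h+C) = \Phi^\star(h) + C$ and $\drm\Phi^\star(h')$ is a probability measure, hence annihilates constants in the pairing.

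The analytic core is items 4, 6, 7, with item 10 a corollary. For item~4 I would combine Pinsker's inequality $D_\Phi(\mu,\mu') \geq c\,\TV{\mu-\mu'}^2$ with the standard equivalence between the chord form of strong convexity and the Bregman lower bound: writing $\mu_\lambda = \lambda\mu + (1-\lambda)\mu'$, the gradient terms cancel in $\lambda D_\Phi(\mu,\mu_\lambda) + (1-\lambda)D_\Phi(\mu',\mu_\lambda) = \lambda\Phi(\mu) + (1-\lambda)\Phi(\mu') - \Phi(\mu_\lambda)$, and plugging in Pinsker with $\mu - \mu_\lambda = (1-\lambda)(\mu-\mu')$ and $\mu' - \mu_\lambda = -\lambda(\mu-\mu')$ yields \eqref{eq:entropy_strongly_convex} with the stated modulus. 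For item~6 I would dualise: use \eqref{eq:duality_inf} together with constant-invariance to turn $D_\Phi$ evaluated at $\drm\Phi^\star$-images into $D_{\Phi^\star}$ evaluated at the arguments, add the two strong-convexity bounds, and use the telescoping identity $D_{\Phi^\star}(h,h') + D_{\Phi^\star}(h',h) = \ip{\drm\Phi^\star(h) - \drm\Phi^\star(h')}{h-h'} \leq \TV{\drm\Phi^\star(h) - \drm\Phi^\star(h')}\,\Linf{h-h'}$; rearranging gives \eqref{eq:Fstar_smooth_gradient_formulation}, and integrating this Lipschitz bound along $t \mapsto h' + t(h-h')$ gives the descent lemma \eqref{eq:Fstar_smooth}. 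Finally, for item~10: applying $\drm\Phi$ to $\mu_+ = \drm\Phi^\star(\drm\Phi(\mu) - \eta h)$ and using $\drm\Phi(\drm\Phi^\star(g)) = g + \text{const}$ proves $(a)\Rightarrow(b)$; applying $\drm\Phi^\star$ to $(b)$ and using $\drm\Phi^\star(\drm\Phi(\mu_+)) = \mu_+$ with constant-invariance proves $(b)\Rightarrow(a)$; and \eqref{eq:mirror-descent_optimality} falls out of $(b)$ by pairing $\eta h = \drm\Phi(\mu) - \drm\Phi(\mu_+) + C$ against $\mu' - \mu''$, which kills $C$ because $\ip{\mu'-\mu''}{1} = 0$ for probability measures.

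The main obstacle is not any single computation but the regularity bookkeeping flagged in Appendix~\ref{app:regularity}: one must verify that the Fr\'echet derivatives exist, that the $o(\epsilon)$ expansions and differentiation under the integral are legitimate on the restricted classes $\Mcal(\Zcal)$ and $\Fcal(\Zcal)$, that the fundamental theorem of calculus applies to $\Phi^\star$ along a segment in $\Fcal(\Zcal)$, and that a mirror step $\MD{\mu}{h}$ keeps the iterate inside $\Mcal(\Zcal)$ (this last point being immediate from the closed form in \eqref{eq:MD_iterates}). A secondary, milder point is to invoke Pinsker's inequality with the correct constant so that the strong-convexity and smoothness moduli come out to the stated values; once these are settled, each identity above is a short computation.
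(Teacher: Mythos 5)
Your plan is correct and matches the paper's proof in all essentials: Gibbs' variational principle for the conjugacy, direct Fr\'echet expansions for the derivative formulas, Pinsker plus the convex-combination identity for strong convexity, duality and H\"older for the smoothness of $\Phi^\star$, integration along a segment (the paper's discrete telescoping with $K\to\infty$) for the descent lemma, and formal Bregman algebra for the rest. The only deviations are cosmetic --- you get item~6 by summing the two symmetrized Bregman divergences $D_{\Phi^\star}(h,h')+D_{\Phi^\star}(h',h)$ instead of re-running the $\mu_\lambda$ convexity argument, and items~5 and~8 via Fenchel--Young rather than by expanding densities --- and these yield the same estimates.
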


\begin{proof}\ \\
\begin{enumerate}
\item Equation (\ref{eq:duality_lse}) is simply the Gibbs variational principle (\ref{eq:gibbs}) with $\dmu \leftarrow \dz$.

By (\ref{eq:duality_lse}), we know that
\beq \label{eq:hold0}
\forall h\in \Fcal(\Zcal), \quad \Phi(\mu) \geq \ip{ \mu}{h} - \log \int_\Zcal e^h\dz.
\eeq
But for $\dmu = \rho \dz$, the function $h \coloneqq 1+\log \rho$ saturates the equality in (\ref{eq:hold0}).

\item We prove a more general result on the Bregman divergence $D_\Phi$ in (\ref{eq:relative_entropy_derivative}) below. 

Let $\dmu = \rho \dz, \drm\mu' = \rho' \dz$, and $\dmu'' = \rho''\dz \in \Mcal(\Zcal)$. Let $\epsilon>0$ be small enough such that $(\rho + \epsilon \rho'')\dz$ is absolutely continuous with respect to $\dmu'$; note that this is possible because $\mu, \mu'$, and $\mu'' \in \Mcal(\Zcal)$. We compute
\begin{align}
D_\Phi(\rho + \epsilon \rho'' , \rho') &= \int_{\Zcal} \left( \rho+ \epsilon \rho'' \right) \log \frac{\rho + \epsilon \rho''}{\rho'} \nonumber\\
&= \int_{\Zcal} \rho \log \frac{\rho}{\rho'} + \int_{\Zcal} \rho \log \left( 1+ \epsilon \frac{\rho''}{\rho} \right) + \epsilon \int_{\Zcal} \rho'' \log \frac{\rho}{\rho'} + \epsilon \int_{\Zcal} \rho'' \log \left(1+\epsilon \frac{\rho''}{\rho} \right) \nonumber \\
& \overset{\textup{(i)}}{=} \int_{\Zcal} \rho \log \frac{\rho}{\rho'} + \epsilon \int_{\Zcal} \rho'' + \epsilon \int_{\Zcal} \rho'' \log \frac{\rho}{\rho'} + \epsilon^2 \int_{\Zcal}  \frac{\rho''^2}{\rho} + o(\epsilon) \nonumber \\
&= D_\Phi(\rho , \rho') + \epsilon \int_{\Zcal} \rho'' \left(1 + \log \frac{\rho}{\rho'} \right)+ o(\epsilon), \nonumber
\end{align}where (i) uses $\log(1+t) = t + o(t)$ as $t \rightarrow 0$. In short, for all $\mu', \mu'' \in \Mcal(\Zcal)$, 
\begin{equation}\label{eq:relative_entropy_derivative}
\drm_\mu D_\Phi(\mu , \mu') (\mu'') = \ip{\mu''}{ 1 + \log \frac{\rho}{\rho'} }
\end{equation}which means $\drm_\mu D_\Phi(\mu , \mu') = 1+\log \frac{\rho}{\rho'}$. The formula (\ref{eq:entropy_derivative}) is the special case when $\dmu' \leftarrow \dz$.

We now turn to (\ref{eq:lse_derivative}). For every $h \in \Fcal(\Zcal)$, we need to show that the following holds for every $h' \in \Fcal(\Zcal)$:
\begin{equation} \label{eq:thmlse1}
\Phi^\star(h + \epsilon h') - \Phi^\star(h)= \log \int_\Zcal e^{h + \epsilon h'}\dz - \log \int_\Zcal e^{h}\dz = \epsilon  \int_\Zcal h' \frac{e^h}{\int_\Zcal e^h}\dz + o(\epsilon).
\end{equation}

Define an auxiliary function 
\begin{equation} \nonumber
T(\epsilon) \coloneqq \log   \int_\Zcal \frac{e^h}{\int_\Zcal e^h } e^{\epsilon h'}\dz.
\end{equation}Notice that $T(0) = 0$ and $T$ is smooth as a function of $\epsilon$. Thus, by the Intermediate Value Theorem, 
\begin{align*}
\Phi^\star(h + \epsilon h') - \Phi^\star(h) &= T(\epsilon) - T(0) \\
&= \left( \epsilon - 0\right) \cdot \frac{\drm}{\drm\epsilon}T(\cdot) \bigg\rvert_{\epsilon'}  
\end{align*}for some $\epsilon' \in [0,\epsilon]$. A direct computation shows
\begin{align*}
\frac{d}{d\epsilon}T(\cdot) \bigg\rvert_{\epsilon'}  &=\int_\Zcal h'\frac{  e^{ h + \epsilon' h'}}{\int_\Zcal  e^{h + \epsilon' h'}} \dz.
\end{align*}Hence it suffices to prove $\frac{  e^{h + \epsilon' h'}}{\int_\Zcal  e^{ h + \epsilon' h'}} = \frac{  e^{h }}{\int_\Zcal  e^{h }}  + o(1)$ in $\epsilon$. To this end, let $C = \sup | h'| < \infty$. Then
\begin{equation}
\frac{  e^{h }}{\int_\Zcal  e^{h }} e^{-2\epsilon' C}\leq \frac{  e^{h + \epsilon' h'}}{\int_\Zcal  e^{h + \epsilon' h'}}\leq \frac{  e^{h }}{\int_\Zcal  e^{h }}e^{2\epsilon' C}.\nonumber
\end{equation}It remains to use $e^t = 1+ t + o(t)$ and $\epsilon' \leq \epsilon$.

\item Let $\dmu = \rho\dz$ and $\dmu' = \rho'\dz$. We compute
\begin{align*}
D_\Phi(\mu, \mu') &= \Phi(\mu) - \Phi(\mu') -  \ip{\mu-\mu'}{\drm \Phi(\mu')} \\
&= \int_\Zcal  \rho\log\rho\dz -\int_\Zcal  \rho'\log\rho'\dz - \ip{\mu-\mu'}{1+\log \rho'}  && \text{by (\ref{eq:entropy_derivative})}\\
&=\int_\Zcal {\rho} \log\frac{\rho}{\rho'} \dz\\
& = \int_\Zcal {\drm\mu} \log\frac{\drm\mu}{\drm\mu'} .
\end{align*}

\item Define $\mu_\lambda = \lambda \mu + (1-\lambda)\mu'$. By (\ref{eq:bregman_entropy}) and the classical Pinsker's inequality \cite{gray2011entropy}, we have
\begin{align}
\Phi(\mu) &\geq \Phi(\mu_\lambda) + \ip{(1-\lambda)(\mu-\mu')}{ \drm \Phi(\mu_\lambda)} + 2\| (1-\lambda) (\mu-\mu')\|^2_{\textup{TV}},\\
\Phi(\mu') &\geq \Phi(\mu_\lambda) + \ip{\lambda(\mu'-\mu)}{ \drm \Phi(\mu_\lambda)} + 2\| \lambda (\mu-\mu')\|^2_{\textup{TV}}.
\end{align}Equation (\ref{eq:entropy_strongly_convex}) follows by multiplying with $\lambda$ and $1-\lambda$ respectively and summing the two inequalities up.

\item 
Let $\mu = \rho \dz$ and $\mu' = \rho' \dz$. Then, by the definition of Bregman divergence and (\ref{eq:entropy_derivative}), (\ref{eq:lse_derivative}),
\begin{align*}
D_{\Phi^\star} (\drm \Phi(\mu'), \drm \Phi(\mu)) &= \Phi^\star (  \drm \Phi(\mu') )  - \Phi^\star( \drm \Phi(\mu)) - \ip{\frac{e^{1+\log \rho }  \dz }{\int_\Zcal e^{1+\log \rho} }  }{1+\log \rho' -1- \log \rho } \\
&= \log \int_\Zcal e^{1+\log \rho'} - \log \int_\Zcal e^{1+\log \rho} + \int_\Zcal \rho \log\frac{ \rho}{\rho'} \\
&= \int_\Zcal \rho \log\frac{ \rho}{\rho'} = D_\Phi(\mu , \mu')
\end{align*}since $\int_\Zcal \rho \dz= \int_\Zcal \rho'\dz = 1$. This proves the first equality. 

For the second equality, we write
\begin{align*}
D_{\Phi^\star} (\drm \Phi(\mu') + C, \drm \Phi(\mu)) &= \Phi^\star (  \drm \Phi(\mu') +C )  - \Phi^\star( \drm \Phi(\mu)) - \ip{\frac{e^{1+\log \rho} \dz }{\int_\Zcal e^{1+\log \rho}}}{1+\log \rho'+C -1- \log \rho } \\
&= \log \int_\Zcal e^{1+\log \rho' +C} - \log \int_\Zcal e^{1+\log \rho} + \int_\Zcal \rho \log\frac{ \rho}{\rho'} -C \\
&= \int_\Zcal \rho \log\frac{ \rho}{\rho'}  \\
&= D_\Phi(\mu , \mu') = D_{\Phi^\star} (\drm \Phi(\mu'), \drm \Phi(\mu))
\end{align*}where we have used the first equality in the last step.

\item Let $\mu_h = \drm \Phi^\star(h) $, $\mu_{h'} = \drm \Phi^\star(h') $, and $\mu_\lambda = \lambda \mu_h + (1-\lambda)\mu_{h'}$ for some $\lambda \in (0,1)$. By Pinsker's inequality and (\ref{eq:bregman_entropy}), we have 
\begin{align}
\Phi(\mu_\lambda) &\geq \Phi(\mu_h) + \ip{\mu_\lambda - \mu_h}{ \drm \Phi (\mu_h)} + 2\|  \mu_\lambda-\mu_h\|^2_{\textup{TV}}, \label{eq:hold1}\\
\Phi(\mu_\lambda) &\geq \Phi(\mu_{h'}) + \ip{\mu_\lambda - \mu_{h'}}{ \drm \Phi (\mu_{h'})} + 2\|  \mu_\lambda-\mu_{h'}\|^2_{\textup{TV}}. \label{eq:hold2}
\end{align}

Now, notice that 
\begin{align*}
\ip{\mu_\lambda - \mu_h}{ \drm \Phi (\mu_h)} &= \ip{\mu_\lambda - \mu_h}{ \drm \Phi (    \drm \Phi^\star(h) )} \\
&= \ip{\mu_\lambda - \mu_h}{   \drm \Phi  \left(   \frac{e^h \dz}{\int_\Zcal e^h} \right)  } && \textup{by (\ref{eq:lse_derivative})}\\
&= \ip{\mu_\lambda - \mu_h}{   1+h- \log \int_\Zcal e^h } && \textup{by (\ref{eq:entropy_derivative})}\\
&= \ip{\mu_\lambda - \mu_h}{  h}
\end{align*}and, similarly, we have $\ip{\mu_\lambda - \mu_{h'}}{ \drm \Phi (\mu_{h'})} =  \ip{\mu_\lambda - \mu_{h'}}{ h'}$. Multiplying (\ref{eq:hold1}) by $\lambda$ and (\ref{eq:hold2}) by $1-\lambda$, summing the two up, and using the above equalities, we get
\begin{align} \nonumber
\Phi(\mu_\lambda) - \Big( \lambda \Phi(\mu_h) + (1-\lambda) \Phi(\mu_{h'}) \Big) + \lambda(1-\lambda) \ip{ \mu_h - \mu_{h'}  }{ h-h'} \geq 2 \lambda(1-\lambda) \TV{\mu_h-\mu_{h'}}^2.
\end{align}By (\ref{eq:entropy_strongly_convex}), we know that 
\beq  \nn
\Phi(\mu_\lambda) - \Big( \lambda \Phi(\mu_h) + (1-\lambda) F(\mu_{h'}) \Big) \leq -2\lambda(1-\lambda) \TV{\mu_h-\mu_{h'}}^2.
\eeq 
Moreover, by definition of the total variation norm, it is clear that
\beq \label{eq:holder}
\ip{ \mu_h - \mu_{h'}  }{ h-h'} \leq \TV{\mu_h-\mu_{h'}} \Linf{h-h'}.
\eeq
Combing the last three inequalities gives (\ref{eq:Fstar_smooth_gradient_formulation}).

\item Let $K$ be a positive integer and $k\in \{0, 1,2, \dots, K\}$. Set $\lambda_k = \frac{k}{K}$ and $h'' = h-h'$. Then
\begin{align}
\Phi^\star(  h ) - \Phi^\star(h') &= \Phi^\star(  h' + \lambda_K h'' ) - \Phi^\star( h'+\lambda_0 h''   ) \nn \\
&= \sum_{k=0}^{K-1} \Big(     \Phi^\star(  h' + \lambda_{k+1} h'' ) - \Phi^\star( h'+\lambda_k h''   ) \Big). \label{eq:hold5}
\end{align}By convexity of $\Phi^\star$, we have
\begin{align}
\Phi^\star( h' + \lambda_{k+1} h'') - \Phi^\star( h'+\lambda_k h''   )  &\leq \ip{  \drm \Phi^\star( h'+\lambda_{k+1}h'' )  }{   (\lambda_{k+1} - \lambda_k) h'' } \nn\\
&= \frac{1}{K}\ip{  \drm \Phi^\star( h'+\lambda_{k+1}h'' )  }{  h'' }. \label{eq:hold4}
\end{align}

By (\ref{eq:holder}) and (\ref{eq:Fstar_smooth_gradient_formulation}), we may further upper bound (\ref{eq:hold4}) as
\begin{align}
\Phi^\star( h' + \lambda_{k+1} h'') - \Phi^\star( h'+\lambda_k h''   )  &\leq \frac{1}{K}\Big(  \ip{\drm \Phi^\star(h')}{h''} + \ip{\drm \Phi^\star( h'+\lambda_{k+1}h'' )-\drm \Phi^\star(h')}{h''} \Big) \nn\\
&\leq \frac{1}{K}\Big( \ip{\drm \Phi^\star(h')}{h''} +  \TV{\drm \Phi^\star( h'+\lambda_{k+1}h'' )-\drm \Phi^\star(h')} \Linf{ h''} \Big) \nn \\
&\leq  \frac{1}{K}\Big( \ip{\drm \Phi^\star(h')}{h''} +  \frac{\lambda_{k+1}}{4}\Linf{ h''}^2 \Big). \label{eq:hold3}
\end{align}Summing up (\ref{eq:hold3}) over $k$, we get, in view of (\ref{eq:hold5}),
\begin{align}
\Phi^\star(  h ) - \Phi^\star(h') &\leq \ip{\drm \Phi^\star(h')}{h''} + \frac{1}{4}\Linf{ h''}^2 \sum_{k=0}^{K-1}\lambda_{k+1} \nn\\
&=\ip{\drm \Phi^\star(h')}{h''} + \frac{1}{4} \cdot\frac{K+1}{2K}\Linf{ h''}^2 . \label{eq:hold6}
\end{align}Since $K$ is arbitrary, we may take $K\rightarrow \infty$ in (\ref{eq:hold6}), which is (\ref{eq:Fstar_smooth}).

\item Straightforward calculation shows
\beq \nn
D_{\Phi^\star}(h,h') = \log \int_\Zcal e^h - \log \int_\Zcal e^{h'} - \int_\Zcal \frac{e^{h'}}{\int e^{h'}} \left(h-h'\right).
\eeq
On the other hand, by definition of the Bregman divergence and (\ref{eq:entropy_derivative}), (\ref{eq:lse_derivative}), we have
\begin{align}
D_\Phi( \drm \Phi^\star(h'),  \drm \Phi^\star(h)) &= \int_\Zcal \frac{e^{h'}}{\int_\Zcal e^{h'}} h' - \log \int_\Zcal e^{h'} -  \int_\Zcal \frac{e^h}{\int_\Zcal e^h} h + \log \int_\Zcal e^h \nn \\ &\hspace{70pt} -  \int_\Zcal \left(1+h-\log \int_\Zcal e^h \right)\left( \frac{e^{h'}}{\int_\Zcal e^{h'}}-  \frac{e^h}{\int_\Zcal e^h} \right) \nonumber \\
&=  \int_\Zcal \frac{e^{h'}}{\int e^{h'}} \left({h'}-h\right) -   \log \int_\Zcal e^{h'} + \log \int_\Zcal e^h  \nn \\
&= \Phi^\star(h) - \Phi^\star(h') - \ip{    \drm \Phi^\star(h') }{h-h'}  \nn \\
&= D_{\Phi^\star}(h,h'). \nn
\end{align}

\item By definition of the Bregman divergence, we have
\begin{align*}
D_\Phi(\mu,\mu') &= \Phi(\mu) - \Phi(\mu') - \ip{ \mu-\mu' }{ \drm \Phi(\mu') },\\
D_\Phi(\mu'',\mu) &= \Phi(\mu'') - \Phi(\mu) - \ip{ \mu''-\mu }{ \drm\Phi(\mu) },\\
D_\Phi(\mu'',\mu') &= \Phi(\mu'') - \Phi(\mu') - \ip{ \mu'' -\mu' }{ \drm \Phi(\mu') }.
\end{align*}Equation (\ref{eq:three-point}) then follows by straightforward calculations.

\item First, let $\mu_+ = \MD{\mu}{h}$. Then if $\mu_+ = \rho_+\dz$ and $\mu=\rho\dz$, then (\ref{eq:MD_iterates}) implies
\beq
\rho_+ = \frac{\rho e^{-\eta h}}{\int_\Zcal \rho e^{-\eta h}}. \nn
\eeq  
By (\ref{eq:entropy_derivative}), we therefore have 
\begin{align*}
\drm \Phi( \mu_+) &= 1+\log \rho_+ \\
&= 1+\log\rho - \eta h - \log \int_\Zcal \rho e^{-\eta h}
\end{align*}whence (\ref{eq:mirror-descent_optimality}) holds with $C=- \log \int_\Zcal \rho e^{-\eta h}$.

Conversely, assume that $ \drm \Phi(\mu_+) = \drm \Phi(\mu) - \eta h + C$ for some constant $C$, and apply $\drm\Phi^\star$ to both sides. The left-hand side becomes
\begin{align*}
\drm \Phi^\star\Big(   \drm\Phi( \mu_+) \Big) &= \drm \Phi^\star( 1+ \log \rho_+) \\
&= \frac{\rho_+\dz}{\int\rho_+ \dz} = \rho_+\dz = \dmu_+,
\end{align*}where as the formula (\ref{eq:lse_derivative}) implies that
\begin{align*}
\drm \Phi^\star\left(\drm \Phi(\mu) - \eta h + C\right) &= \frac{e^{1+\log\rho - \eta h +C}}{\int_\Zcal e^{1+\log\rho - \eta h +C}} \dz\\
&= \frac{\rho e^{-\eta h} \dz}{\int_\Zcal \rho e^{-\eta h}} \\
&= \frac{ e^{-\eta h} \dmu}{\int_\Zcal  e^{-\eta h}\dmu} .
\end{align*}Combining the two equalities gives $\dmu_+ = \frac{ e^{-\eta h} \dmu}{\int_\Zcal  e^{-\eta h}\dmu} $ which exactly means $\mu_+ = \MD{\mu}{h}$.
\end{enumerate}

\end{proof}

%

%
\section{Proof of Convergence Rates for Infinite-Dimensional Mirror Descent}\label{app:proof_inf_mirror-descent}
\subsection{Mirror Descent, Deterministic Derivatives}

By the definition of the algorithm, (\ref{eq:mirror-descent_optimality}), and the three-point identity (\ref{eq:three-point}), we have, for any $\mu \in \Mcal(\Wcal)$,
\begin{align}
\ip{\mu_t-\mu}{-g+G{\nu_t}} &= \frac{1}{\eta}\ip{\mu_t-\mu}{   \drm \Phi(\mu_t) - \drm \Phi(\mu_{t+1}) } \nn\\
&= \frac{1}{\eta} \Big( D_\Phi( \mu, \mu_t ) - D_\Phi (\mu, \mu_{t+1} ) + D_\Phi(\mu_{t}, \mu_{t+1}) \Big). 
\end{align}
By item 10 of \textbf{Theorem \ref{thm:infMD_foundations}}, there exists a constant $C_t$ such that 
\beq \label{eq:random_hold}
\drm\Phi(\mu_{t+1}) = \drm \Phi(\mu_t) - \eta \left( -g + G \nu_t \right) +C_t.
\eeq
Using (\ref{eq:duality_inf}), we see that
\begin{align}
D_\Phi(\mu_{t}, \mu_{t+1}) &= D_{\Phi^\star}(\drm\Phi(\mu_{t+1}), \drm \Phi( \mu_{t}) ) \nn\\
&= D_{\Phi^\star}\Big(\drm\Phi(\mu_{t+1}) - C_t, \drm \Phi( \mu_{t}) \Big) \nn\\
&\leq \frac{1}{8}\Linf{ \drm\Phi(\mu_{t+1}) - C_t-  \drm \Phi( \mu_{t})}^2 && \textup{by (\ref{eq:Fstar_smooth})} \nn \\
&= \frac{\eta^2}{8}\Linf{ -g + G{\nu_t}}^2 && \textup{by (\ref{eq:random_hold})} \nn \\
&\leq  \frac{\eta^2M^2}{8} . \nn
\end{align}
Consequently, we have
\begin{align}
\sum_{t=1}^T \ip{\mu_t-\mu}{-g+G{\nu_t}} &= \sum_{t=1}^T \frac{1}{\eta} \Big( D_\Phi( \mu, \mu_t ) - D_\Phi (\mu, \mu_{t+1} ) + D_\Phi(\mu_{t}, \mu_{t+1}) \Big) \nn\\
&\leq \frac{D_\Phi(\mu,\mu_1)}{\eta} + \frac{\eta M^2T}{8}. \label{eq:random_hold1}
\end{align}
Exactly the same argument applied to $\nu_t$'s yields, for any $\nu\in\Mcal(\Theta)$,
\begin{align}
\sum_{t=1}^T \ip{\nu_t-\nu}{ - G^\dag {\mu_t}} \leq \frac{D_\Phi(\nu,\nu_1)}{\eta} + \frac{\eta M^2T}{8}. \label{eq:random_hold2}
\end{align} Summing up (\ref{eq:random_hold1}) and (\ref{eq:random_hold2}), substituting $\mu \leftarrow \mu_\textup{NE}, \nu \leftarrow \nu_\textup{NE}$ and dividing by $T$, we get
\beq
\frac{1}{T}\sum_{t=1}^T  \Big( \ip{\mu_t-\mu_\textup{NE}}{ -g + G \nu_t} + \ip{\nu_t-\nu_\textup{NE}}{ - G^\dag \mu_t)}  \Big) \leq \frac{D_0}{\eta T} + \frac{\eta M^2}{4}. \label{eq:random_hold3}
\eeq

The left-hand side of (\ref{eq:random_hold3}) can be simplified to
\begin{align}
\frac{1}{T}\sum_{t=1}^T \Big(\ip{\mu_t-\mu_\textup{NE}}{ -g + G \nu_t} + \ip{\nu_t-\nu_\textup{NE}}{ - G^\dag \mu_t} \Big) &= \frac{1}{T}\sum_{t=1}^T \Big( \ip{\mu_\textup{NE}-\mu_t}{g} - \ip{\mu_\textup{NE}}{ G\nu_t} + \ip{\mu_t}{ G \nu_\textup{NE}} \Big) \nn \\
&=   \ip{\mu_\textup{NE}}{g-G\bar{\nu}_T}  - \ip{\bar{\mu}_T}{g-G\nu_\textup{NE}} . \label{eq:random_hold4}
\end{align}
By definition of the Nash Equilibrium, we have
\begin{align}
&\ip{\bar{\mu}_T}{g- G {\nu}_\textup{NE}} \leq\ip{\mu_\textup{NE}}{g- G {\nu}_\textup{NE}} \leq \ip{\mu_\textup{NE}}{g- G \bar{\nu}_T}, \\ 
&\ip{\bar{\mu}_T}{g- G {\nu}_\textup{NE}}\leq\hspace{2mm} \ip{\bar{\mu}_T}{g- G \bar{\nu}_T}\hspace{2mm} \leq  \ip{\mu_\textup{NE}}{g- G \bar{\nu}_T}, \nn
\end{align}which implies
\begin{align}\label{eq:random_hold5}
\left|\ip{\bar{\mu}_T}{g- G \bar{\nu}_T} - \ip{\mu_\textup{NE}}{g- G {\nu}_\textup{NE}}\right| \leq \ip{\mu_\textup{NE}}{g- G \bar{\nu}_T}-\ip{\bar{\mu}_T}{g- G {\nu}_\textup{NE}}.
\end{align}Combining (\ref{eq:holdmybeer3})-(\ref{eq:holdmybeer5}), we conclude that
\beq \nn
\eta = \frac{2}{M}\sqrt{ \frac{D_0}{T} } \quad \Rightarrow  \quad \left|\ip{\bar{\mu}_T}{g- G \bar{\nu}_T} - \ip{\mu_\textup{NE}}{g- G {\nu}_\textup{NE}}\right| \leq M\sqrt{\frac{D_0}{T}}.
\eeq

\subsection{Mirror Descent, Stochastic Derivatives}
We first write
\begin{align}\nn
\ip{\mu_t-\mu}{\eta( -\hat{g}+ \hat{G}{\nu_t} )} &= \ip{\mu_t-\mu}{\eta( -{g}+ {G}{\nu_t} )}  + \ip{\mu_t-\mu}{\eta\Big[ -\hat{g}+ \hat{G}{\nu_t}  + g - G\nu_t\Big]}.
\end{align}Taking conditional expectation and using the unbiasedness of stochastic derivatives, we conclude that 
\begin{align}\nn
\EE \ip{\mu_t-\mu}{\eta( -\hat{g}+ \hat{G}{\nu_t} )} &= \ip{\mu_t-\mu}{\eta( -{g}+ {G}{\nu_t} )}.
\end{align}Therefore, using exactly the same argument leading to (\ref{eq:random_hold1}), we may obtain
\beq \nn
\EE \sum_{t=1}^T \ip{\mu_t-\mu}{- \hat{g}+\hat{G}{\nu_t}} \leq \frac{\EE D_\Phi(\mu,\mu_1)}{\eta} + \frac{\eta M'^2T}{8} .
\eeq
The rest is the same as with deterministic derivatives.

\section{Proof of Convergence Rates for Infinite-Dimensional Mirror-Prox}\label{app:proof_inf_mirror-prox}

We first need a technical lemma, which is \textbf{Lemma 6.2} of \cite{juditsky2011first2} tailored to our infinite-dimensional setting. We give a slightly different proof.
\begin{lemma}\label{lem:mirror-prox_master_lemma}
Given any $\mu\in \Mcal(\Zcal)$ and $h,h'\in\Fcal(\Zcal)$, let $\mu = \MD{\tmu}{h}$ and $\tmu_+ = \MD{\tmu}{h'}$. Let $\Phi$ be $\alpha$-strongly convex (recall that $\alpha = 4$ when $\Phi$ is the entropy). Then, for any $\mu_\star \in \Mcal(\Zcal)$, we have
\beq \label{eq:mirror-prox_master_inequality}
\ip{\mu - \mu_\star}{\eta h'} \leq D_\Phi(\mu_\star, \tmu) - D_\Phi(\mu_\star,\tmu_+) + \frac{\eta^2}{2\alpha} \Linf{h-h'}^2 -\frac{\alpha}{2}\TV{\mu-\tmu}^2.
\eeq
\end{lemma}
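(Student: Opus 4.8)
The plan is to reproduce the classical Mirror-Prox ``master inequality'' (Lemma~6.2 of \cite{juditsky2011first2}) in the entropic, infinite-dimensional language, using only the tools collected in \textbf{Theorem~\ref{thm:infMD_foundations}}: the mirror-descent optimality relation (\ref{eq:mirror-descent_optimality}), the three-point identity (\ref{eq:three-point}), the $\alpha$-strong convexity of $\Phi$ in total variation norm (equivalently Pinsker's inequality, cf.\ (\ref{eq:entropy_strongly_convex})), and the Hölder-type bound (\ref{eq:holder}). Note first that $\mu,\tmu_+\in\Mcal(\Zcal)$ by the remark in Appendix~\ref{app:regularity}, so all Bregman divergences below are well defined.

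First I would split
\[
\ip{\mu - \mu_\star}{\eta h'} = \ip{\mu - \tmu_+}{\eta h} + \ip{\mu - \tmu_+}{\eta (h' - h)} + \ip{\tmu_+ - \mu_\star}{\eta h'} .
\]
Since $\mu = \MD{\tmu}{h}$, relation (\ref{eq:mirror-descent_optimality}) rewrites $\ip{\mu - \tmu_+}{\eta h} = \ip{\mu - \tmu_+}{\drm\Phi(\tmu) - \drm\Phi(\mu)}$, and the three-point identity (\ref{eq:three-point}) applied to the triple $(\mu,\tmu,\tmu_+)$ (up to an overall sign) turns the right-hand side into $D_\Phi(\tmu_+,\tmu) - D_\Phi(\mu,\tmu) - D_\Phi(\tmu_+,\mu)$. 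Likewise, since $\tmu_+ = \MD{\tmu}{h'}$, the same two steps with the triple $(\tmu_+,\tmu,\mu_\star)$ give $\ip{\tmu_+ - \mu_\star}{\eta h'} = D_\Phi(\mu_\star,\tmu) - D_\Phi(\mu_\star,\tmu_+) - D_\Phi(\tmu_+,\tmu)$. The additive constants appearing in (\ref{eq:mirror-descent_optimality}) drop out because they are paired against the zero-mass signed measures $\mu - \tmu_+$ and $\tmu_+ - \mu_\star$. Summing the three pieces, the two occurrences of $D_\Phi(\tmu_+,\tmu)$ cancel and I obtain the exact identity
\[
\ip{\mu - \mu_\star}{\eta h'} = D_\Phi(\mu_\star,\tmu) - D_\Phi(\mu_\star,\tmu_+) - D_\Phi(\mu,\tmu) - D_\Phi(\tmu_+,\mu) + \ip{\mu - \tmu_+}{\eta(h'-h)} .
\]

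It then remains to control the cross term against $D_\Phi(\tmu_+,\mu)$. By (\ref{eq:holder}), $\ip{\mu - \tmu_+}{\eta(h'-h)} \le \eta\,\TV{\mu - \tmu_+}\,\Linf{h'-h}$; strong convexity gives $D_\Phi(\tmu_+,\mu) \ge \tfrac{\alpha}{2}\TV{\mu - \tmu_+}^2$; and Young's inequality $ab - \tfrac{\alpha}{2}b^2 \le \tfrac{a^2}{2\alpha}$ with $a = \eta\Linf{h'-h}$, $b = \TV{\mu - \tmu_+}$ bounds $\ip{\mu - \tmu_+}{\eta(h'-h)} - D_\Phi(\tmu_+,\mu) \le \tfrac{\eta^2}{2\alpha}\Linf{h-h'}^2$. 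Using strong convexity once more for $-D_\Phi(\mu,\tmu) \le -\tfrac{\alpha}{2}\TV{\mu-\tmu}^2$ and leaving $-D_\Phi(\mu_\star,\tmu_+)$ untouched yields (\ref{eq:mirror-prox_master_inequality}).

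I do not expect a genuine obstacle: the statement is really an exercise in reorganizing the three-point identity, and every inequality it needs is already proved. The one place demanding care is the bookkeeping — deciding which nonnegative Bregman terms to cancel and which to keep (in particular retaining $D_\Phi(\tmu_+,\mu)$ to absorb the cross term and $D_\Phi(\mu,\tmu)$ to supply the final $\TV{\mu-\tmu}^2$ slack) — together with checking that the normalizing constants of the mirror maps genuinely vanish when paired with differences of probability measures.
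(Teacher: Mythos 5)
Your proof is correct and follows essentially the same route as the paper's: the identical three-term decomposition of $\ip{\mu-\mu_\star}{\eta h'}$, the mirror-descent optimality relation combined with the three-point identity on the same two triples, and then H\"older plus Young's inequality with the strong-convexity lower bounds on $D_\Phi(\tmu_+,\mu)$ and $D_\Phi(\mu,\tmu)$. The only differences are cosmetic (ordering of the terms and phrasing Young's inequality as $ab-\tfrac{\alpha}{2}b^2\le\tfrac{a^2}{2\alpha}$).
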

\begin{proof}
Recall from (\ref{eq:entropy_strongly_convex}) that entropy is $\alpha$-strongly convex with respect to $\TV{\cdot}$. We first write
\begin{align} \label{eq:hold_myass}
\ip{\mu - \mu_\star}{\eta h'}  = \ip{\tmu_+ - \mu_\star}{\eta h'} + \ip{\mu - \tmu_+}{\eta h} +\ip{\mu - \tmu_+}{\eta( h' -h)} .
\end{align}
For the first term, (\ref{eq:three-point}) and (\ref{eq:mirror-descent_optimality}) implies
\begin{align}
\ip{\tmu_+ - \mu_\star}{\eta h'} &= \ip{\tmu_+ - \mu_\star}{\drm \Phi(\tmu) - \drm \Phi(\tmu_+)} \nn\\
&= -D_\Phi(\tmu_+,\tmu) - D_\Phi(\mu_\star, \tmu_+) + D_\Phi(\mu_\star, \tmu).\label{eq:holdmyass1}
\end{align}Similarly, the second term of the right-hand side of (\ref{eq:hold_myass}) can be written as
\begin{align}
\ip{\mu - \tmu_+}{\eta h} =-D_\Phi(\mu,\tmu) - D_\Phi(\tmu_+, \mu) + D_\Phi(\tmu_+, \tmu).\label{eq:holdmyass2}
\end{align}H\"{o}lder's inequality for the third term gives
\begin{align} 
\ip{\mu - \tmu_+}{\eta( h' -h)} &\leq \TV{\mu - \tmu_+} \Linf{\eta( h' -h)} \nn \\
&\leq   \frac{\alpha}{2} \TV{\mu - \tmu_+}^2 + \frac{1}{2\alpha} \Linf{\eta( h' -h)}^2 .\label{eq:holdmyass3}
\end{align}Finally, recall that $\Phi$ is $\alpha$-strongly convex, and hence we have
\beq \label{eq:holdmyass4}
-D_\Phi(\tmu_+,\mu) \leq  -\frac{\alpha}{2} \TV{\mu-\tmu_+}^2  ,\quad -D_\Phi(\mu,\tmu) \leq  -\frac{\alpha}{2} \TV{\mu-\tmu}^2.
\eeq
The lemma follows by combining inequalities (\ref{eq:holdmyass1})-(\ref{eq:holdmyass4}) in (\ref{eq:hold_myass}).
\end{proof}

\subsection{Mirror-Prox, Deterministic Derivatives}

Let $\alpha = 4$, $\bar{\mu}_T \coloneqq \frac{1}{T}\sum_{t=1}^T\mu_t$, and $\bar{\nu}_T \coloneqq \frac{1}{T}\sum_{t=1}^T\nu_t$.

In \textbf{Lemma~\ref{lem:mirror-prox_master_lemma}}, substituting $\mu_\star\leftarrow \mu_\textup{NE}$, $\tmu \leftarrow \tmu_t, h \leftarrow -g+G \tnu_t$ (so that $\mu = \mu_t$) and $h'  \leftarrow -g+G \nu_t$ (so that $\tmu_+ = \tmu_{t+1}$), we get
\beq \label{eq:holdmybeer}
\ip{\mu_t-\mu_\textup{NE}}{ \eta(-g + G \nu_t)} \leq D_\Phi(\mu_\textup{NE},\tmu_t) - D_\Phi(\mu_\textup{NE},\tmu_{t+1}) +\frac{\eta^2}{2\alpha}\Linf{G(\nu_t-\tnu_t)}^2 -\frac{\alpha}{2} \TV{\tmu_t-\mu_t}^2.
\eeq
Similarly, we have
\beq \label{eq:holdmybeer1}
\ip{\nu_t-\nu_\textup{NE}}{ -\eta G^\dag \mu_t} \leq D_\Phi(\nu_\textup{NE},\tnu_t) - D_\Phi(\nu_\textup{NE},\tnu_{t+1}) +\frac{\eta^2}{2\alpha}\Linf{G^\dag(\mu_t-\tmu_t)}^2 -\frac{\alpha}{2} \TV{\tnu_t-\nu_t}^2.
\eeq

Since $\Linf{G(\nu_t-\tnu_t)} \leq L\cdot \TV{\nu_t-\tnu_t} $ and $\Linf{G^\dag(\mu_t-\tmu_t)} \leq L\cdot \TV{\mu_t-\tmu_t} $, summing up (\ref{eq:holdmybeer}) and (\ref{eq:holdmybeer1}) yields
\begin{align}
\ip{\mu_t-\mu_\textup{NE}}{ \eta(-g + G \nu_t)} + \ip{\nu_t-\nu_\textup{NE}}{ - \eta G^\dag \mu_t)} &\leq  D_\Phi(\mu_\textup{NE},\tmu_t) - D_\Phi(\mu_\textup{NE},\tmu_{t+1}) +  D_\Phi(\nu_\textup{NE},\tnu_t) - D_\Phi(\nu_\textup{NE},\tnu_{t+1}) \nn \\ &   \hspace{15pt} +  \left(\frac{\eta^2 L^2}{2\alpha} - \frac{\alpha}{2}\right)\left(\TV{\tmu_t-\mu_t}^2 + \TV{\tnu_t-\nu_t}^2 \right) \nn \\
&\leq  D_\Phi(\mu_\textup{NE},\tmu_t) - D_\Phi(\mu_\textup{NE},\tmu_{t+1}) +  D_\Phi(\nu_\textup{NE},\tnu_t) - D_\Phi(\nu_\textup{NE},\tnu_{t+1}) \nn
\end{align}if $\eta \leq \frac{\alpha}{L} = \frac{4}{L}$.
Summing up the last inequality over $t$ and using $D_\Phi(\cdot,\cdot) \geq 0$, we obtain
\beq \label{eq:holdmybeer2}
\frac{1}{T}\sum_{t=1}^T  \Big( \ip{\mu_t-\mu_\textup{NE}}{ \eta(-g + G \nu_t)} + \ip{\nu_t-\nu_\textup{NE}}{ - \eta G^\dag \mu_t)}\Big) \leq \frac{D_\Phi(\mu_\textup{NE},\tmu_1) + D_\Phi(\nu_\textup{NE},\tnu_1)}{T} = \frac{D_0}{T}.
\eeq 
The left-hand side of (\ref{eq:holdmybeer2}) can be simplified to
\begin{align}
\frac{1}{T}\sum_{t=1}^T \Big(\ip{\mu_t-\mu_\textup{NE}}{ \eta(-g + G \nu_t)} + \ip{\nu_t-\nu_\textup{NE}}{ -\eta G^\dag \mu_t)}) &= \frac{\eta }{T}\sum_{t=1}^T \Big(\ip{\mu_\textup{NE}-\mu_t}{g} - \ip{\mu_\textup{NE}}{ G\nu_t} + \ip{\mu_t}{ G \nu_\textup{NE}} \Big) \nn \\
&=  {\eta} \Big(  \ip{\mu_\textup{NE}}{g-G\bar{\nu}_T}  - \ip{\bar{\mu}_T}{g-G\nu_\textup{NE}} \Big). \label{eq:holdmybeer3}
\end{align}
By definition of the $(\mu_\textup{NE},\nu_\textup{NE})$, we have
\begin{align}
&\ip{\bar{\mu}_T}{g- G {\nu}_\textup{NE}} \leq\ip{\mu_\textup{NE}}{g- G {\nu}_\textup{NE}} \leq \ip{\mu_\textup{NE}}{g- G \bar{\nu}_T}, \\ 
&\ip{\bar{\mu}_T}{g- G {\nu}_\textup{NE}}\leq\hspace{2mm} \ip{\bar{\mu}_T}{g- G \bar{\nu}_T}\hspace{2mm} \leq  \ip{\mu_\textup{NE}}{g- G \bar{\nu}_T},\nn
\end{align}which implies
\begin{align}\label{eq:holdmybeer4}
|\ip{\bar{\mu}_T}{g- G \bar{\nu}_T} - \ip{\mu_\textup{NE}}{g- G {\nu}_\textup{NE}} |\leq \ip{\mu_\textup{NE}}{g- G \bar{\nu}_T}-\ip{\bar{\mu}_T}{g- G {\nu}_\textup{NE}}.
\end{align}Combining (\ref{eq:holdmybeer2})-(\ref{eq:holdmybeer4}), we conclude
\beq \nn
\eta \leq \frac{4}{L} \quad \Rightarrow  \quad |\ip{\bar{\mu}_T}{g- G \bar{\nu}_T} - \ip{\mu_\textup{NE}}{g- G {\nu}_\textup{NE}}| \leq  \frac{D_0}{T\eta} .
\eeq

\subsection{Mirror-Prox, Stochastic Derivatives}
Let $\alpha = 4$, $\bar{\mu}_T \coloneqq \frac{1}{T}\sum_{t=1}^T\mu_t$, and $\bar{\nu}_T \coloneqq \frac{1}{T}\sum_{t=1}^T\nu_t$. Set the step-size to $\eta = \min \left[  \frac{\alpha}{\sqrt{3}L}, \sqrt{\frac{\alpha D_0}{6 T\sigma^2}} \right]$.

In \textbf{Lemma~\ref{lem:mirror-prox_master_lemma}}, substituting $\mu_\star\leftarrow \mu_\textup{NE}$, $\tmu \leftarrow \tmu_t, h \leftarrow - \hat{g}+ \hat{G} \tnu_t$ (so that $\mu = \mu_t$), and $h'  \leftarrow -\hat{g}+ \hat{G} \nu_t$ (so that $\tmu_+ = \tmu_{t+1}$), we get
\beq \label{eq:holdmybeer5}
\ip{\mu_t-\mu_\textup{NE}}{ \eta(- \hat{g} + \hat{G} \nu_t)} \leq D_\Phi(\mu_\textup{NE},\tmu_t) - D_\Phi(\mu_\textup{NE},\tmu_{t+1}) +\frac{\eta^2}{2\alpha}\Linf{ \hat{G}\nu_t-\hat{G}\tnu_t}^2 -\frac{\alpha}{2} \TV{\tmu_t-\mu_t}^2.
\eeq

Note that 
\begin{align}
\EE \Linf{ \hat{G}\nu_t-\hat{G}\tnu_t}^2 &\leq 3\left(  \EE \Linf{ \hat{G}\nu_t-G\nu_t}^2 + \EE \Linf{ G\nu_t- G\tnu_t}^2 + \EE \Linf{ G\tnu_t-\hat{G}\tnu_t}^2   \right) \nn \\
&\leq 6\sigma^2 + 3L^2 \EE \TV{\nu_t-\tnu_t}^2. \nn
\end{align}
Therefore, taking expectation conditioned on the history for both sides of (\ref{eq:holdmybeer5}), we get
\beq
\ip{\mu_t-\mu_\textup{NE}}{ \eta(- g + G \nu_t)} \leq  \EE D_\Phi(\mu_\textup{NE},\tmu_t) -\EE D_\Phi(\mu_\textup{NE},\tmu_{t+1}) + \frac{3\eta^2\sigma^2}{\alpha} + \frac{3\eta^2 L^2}{2\alpha}\EE \TV{\nu_t-\tnu_t}^2 - \frac{\alpha}{2} \EE\TV{\tmu_t-\mu_t}^2. \nn
\eeq

Similarly, we have
\beq 
\ip{\nu_t-\nu_\textup{NE}}{ -\eta  G^\dag \mu_t} \leq \EE D_\Phi(\nu_\textup{NE},\tnu_t) - \EE D_\Phi(\nu_\textup{NE},\tnu_{t+1}) + \frac{3\eta^2\sigma^2}{\alpha} + \frac{3\eta^2 L^2}{2\alpha}\EE \TV{\mu_t-\tmu_t}^2-\frac{\alpha}{2} \EE \TV{\tnu_t-\nu_t}^2. \nn
\eeq
Summing up the last two inequalities over $t$ with $\eta \leq \frac{\alpha}{\sqrt{3}L}$ then yields
\beq \nn
\frac{1}{T}\sum_{t=1}^T \Big( \ip{\mu_t-\mu_\textup{NE}}{ -g + G \nu_t} + \ip{\nu_t-\nu_\textup{NE}}{ - G^\dag \mu_t)} \Big) \leq \frac{D_0}{\eta T} + \frac{6\eta \sigma^2}{\alpha} \leq \max\left[   2\sqrt{\frac{6\sigma^2D_0}{\alpha T}} , \frac{2\sqrt{3} L D_0}{\alpha T} \right]
\eeq 
by definition of $\eta$. The rest is the same as with deterministic derivatives.

\begin{algorithm}[h!]\label{alg:pseudo_MD}
   \caption{\textsc{Approx Inf Mirror Decent}}
   \KwIn{$W[1], \Theta[1] \leftarrow n'$ samples from random initialization, $\{\gamma_t\}_{t=1}^{T-1}, \{\epsilon_t\}_{t=1}^{T-1}, \{K\}_{t=1}^{T-1}, n, n'$.}
   \For{$t =1, 2, \dots ,T-1$}
   {
      $C\leftarrow  \cup_{s=1}^{t} W[s] , \quad D\leftarrow  \cup_{s=1}^{t} \Theta[s]$  \;
     $\vw_t^{(1)} \leftarrow \text{UNIF}({W}[t]), \quad \vtheta_t^{(1)} \leftarrow \text{UNIF}({\Theta}[t])$\;
      \For{$k=1,2, \dots, K_t, \dots, K_t+n'$}  
      {
         Generate $A=\{X_1, \dots, X_n\} \sim \PP_{\vtheta_t^{(k)}}$\;
         $\vtheta_t^{(k+1)} = \vtheta_t^{(k)} + \frac{\gamma_t }{nn'}\nabla_\vtheta \sum_{X_i \in A}\sum_{\vw\in C} f_\vw(X_i) + \sqrt{2\gamma_t} \epsilon_t\Ncal(0,I)  $\;
         Generate $B=\{X_1^\textup{real}, \dots, X^\textup{real}_n\} \sim \PP_{\textup{real}}$\;
         ${B'} \leftarrow \{\}$ \;
         \For{\textup{each} $\vtheta \in D$}
         {
             Generate $\tilde{B}=\{X'_1, \dots, X'_n\} \sim \PP_{\vtheta}$\;
             $B' \leftarrow B'\cup \tilde{B} $\;
         }
          \begin{align*}\vw_t^{(k+1)} = \vw_t^{(k)} +    \frac{\gamma_t t}{n}\nabla_\vw\sum_{X^\textup{real}_i \in B}  f_{\vw_t^{(k)}}(X^\textup{real}_i) - \frac{\gamma_t}{nn'} \nabla_\vw\sum_{X'_i \in B' } f_{\vw_t^{(k)}}(X'_i) + \sqrt{2\gamma_t}\epsilon_t\Ncal(0,I); \end{align*}
      }
      $W[t+1] \leftarrow\left\{\vw_t^{(K+1)} , \dots, \vw_t^{(K+n')} \right\}, \quad \Theta[t+1] \leftarrow \left\{  \vtheta_t^{(K+1)}, \dots, \vtheta_t^{(K+n')}  \right\}$\;
   }
  $\texttt{idx}\leftarrow \textup{UNIF}(1, 2, \dots, T)$\;
  return $W[\texttt{idx}], \Theta[\texttt{idx}]$.
\end{algorithm}

\begin{algorithm}[h!]\label{alg:pseudo_MP}
   \caption{\textsc{Approx Inf Mirror-Prox}}
   \KwIn{$\tilde{W}[1], \tilde{\Theta}[1] \leftarrow n'$ samples from random initialization, $\{\gamma_t\}_{t=1}^T, \{\epsilon_t\}_{t=1}^T, \{K_t\}_{t=1}^T, n, n'$.}
   \For{$t =1, 2, \dots ,T$}
   {
      $C\leftarrow \tilde{W}[t] \cup \left( \cup_{s=1}^{t-1} W[s] \right), \quad D\leftarrow \tilde{\Theta}[t] \cup \left( \cup_{s=1}^{t-1} \Theta[s] \right)$  \;
     $\vw_t^{(1)} \leftarrow \text{UNIF}(\tilde{W}[t]), \quad \vtheta_t^{(1)} \leftarrow \text{UNIF}(\tilde{\Theta}[t])$\;
      \For{$k=1,2, \dots, K_t, \dots, K_t+n'$}
      {
         Generate $A=\{X_1, \dots, X_n\} \sim \PP_{\vtheta_t^{(k)}}$\;
         $\vtheta_t^{(k+1)} = \vtheta_t^{(k)} + \frac{\gamma_t }{nn'}\nabla_\vtheta \sum_{X_i \in A}\sum_{\vw\in C} f_\vw(X_i) + \sqrt{2\gamma_t} \epsilon_t\Ncal(0,I)  $\;
         Generate $B=\{X^\textup{real}_1, \dots, X^\textup{real}_n\} \sim \PP_{\textup{real}}$\;
         $B' \leftarrow \{\}$\;
         \For{\textup{each} $\vtheta \in D$}
         {
             Generate $\tilde{B}=\{X'_1, \dots, X'_n\} \sim \PP_{\vtheta}$\;
             $B' \leftarrow B'\cup \tilde{B} $\;
         }
          \begin{align*}\vw_t^{(k+1)} = \vw_t^{(k)} +    \frac{\gamma_t t}{n}\nabla_\vw\sum_{X^\textup{real}_i \in B}  f_{\vw_t^{(k)}}(X^\textup{real}_i) - \frac{\gamma_t}{nn'} \nabla_\vw\sum_{X'_i \in B' } f_{\vw_t^{(k)}}(X'_i) + \sqrt{2\gamma_t}\epsilon_t \Ncal(0,I);
          \end{align*}
      }
      $W[t] \leftarrow \left\{\vw_t^{(K+1)} , \dots, \vw_t^{(K+n')} \right\}, \quad \Theta[t] \leftarrow \left\{  \vtheta_t^{(K+1)}, \dots, \vtheta_t^{(K+n')}  \right\}$\;
      \ \\
      \ \\
      $C'\leftarrow  \cup_{s=1}^{t} W[s], \quad D'\leftarrow  \cup_{s=1}^{t} \Theta[s]$  \;
      $\tilde{\vw}_{t+1}^{(1)} \leftarrow \text{UNIF}(\tilde{W}[t]), \quad \tilde{\vtheta}_{t+1}^{(1)} \leftarrow \text{UNIF}(\tilde{\Theta}[t])\;
      $\;
      \For{$k=1,2, \dots, K_t, \dots, K_t+n'$}
      {
         Generate $A=\{X_1, \dots, X_n\} \sim \PP_{\tilde{\vtheta}_t^{(k)}}$\;
         $\tilde{\vtheta}_{t+1}^{(k+1)} = \tilde{\vtheta}_{t+1}^{(k)} + \frac{\gamma_t }{nn'}\nabla_\vtheta \sum_{X_i \in A}\sum_{\vw\in C'} f_\vw(X_i) + \sqrt{2\gamma_t}\epsilon_t\Ncal(0,I)    $\;
         Generate $B=\{X^\textup{real}_1, \dots, X^\textup{real}_n\} \sim \PP_{\textup{real}}$\;
         $B' \leftarrow \{\}$\;
         \For{\textup{each} $\vtheta \in D'$}
         {
             Generate $\tilde{B}=\{X'_1, \dots, X'_n\} \sim \PP_{\vtheta}$\;
             $B' \leftarrow B'\cup \tilde{B} $\;
         }
          \begin{align*}
          \tilde{\vw}_{t+1}^{(k+1)} = \tilde{\vw}_{t+1}^{(k)} +    \frac{\gamma_t  t}{n}\nabla_\vw\sum_{X^\textup{real}_i \in B}  f_{\tilde{\vw}_{t+1}^{(k)}}(X^\textup{real}_i) - \frac{\gamma_t }{nn'} \nabla_\vw\sum_{X'_i \in B' } f_{\tilde{\vw}_{t+1}^{(k)}}(X'_i) + \sqrt{2\gamma_t} \epsilon_t\Ncal(0,I); 
          \end{align*}
      }
      $\tilde{W}[t+1] \leftarrow \left\{\tilde{\vw}_{t+1}^{(K+1)} , \dots, \tilde{\vw}_{t+1}^{(K+n')} \right\}, \quad \tilde{\Theta}[t+1] \leftarrow \left\{  \tilde{\vtheta}_{t+1}^{(K+1)}, \dots, \tilde{\vtheta}_{t+1}^{(K+n')} \right \}$\;
   }
  $\texttt{idx}\leftarrow \textup{UNIF}(1, 2, \dots, T)$\;
  return $W[\texttt{idx}], \Theta[\texttt{idx}]$.
\end{algorithm}

\begin{algorithm}[h!]\label{alg:pseudo_heuristic_MP}
   \caption{\textsc{Mirror-Prox-GAN: Approximate Mirror-Prox for GANs}}
   \KwIn{$\tilde{\vw}_{1}, \tilde{\vtheta}_{1} \leftarrow $ random initialization, $\vw_0\leftarrow \tilde{\vw}_1, \vtheta_0 \leftarrow \tilde{\vtheta}_1 ,\{\gamma_t\}_{t=1}^T, \{\epsilon_t\}_{t=1}^T, \{K_t\}_{t=1}^{T}, \beta$.}
   \For{$t =1, 2, \dots ,T$}
   {
     ${\bar{\vw}}_{t},\bar{{\vw}}_{t+1} ,\tilde{\vw}_t^{(1)}, \tilde{\vw}_{t+1}^{(1)}  \leftarrow \tilde{\vw}_t, \quad 
     \bar{\vtheta}_{t}, \bar{\vtheta}_{t+1}, \tilde{\vtheta}_t^{(1)}, \tilde{\vtheta}_{t+1}^{(1)} \leftarrow \tilde{\vtheta}_t$\;
      \For{$k=1,2, \dots, K_t$}  
      {
         Generate $A=\{X_1, \dots, X_n\} \sim \PP_{{\vtheta}_t^{(k)}}$\;
         ${\vtheta}_t^{(k+1)} = { \vtheta}_t^{(k)} + \frac{\gamma_t }{n}\nabla_\vtheta \sum_{X_i \in A}f_{\tilde{\vw}_t}(X_i) + \sqrt{2\gamma_t} \epsilon_t\Ncal(0,I)  $\;
         Generate $B=\{X_1^\textup{real}, \dots, X^\textup{real}_n\} \sim \PP_{\textup{real}}$\;
         Generate $B'=\{X'_1, \dots, X'_n\} \sim \PP_{\tilde{\vtheta}_t}$\;
          \begin{align*}\vw_t^{(k+1)} = \vw_t^{(k)} +    \frac{\gamma_t }{n}\nabla_\vw\sum_{X^\textup{real}_i \in B}  f_{\vw_t^{(k)}}(X^\textup{real}_i) - \frac{\gamma_t}{n} \nabla_\vw\sum_{X'_i \in B' } f_{\vw_t^{(k)}}(X'_i) + \sqrt{2\gamma_t}\epsilon_t\Ncal(0,I);
           \end{align*}
          $\bar{\vw}_{t} \leftarrow (1-\beta)\bar{\vw}_{t} + \beta \vw_t^{(k+1)}$\; 
      $\bar{\vtheta}_{t} \leftarrow (1-\beta)\bar{\vtheta}_{t} + \beta \vtheta_t^{(k+1)}$ \;
      }
      $\vw_{t} \leftarrow (1-\beta)\vw_{t-1} + \beta \bar{\vw}_{t}$\;
      $\vtheta_{t} \leftarrow (1-\beta)\vtheta_{t-1} + \beta \bar{\vtheta}_{t}$\;
      
     \ \\
     \ \\
      \For{$k=1,2, \dots, K_t$}  
      {
         Generate $A=\{X_1, \dots, X_n\} \sim \PP_{\tilde{\vtheta}_{t+1}^{(k)}}$\;
         $\tilde{\vtheta}_{t+1}^{(k+1)} = \tilde{\vtheta}_{t+1}^{(k)} + \frac{\gamma_t }{n}\nabla_\vtheta \sum_{X_i \in A}f_{\vw_t}(X_i) + \sqrt{2\gamma_t} \epsilon_t\Ncal(0,I)  $\;
         Generate $B=\{X_1^\textup{real}, \dots, X^\textup{real}_n\} \sim \PP_{\textup{real}}$\;
         Generate $B'=\{X'_1, \dots, X'_n\} \sim \PP_{\vtheta_t}$\;
          \begin{align*}\vw_{t+1}^{(k+1)} = \vw_{t+1}^{(k)} +    \frac{\gamma_t }{n}\nabla_\vw\sum_{X^\textup{real}_i \in B}  f_{\vw_{t+1}^{(k)}}(X^\textup{real}_i) - \frac{\gamma_t}{n} \nabla_\vw\sum_{X'_i \in B' } f_{\vw_{t+1}^{(k)}}(X'_i) + \sqrt{2\gamma_t}\epsilon_t\Ncal(0,I);
           \end{align*}
          $\bar{\vw}_{t+1} \leftarrow (1-\beta)\bar{\vw}_{t+1} + \beta \vw_{t+1}^{(k+1)}$\; 
      $\bar{\vtheta}_{t+1} \leftarrow (1-\beta)\bar{\vtheta}_{t+1} + \beta \vtheta_{t+1}^{(k+1)}$ \;
      }
      $\tilde{\vw}_{t+1} \leftarrow (1-\beta)\tilde{\vw}_t + \beta \bar{\vw}_{t+1}$\;
      $\tilde{\vtheta}_{t+1} \leftarrow (1-\beta)\tilde{\vtheta}_t + \beta \bar{\vtheta}_{t+1}$\;      
   }
  return ${\vw}_T, {\vtheta}_T$.
\end{algorithm}
\section{Omitted Pseudocodes in the Main Text}\label{app:pseudo}
We use the following notation for the hyperparameters of our algorithms:
\begin{align*}
n&: \textup{ number of samples in the data batch. } \\
n'&: \textup{ number of samples for each probability measure. } \\
\gamma_t&: \textup{ SGLD step-size at iteration $t$. } \\
\epsilon_t&: \textup{ thermal noise of SGLD at iteration $t$. } \\
K_t&: \textup{ warmup steps for SGLD at iteration $t$. } \\
\beta&: \textup{exponential damping factor in the weighted average.}
\end{align*}

The approximate infinite-dimensional entropic MD and MP in Section \ref{sec:implementable_approx} are depicted in \textbf{Algorithm \ref{alg:pseudo_MD}} and \textbf{\ref{alg:pseudo_MP}}, respectively. \textbf{Algorithm \ref{alg:pseudo_heuristic_MP}} gives the heuristic version of the entropic Mirror-Prox.


\section{Details and More Results of Experiments}\label{sec:app_experiments}
This section contains all the details regarding our experiments, as well as more results on synthetic and real datasets.

\begin{table}[h]
\small
\centering
\renewcommand{\arraystretch}{1.5}
\begin{tabular}{|p{2.8cm}|p{0.6cm}<{\centering}|p{0.6cm}<{\centering}|p{1.8cm}<{\centering}|p{0.6cm}<{\centering}|p{0.6cm}<{\centering}|p{0.6cm}<{\centering}|p{0.6cm}<{\centering}|p{0.6cm}<{\centering}|p{0.6cm}<{\centering}|}
\hline
\textbf{Algorithm} & \multicolumn{2}{|c|}{\textbf{SGD}} & \textbf{RMSProp} & \multicolumn{3}{|c|}{\textbf{Adam}} & \multicolumn{3}{|c|}{\textbf{Etropic MD/MP}} \\
\hline
Dataset & S & M & L & S & M & L & S & M & L \\
\hline
Step-size $\gamma$ & \multicolumn{2}{|c|}{$10^{-2}$} & $10^{-4}$ & \multicolumn{3}{|c|}{$10^{-4}$} & \multicolumn{2}{|c|}{$10^{-2}$} & $10^{-4}$ \\
\hline
Gradient penalty $\lambda$ & 0.1 & \multicolumn{2}{|c|}{10} & 0.1 & \multicolumn{2}{|c|}{10} & 0.1 & \multicolumn{2}{|c|}{10} \\
\hline
Noise $\epsilon$ & \multicolumn{3}{|c|}{} & \multicolumn{3}{|c|}{} & $10^{-2}$ & $10^{-3}$ & $10^{-6}$ \\
\hline
Batch Size $n$ & 1024 & 50 & 64 & 1024 & 50 & 64 & 1024 & 50 & 64 \\
\hline
\end{tabular}
\caption{Hyperparameter setting. ``S'', ``M'', ``L'' stands for synthetic data, \texttt{MNIST} and \texttt{LSUN bedroom}, respectively. MD for \texttt{LSUN bedroom} uses a RMSProp preconditioner, so the step-size is the same as one in RMSProp.}\label{table:hp}
\end{table}

\textbf{Network Architectures:} For all experiments, we consider the gradient-penalized discriminator \cite{gulrajani2017improved} as a soft constraint alternative to the original Wasserstein GANs, as it is known to achieve much better performance. The gradient penalty parameter is denoted by $\lambda$ below.

For synthetic data, we use fully connected networks for both the generator and discriminator.
They consist of three layers, each of them containing 512 neurons, with ReLU as nonlinearity.

For \texttt{MNIST}, we use convolutional neural networks identical to \cite{gulrajani2017improved} as the generator and discriminator.\footnote{Their code is available on \url{https://github.com/igul222/improved_wgan_training}.} 
The generator uses a sigmoid function to map the output to range $[0, 1]$.

For \texttt{LSUN bedroom}, we use DCGAN \cite{radford2015unsupervised}, except that the number of the channels in each layer is half of the original model, and the last sigmoid function of the discriminator is removed.
The output of the generator is mapped to $[0, 1]$ by hyperbolic tangent and a linear transformation.
The architecture contains batch normalization layer to ensure the stability of the training.
For our Mirror- and Mirror-Prox-GAN, the Gaussian noise from SGLD is not added to parameters in batch normalization layers, as the batch normalization creates strong dependence among entries of the weight matrix and was not covered by our theory.

\textbf{Hyperparameter setting:} 
The hyperparameter setting is summarized in Table \ref{table:hp}.
For baselines (SGD, RMSProp, Adam), we use the settings identical to \cite{gulrajani2017improved}.
For our proposed Mirror- and Mirror-Prox-GAN, we set the damping factor $\beta$ to be 0.9. For $K_t,\gamma_t$ and $\epsilon_t$, we use the simple exponential scheduling:
\begin{align*}
K_t &= \floor{(1+10^{-5})^t}. \\
\gamma_t &= \gamma\times (1-10^{-5})^t, \quad  \hspace{5mm}\textup{ $\gamma$ in Table \ref{table:hp}.}\\
\epsilon_t &= \epsilon \times (1- 5\times 10^{-5})^t, \quad \textup{ $\epsilon$ in Table \ref{table:hp}.}
\end{align*}The idea is that the initial iterations are very noisy, and hence it makes sense to take less SGLD steps. As the iteration counts grow, the algorithms learn more meaningful parameters, and we should increase the number of SGLD steps as well as decreasing the step-size $\gamma_t$ and thermal noise $\epsilon_t$ to make the sampling more accurate. This is akin to the warmup steps in the sampling literature.


\subsection{Synthetic Data} \label{sec:app_exp_synthetic_data}
\begin{figure}
\centering
\subfigure[SGD]{\includegraphics[scale = 0.4]{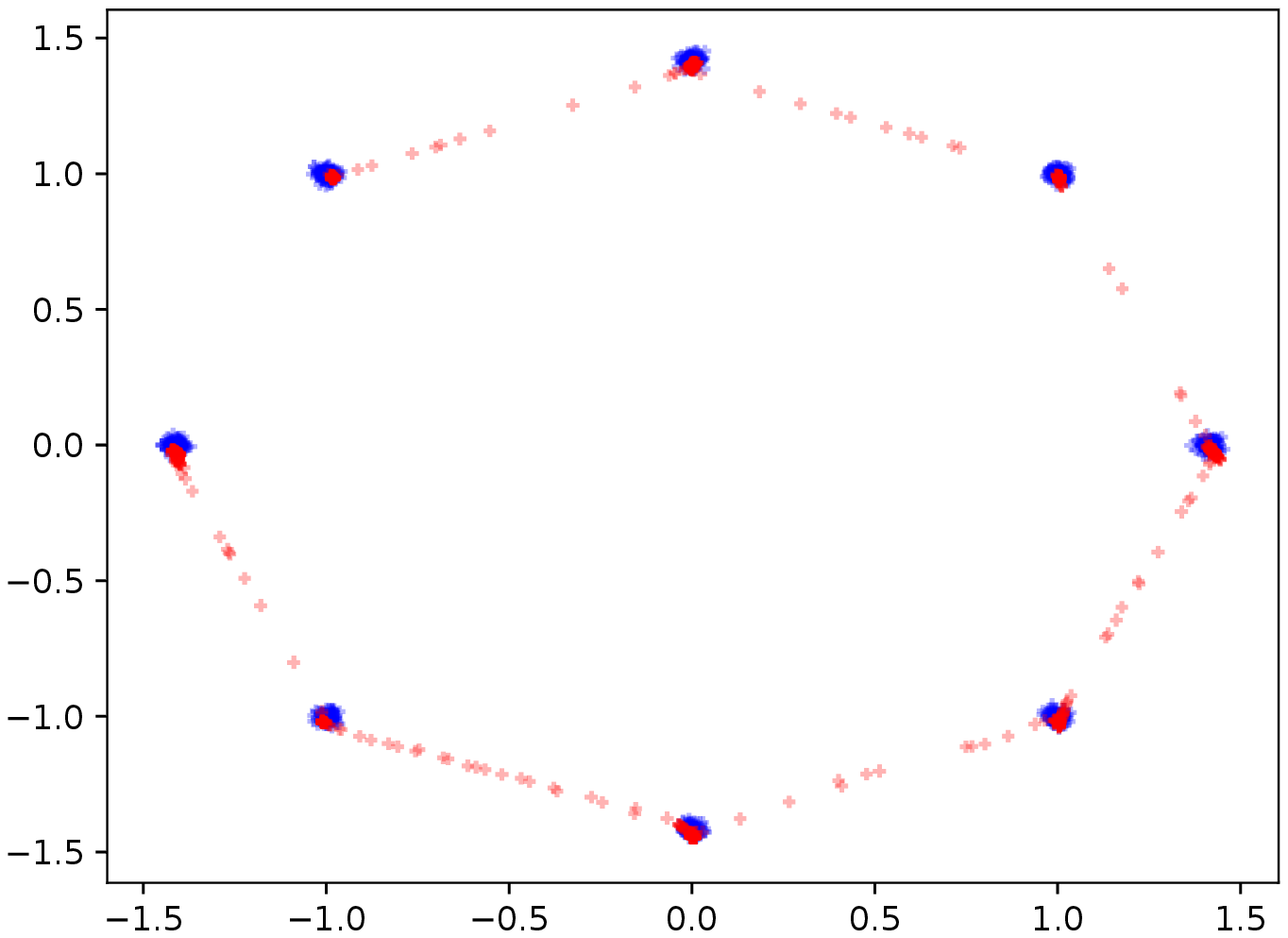}}
\subfigure[Adam]{\includegraphics[scale = 0.4]{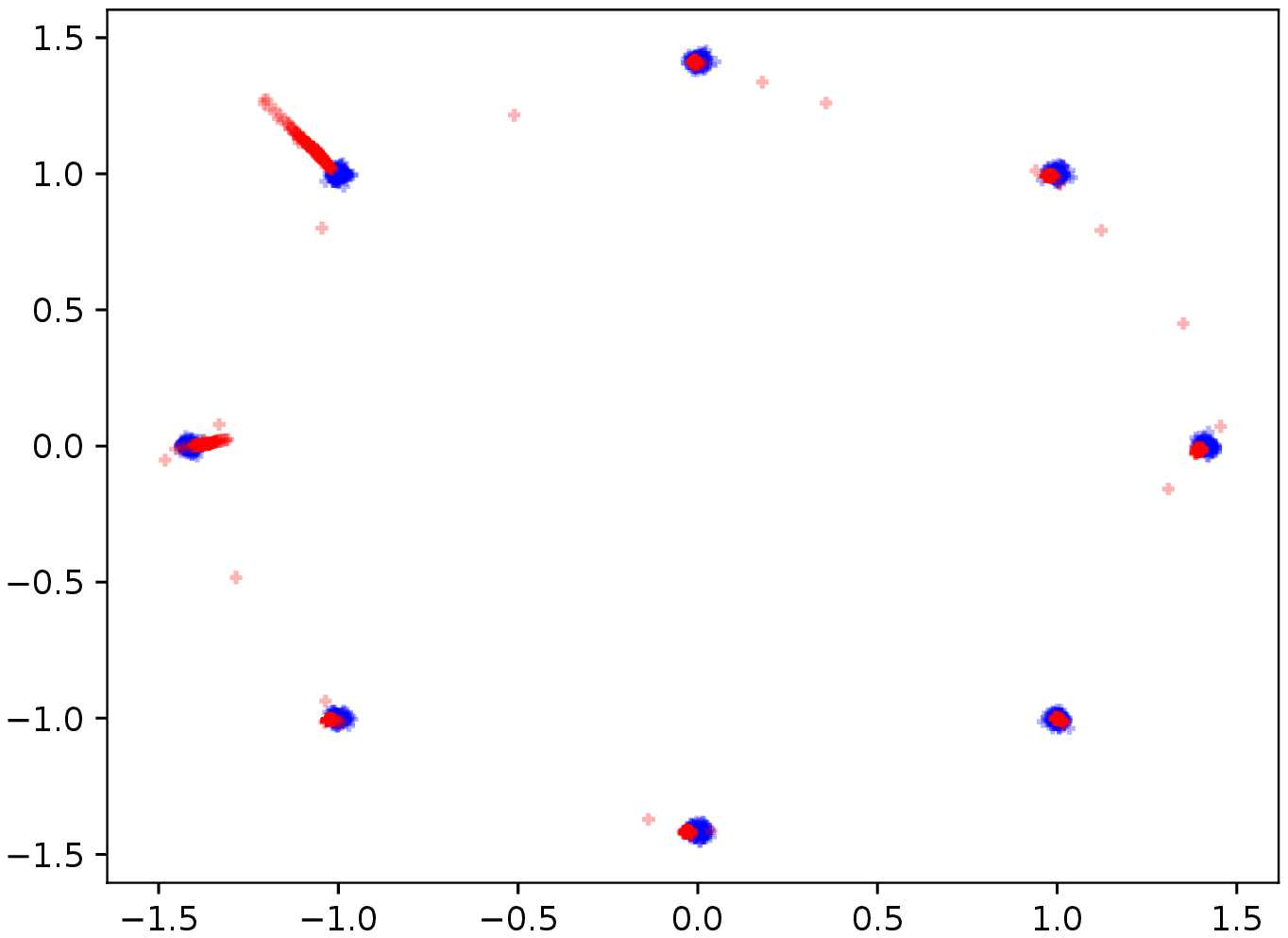}} \\
\subfigure[Mirror-GAN]{\includegraphics[scale = 0.4]{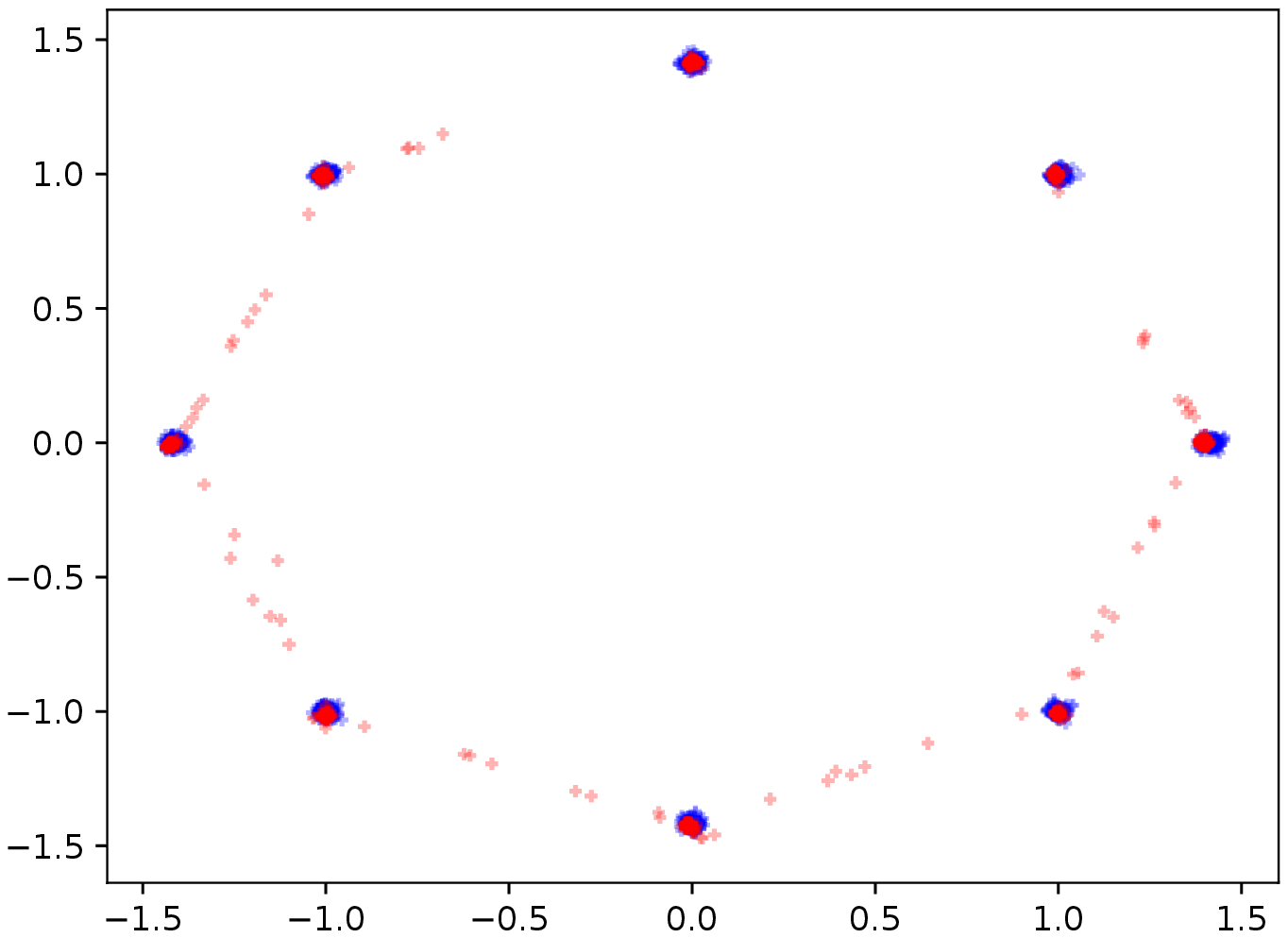}}
\subfigure[Mirror-Prox-GAN]{\includegraphics[scale = 0.4]{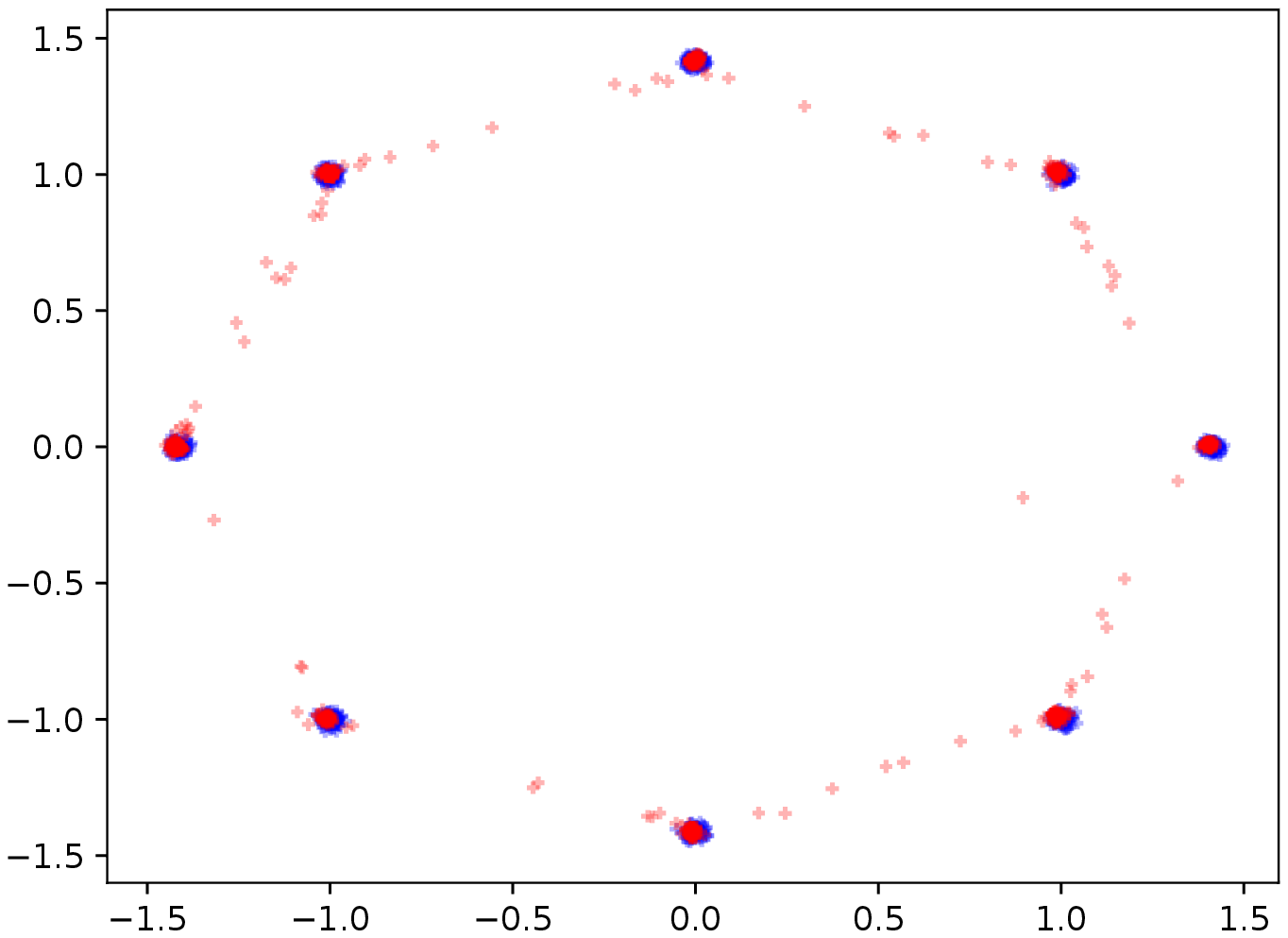}}
\caption{Fitting 8 Gaussian mixtures up to $10^5$ iterations.}\label{fig:gauss8}
\end{figure}
\begin{figure}
\centering
\subfigure[SGD]{\includegraphics[scale = 0.4]{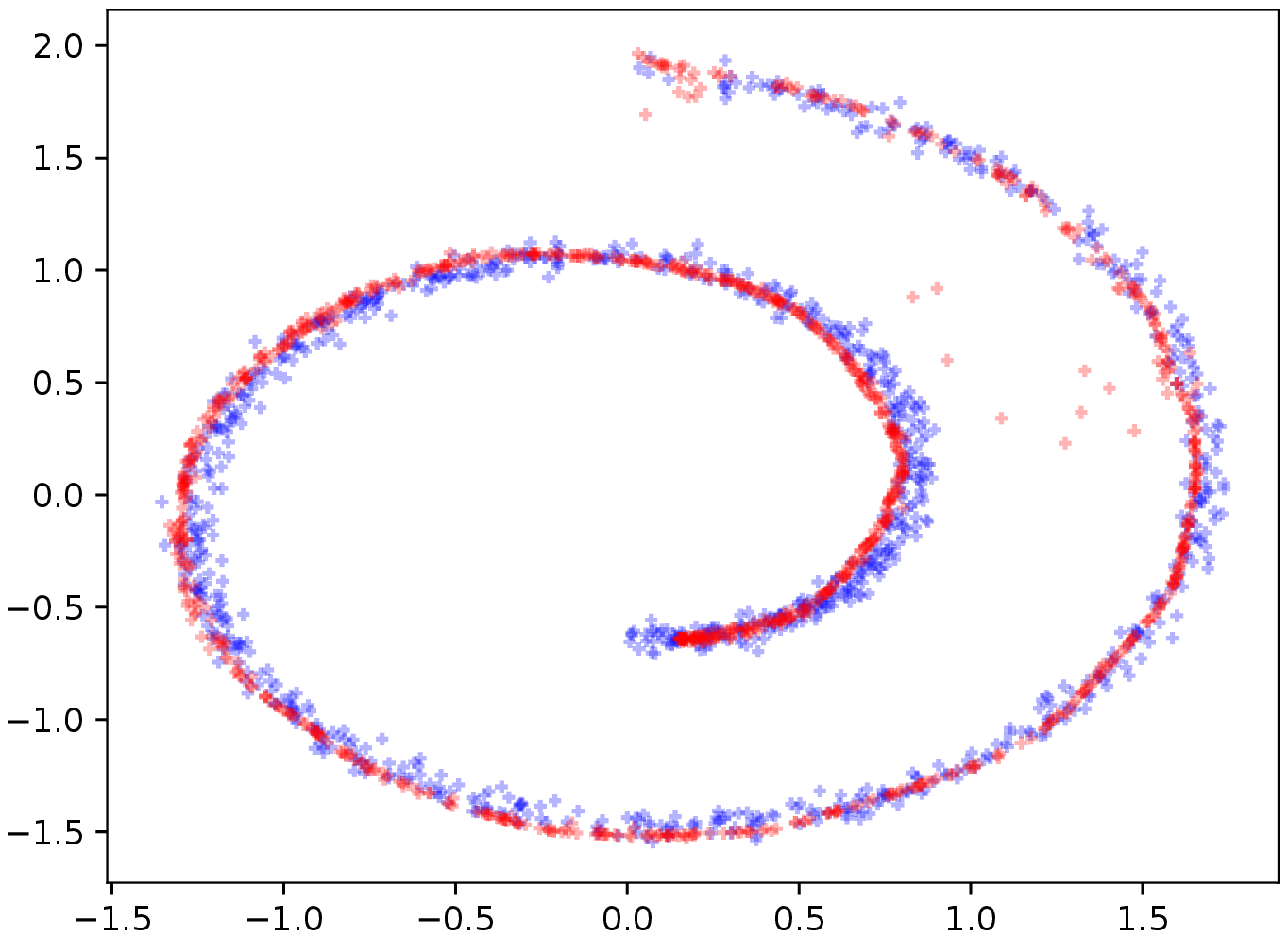}}
\subfigure[Adam]{\includegraphics[scale = 0.4]{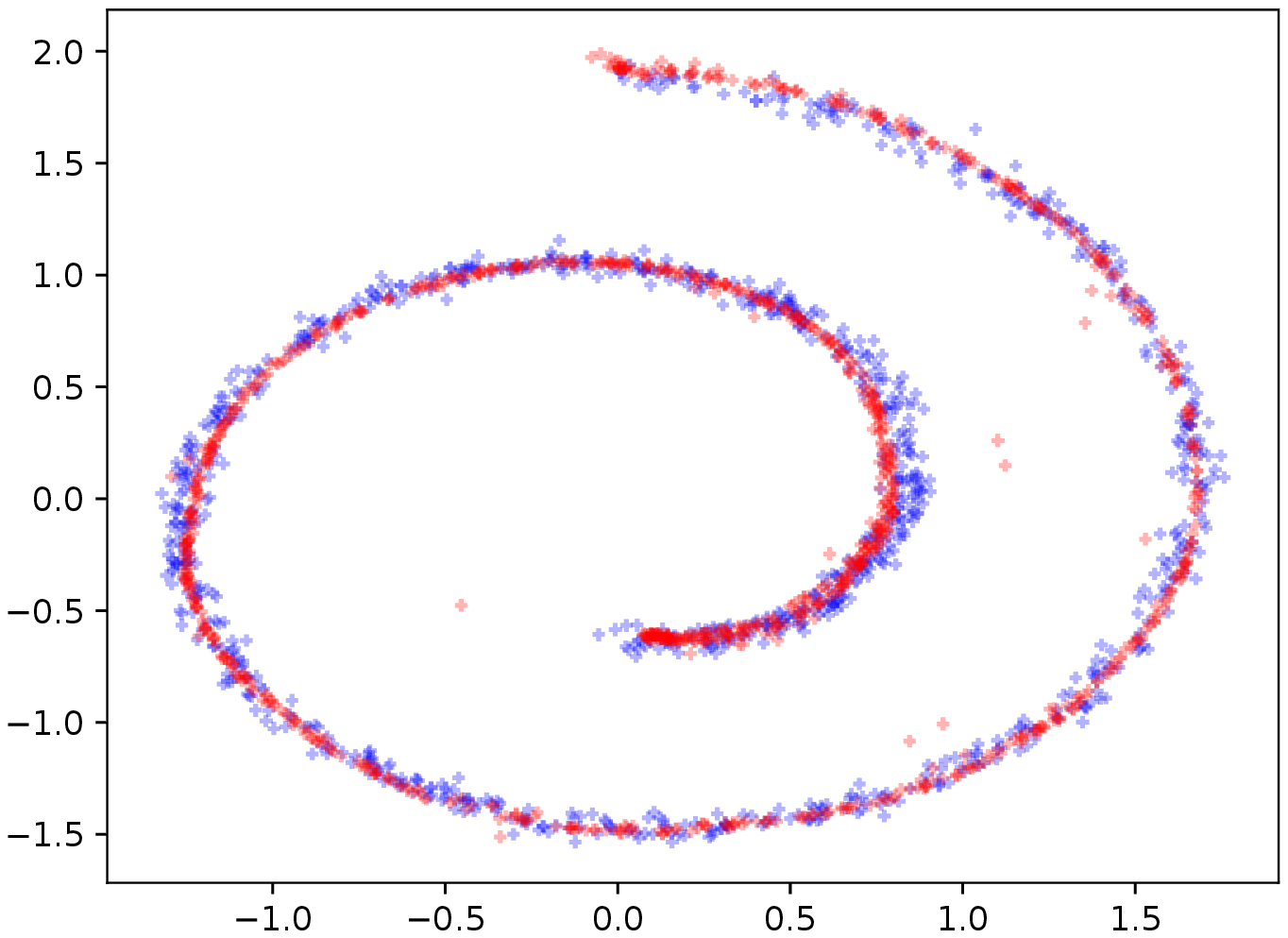}} \\
\subfigure[Mirror-GAN]{\includegraphics[scale = 0.4]{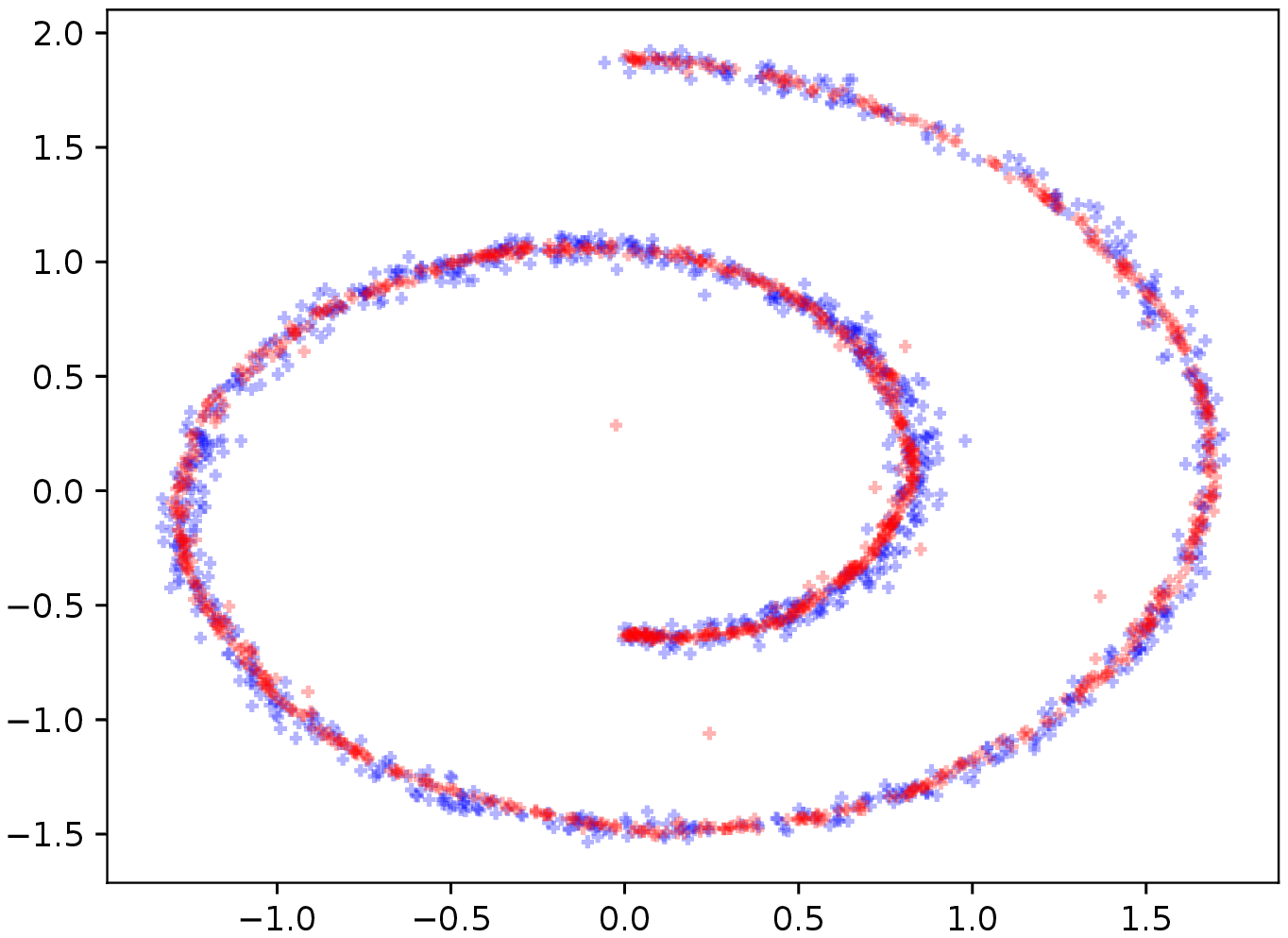}}
\subfigure[Mirror-Prox-GAN]{\includegraphics[scale = 0.4]{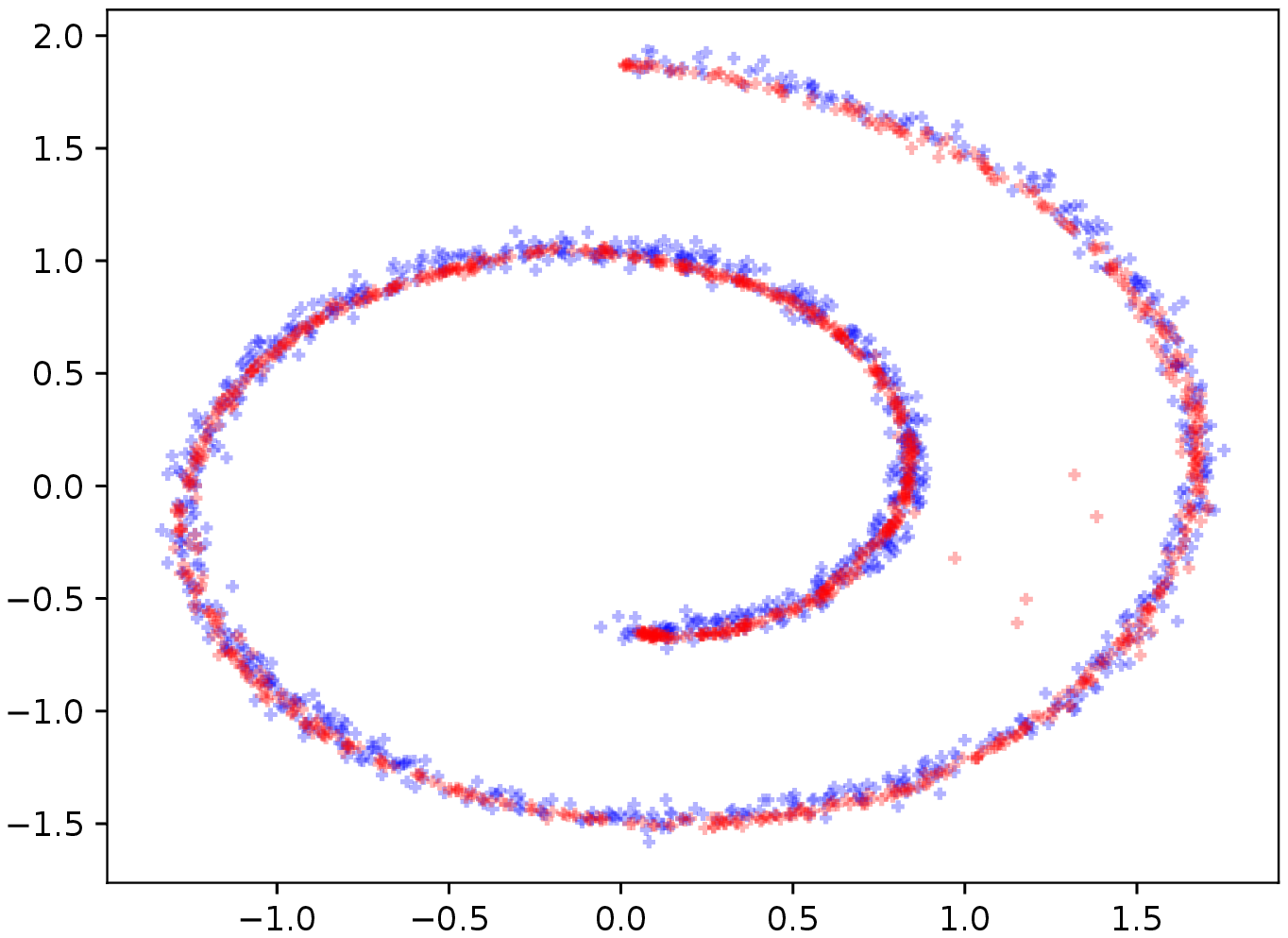}}
\caption{Fitting the `Swiss Roll' up to $10^5$ iterations.}\label{fig:swissroll}
\end{figure}

\begin{figure}[!h]
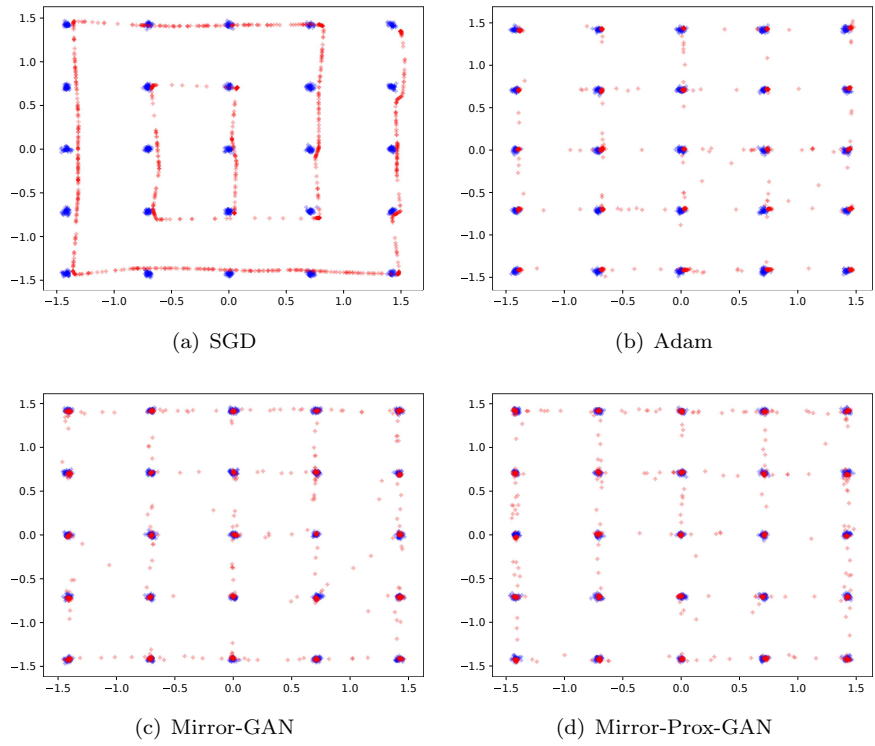

\centering
\subfigure[SGD]{\label{fig:gauss25_SGD_big} \includegraphics[scale = 0.4]{figs/gaussian25/sgd_100000.eps}}
\subfigure[Adam]{\label{fig:gauss25_Adam_big} \includegraphics[scale = 0.4]{figs/gaussian25/adam_100000.eps}} \\
\subfigure[Mirror-GAN]{\label{fig:gauss25_md_big} \includegraphics[scale = 0.4]{figs/gaussian25/md_102000.eps}}
\subfigure[Mirror-Prox-GAN]{\label{fig:gauss25_mp_big} \includegraphics[scale = 0.4]{figs/gaussian25/mp_100000.eps}}
\caption{Fitting 25 Gaussian mixtures up to $10^5$ iterations. }\label{fig:gauss25_big}
\end{figure}

Figure \ref{fig:gauss8}, \ref{fig:swissroll}, and \ref{fig:gauss25_big} show results on learning 8 Gaussian mixtures, 25 Gaussian mixtures, and the Swiss Roll.
As in the case for 25 Gaussian mixtures, we find that Mirror- and Mirror-Prox-GAN can better capture the variance of the true distribution, as well as finding the unbiased modes.

In Figure \ref{fig:gauss25_steps}, we plot the data generated after $10^4, 2\times 10^4, 5\times 10^4, 8\times 10^4,$ and  $10^5$ iterations by different algorithms fro 25 Gaussian mixtures. It is clear that Mirror- and Mirror-Prox-GAN find the modes of the distribution faster.
In practice, it was observed that the noise introduced by SGLD quickly drives the iterates to non-trivial parameter regions, whereas SGD tends to get stuck at very bad local minima. Adam, as an adaptive algorithm, is capable of escaping bad local minima, however at a rate slower than Mirror- and Mirror-Prox-GAN. The quality of Adam's final solution is also not as good as Mirror- and Mirror-Prox-GAN; see the discussions in Section \ref{sec:experiments_syntehtic}.


\begin{figure}[h!]

\begin{adjustwidth}{-1.0in}{-1.0in} 
\hspace{10mm}$10^4$~iterations \hspace{15mm}$2\times 10^4$~iterations\hspace{14mm}$5\times 10^4$~iterations\hspace{12mm}$8\times 10^4$~iterations\hspace{15.5mm}$10^5$~iterations

\ \ \subfigure[SGD]{
    \includegraphics[width = 0.19\linewidth]{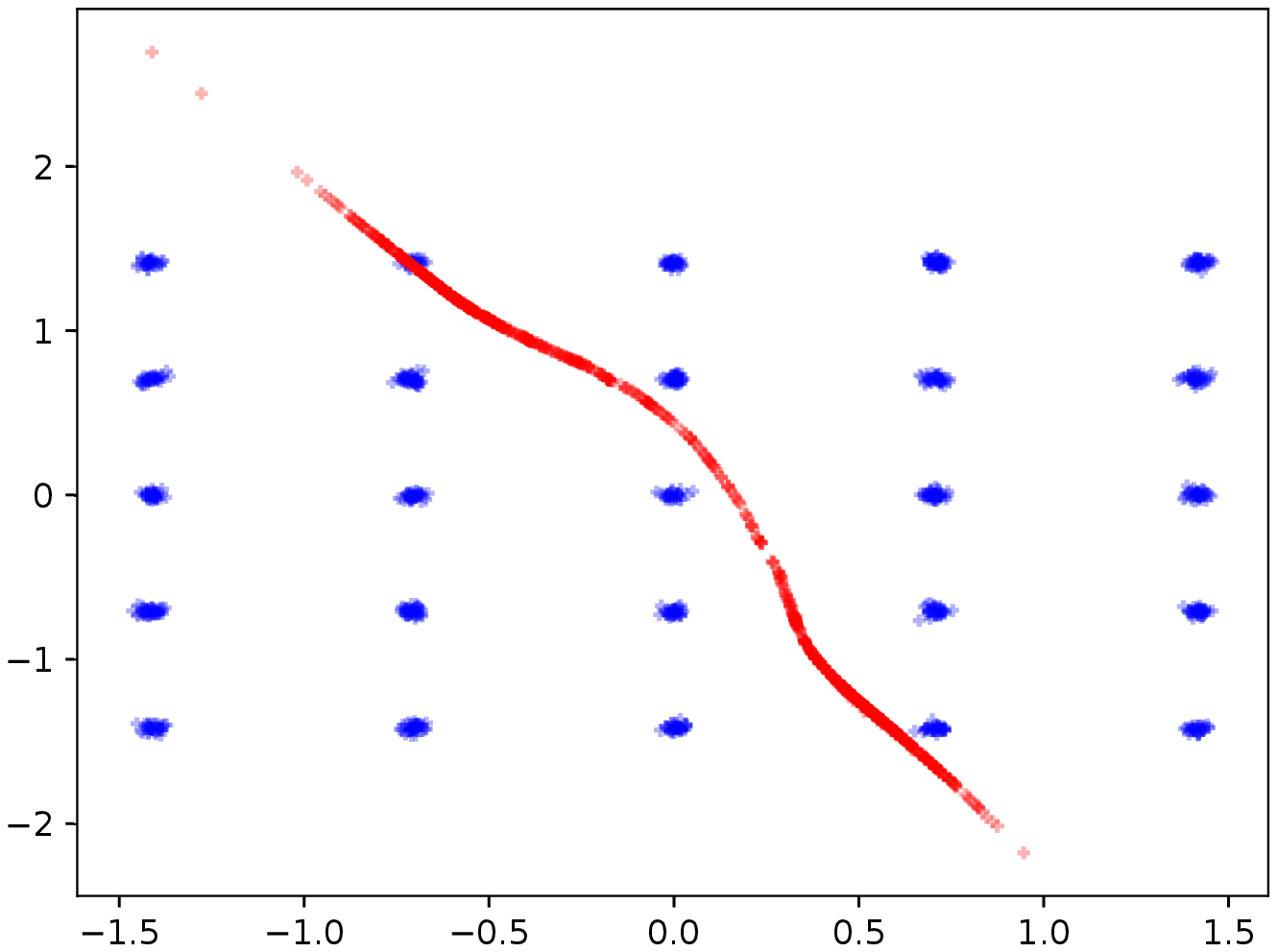}
    ~
    \includegraphics[width = 0.19\linewidth]{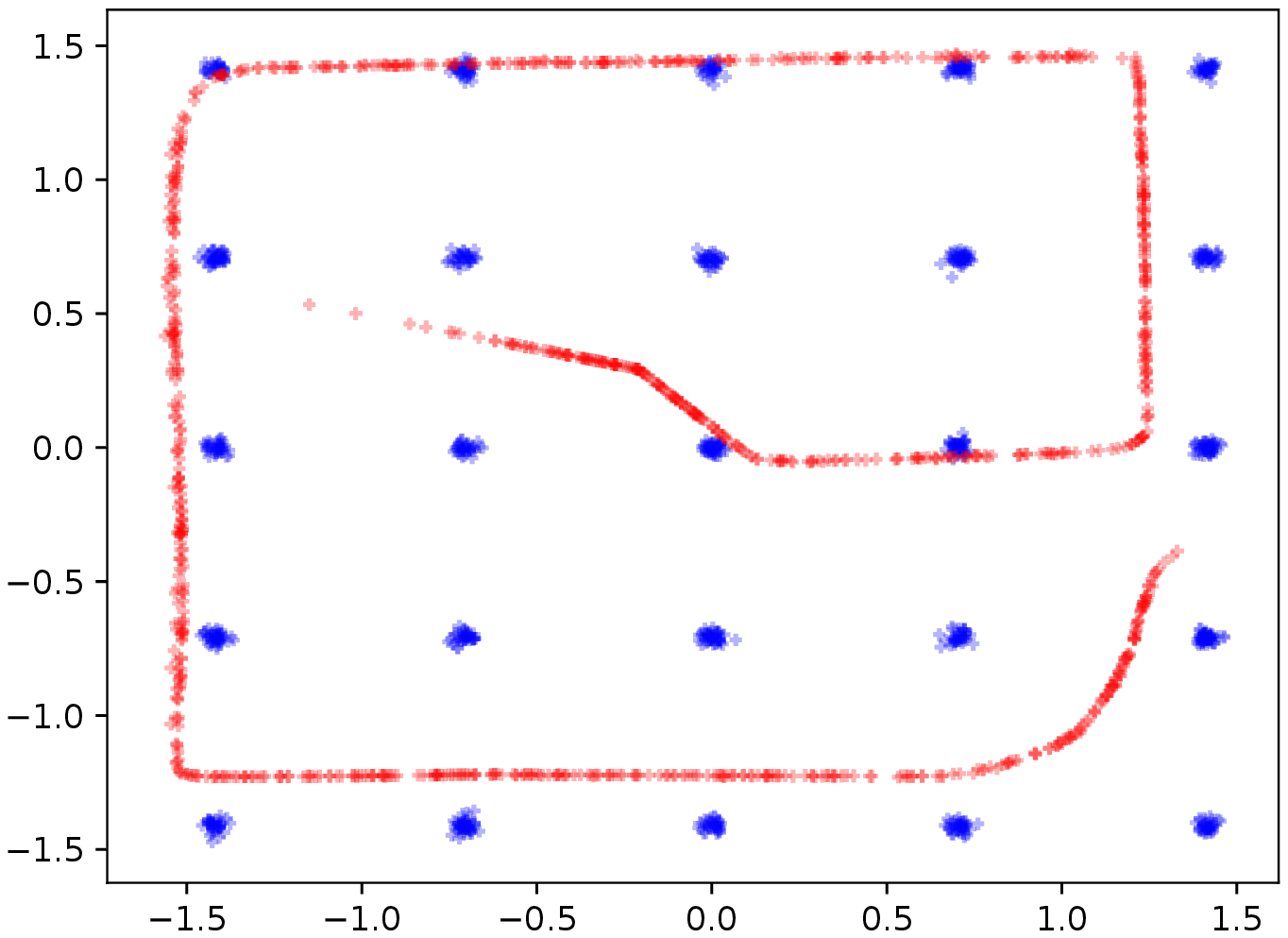}
    ~
    \includegraphics[width = 0.19\linewidth]{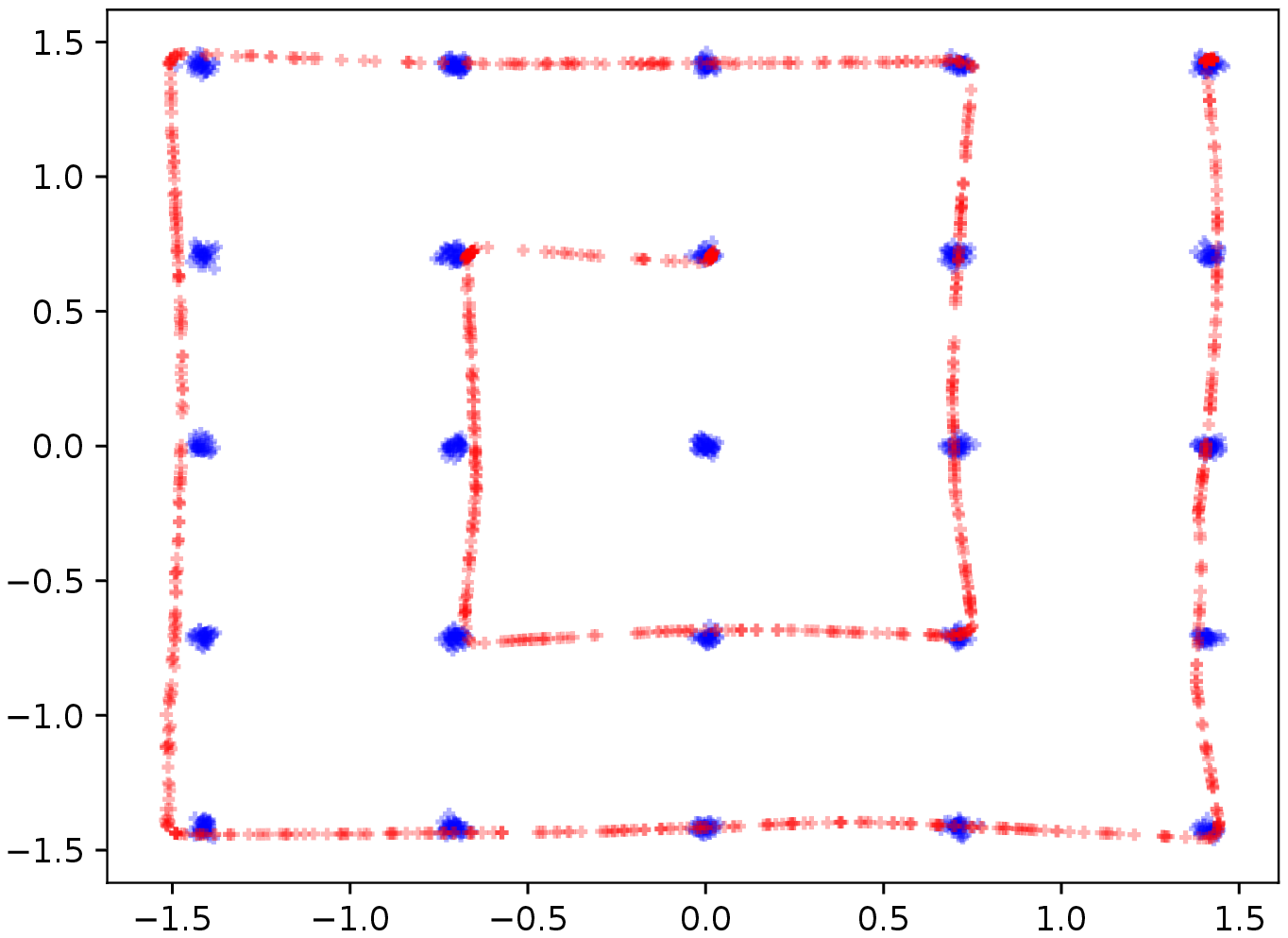}
    ~
    \includegraphics[width = 0.19\linewidth]{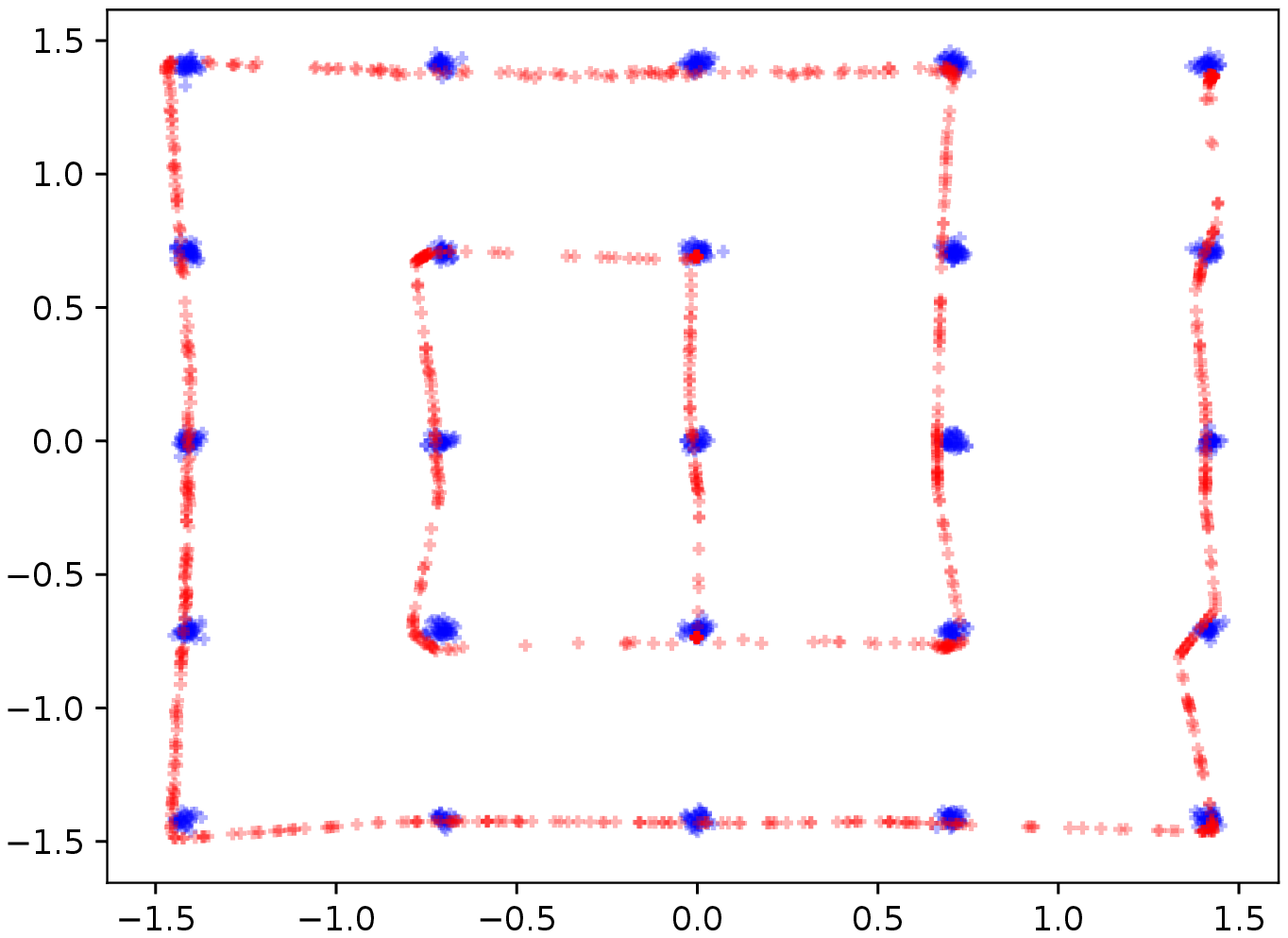}
    ~
    \includegraphics[width = 0.19\linewidth]{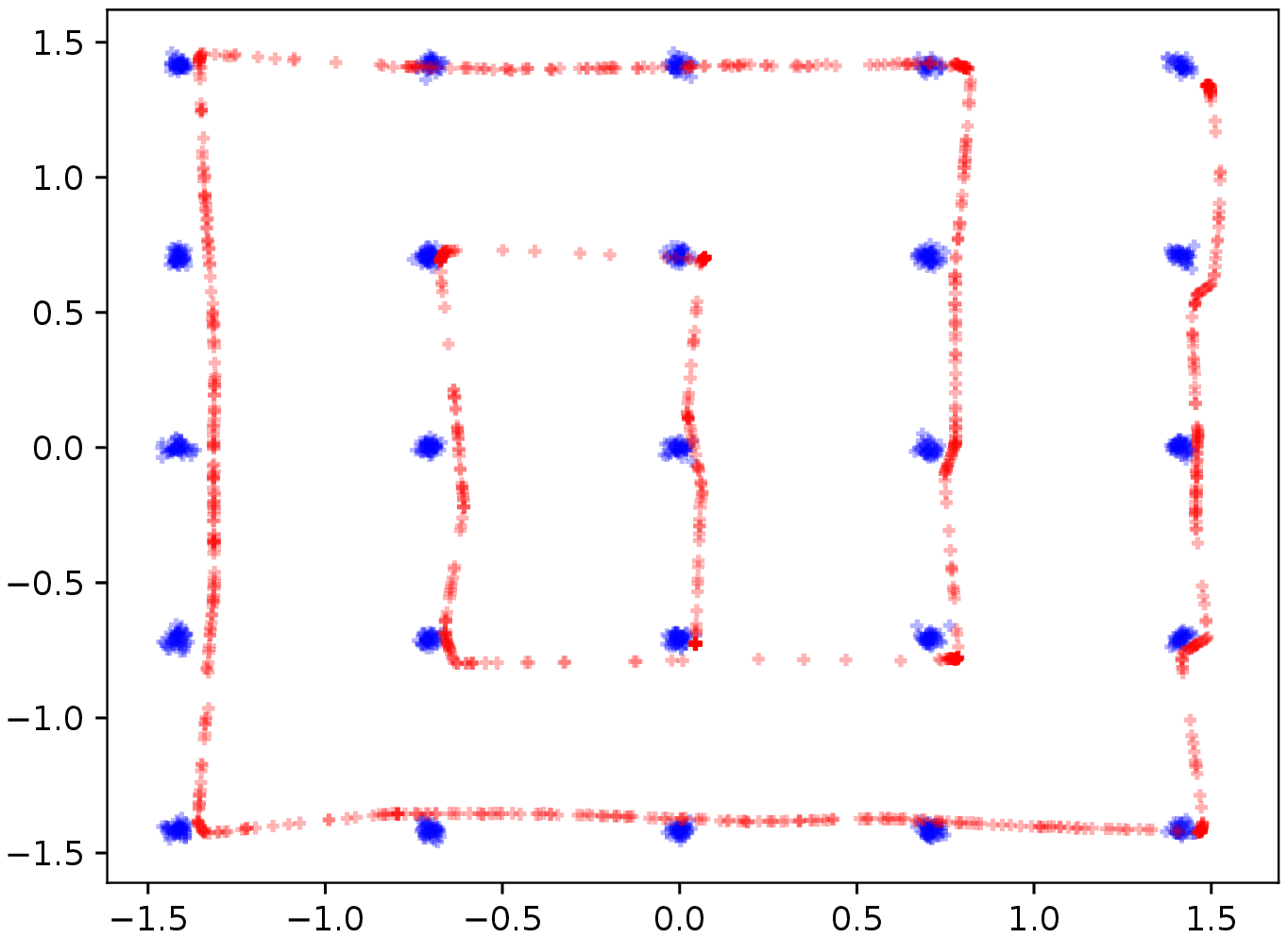}
}    
\end{adjustwidth}

\begin{adjustwidth}{-1.0in}{-1.0in} 
\subfigure[Adam]{
    \includegraphics[width = 0.19\linewidth]{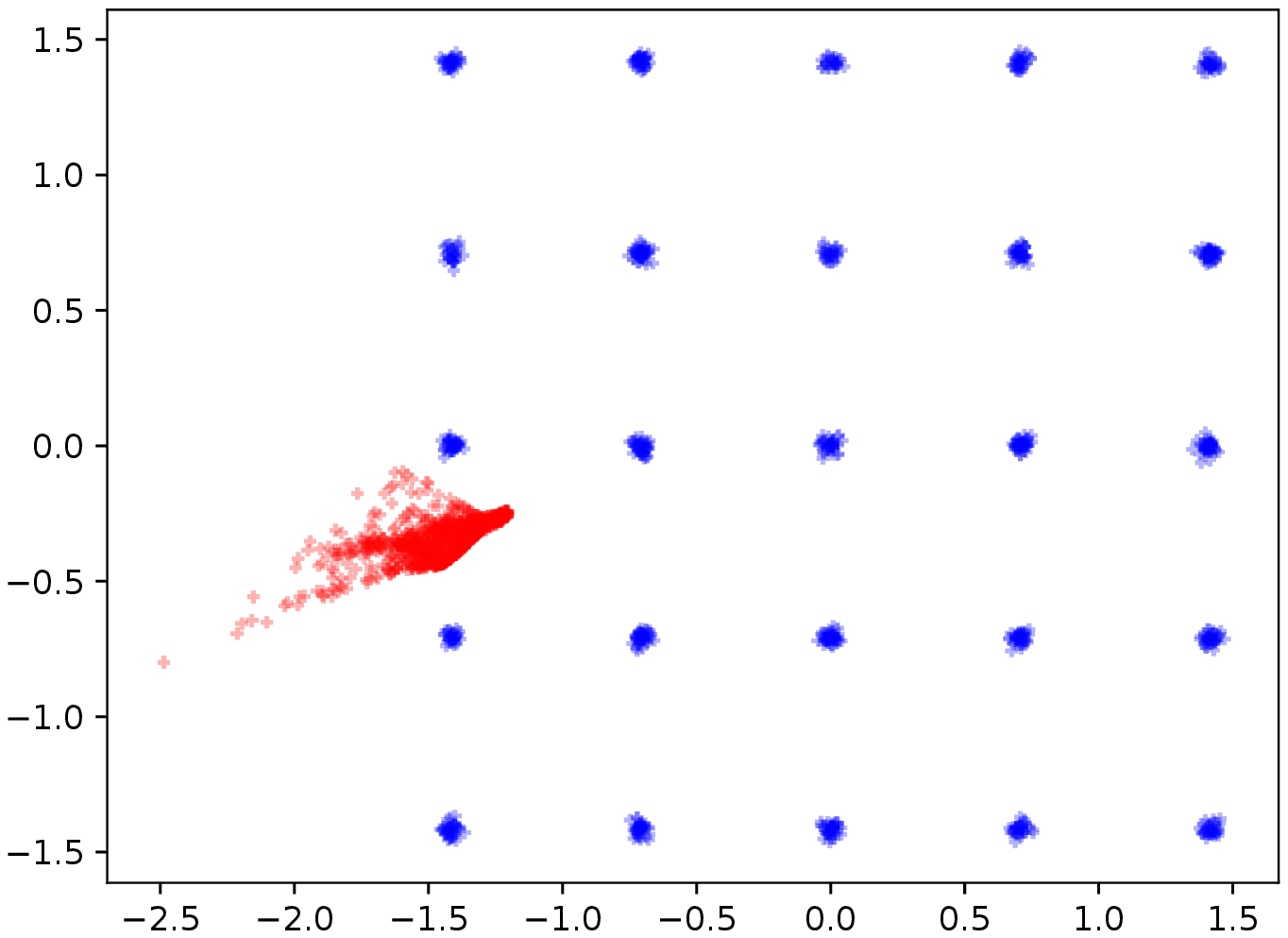}
    ~
    \includegraphics[width = 0.19\linewidth]{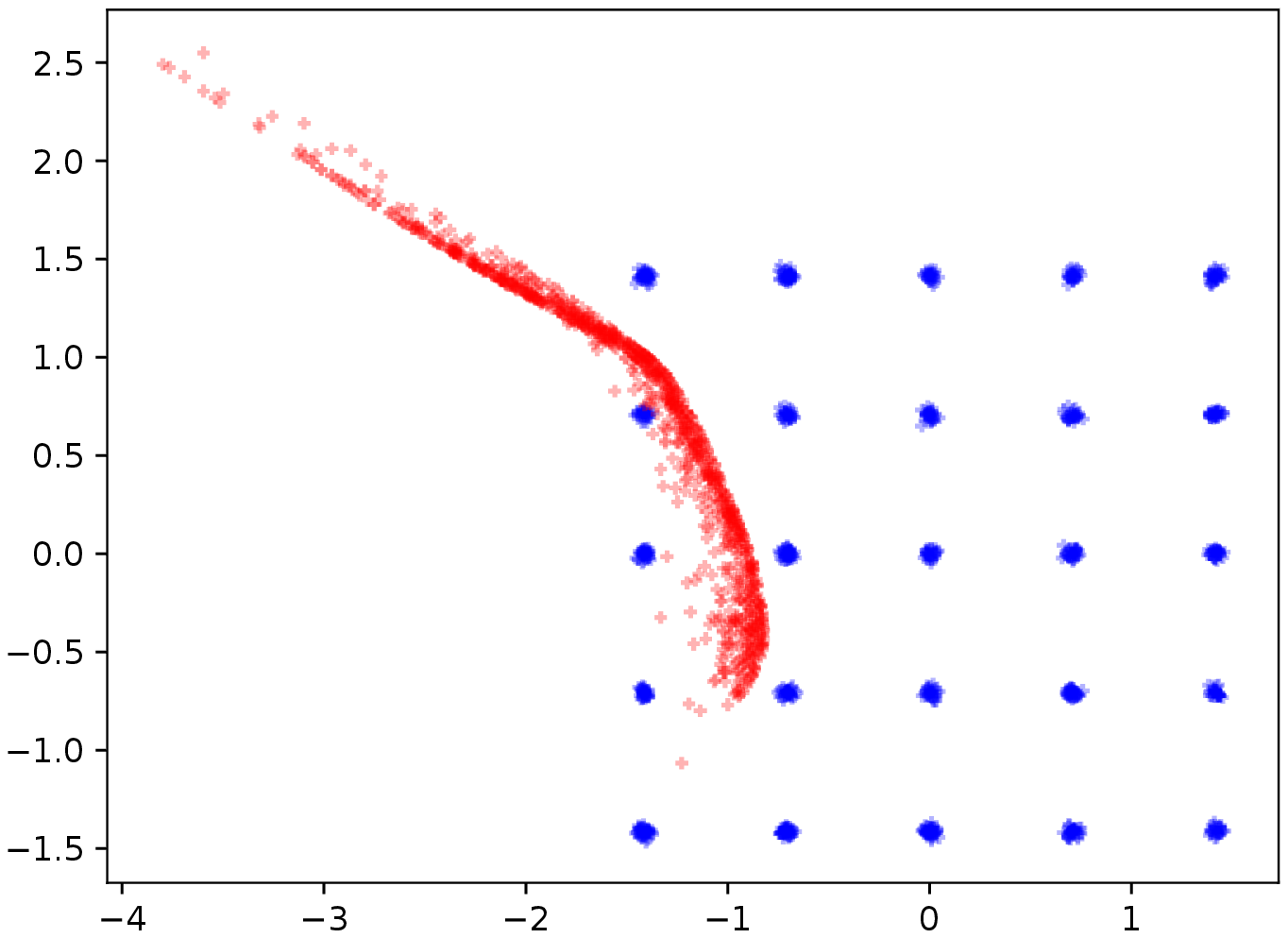}
    ~
    \includegraphics[width = 0.19\linewidth]{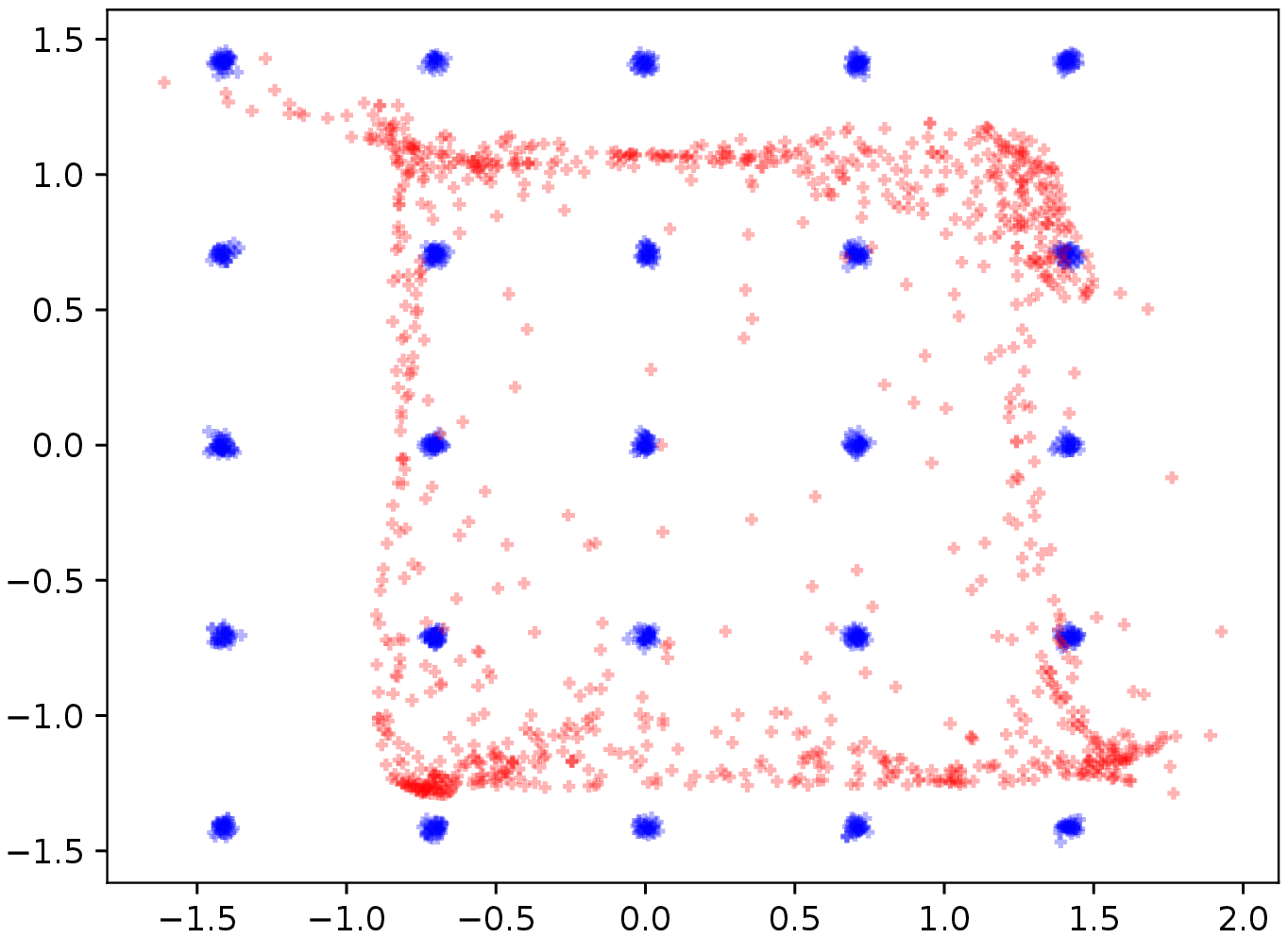}
    ~
    \includegraphics[width = 0.19\linewidth]{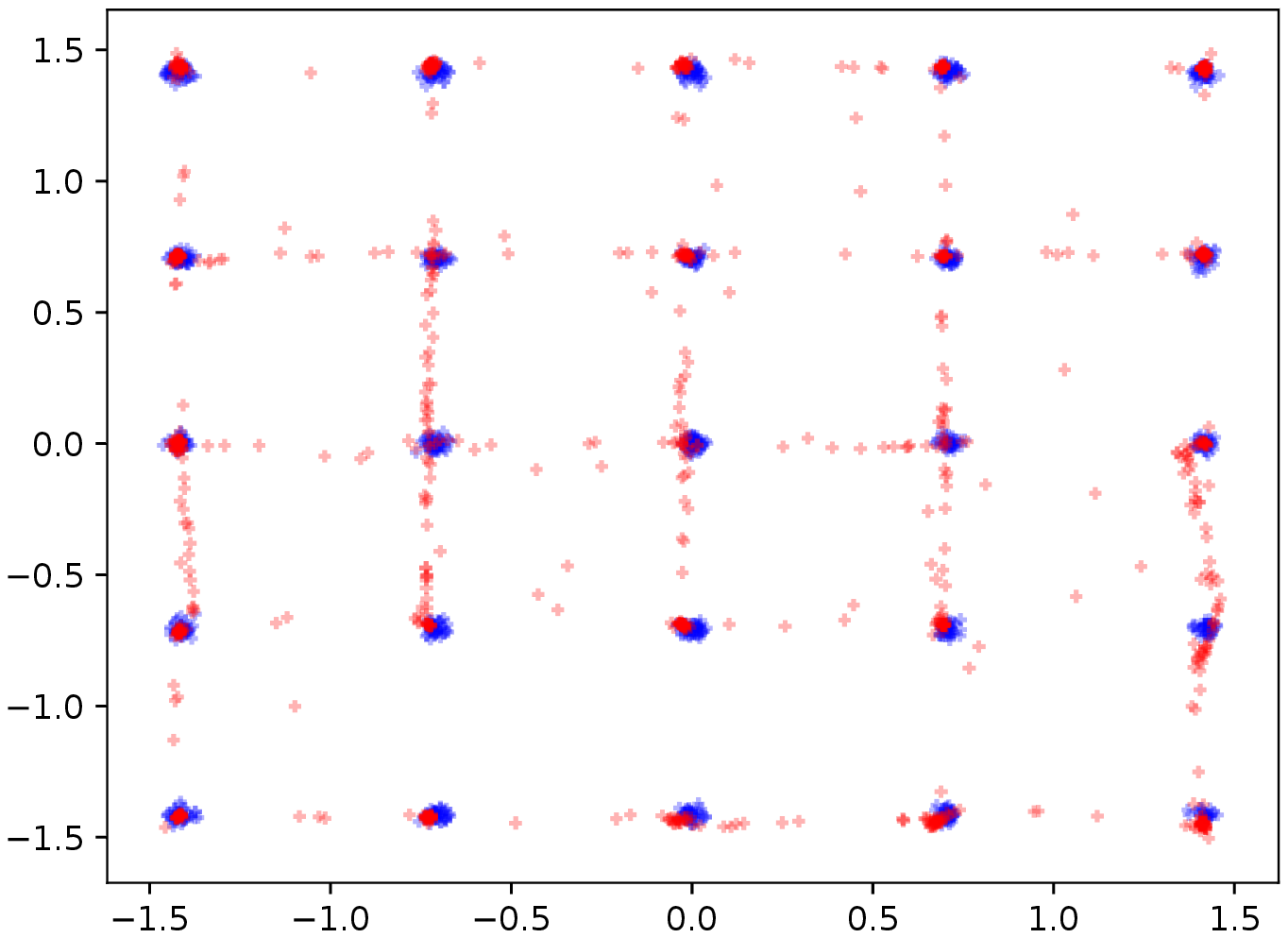}
    ~
    \includegraphics[width = 0.19\linewidth]{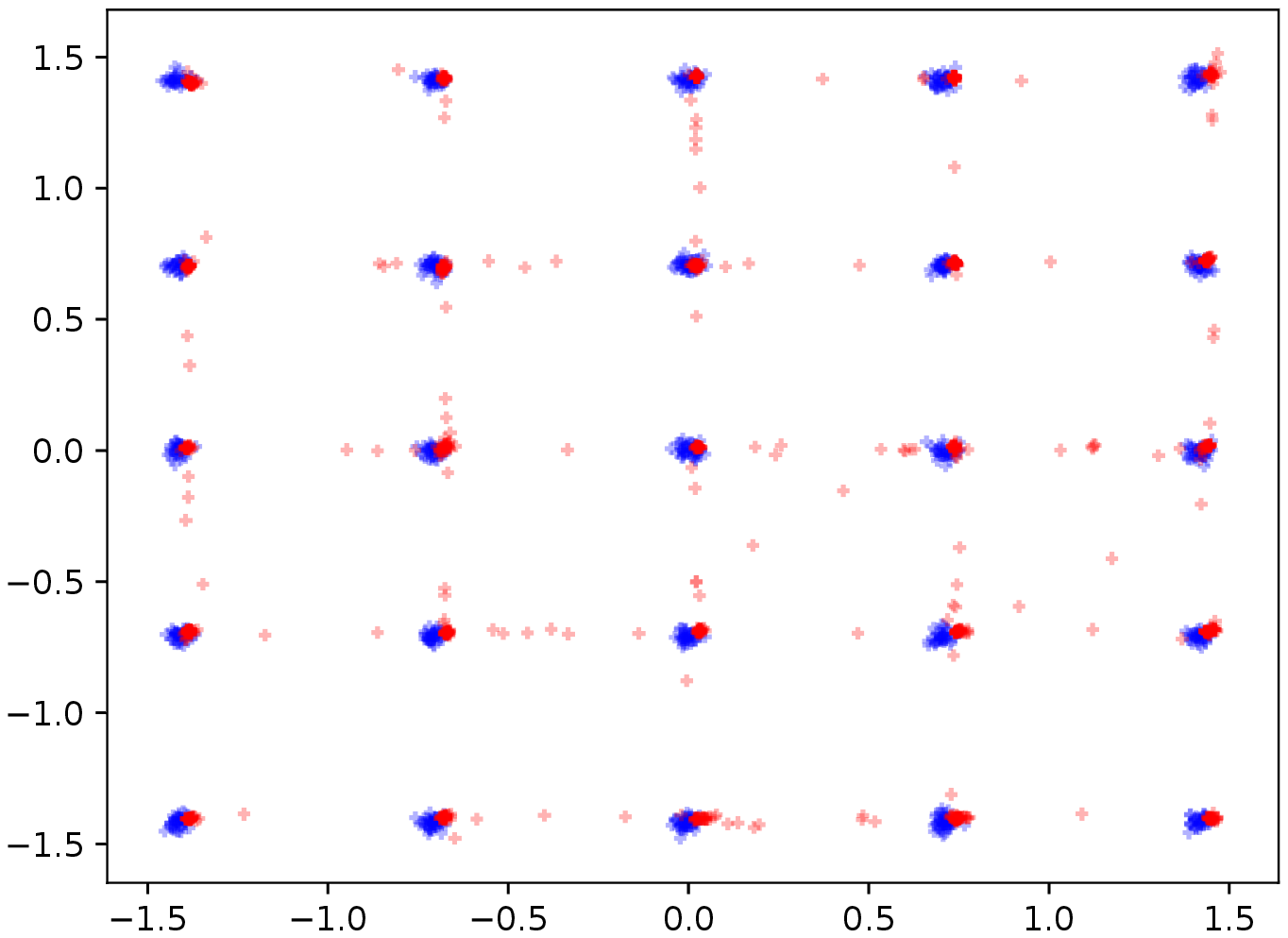}
}\end{adjustwidth}

\begin{adjustwidth}{-1.0in}{-1.0in} 
\subfigure[Mirror-GAN]{
    \includegraphics[width = 0.19\linewidth]{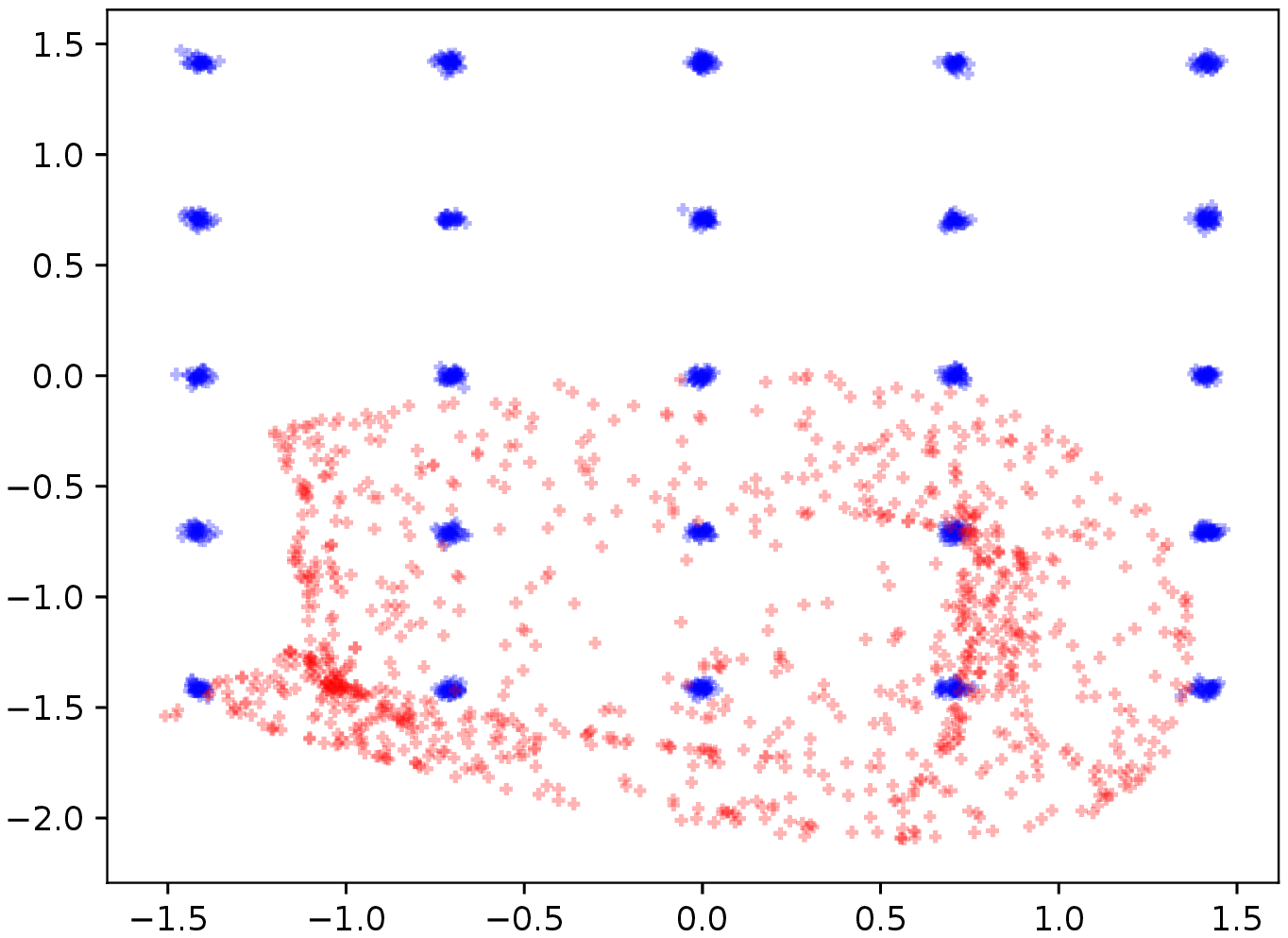}
    ~
    \includegraphics[width = 0.19\linewidth]{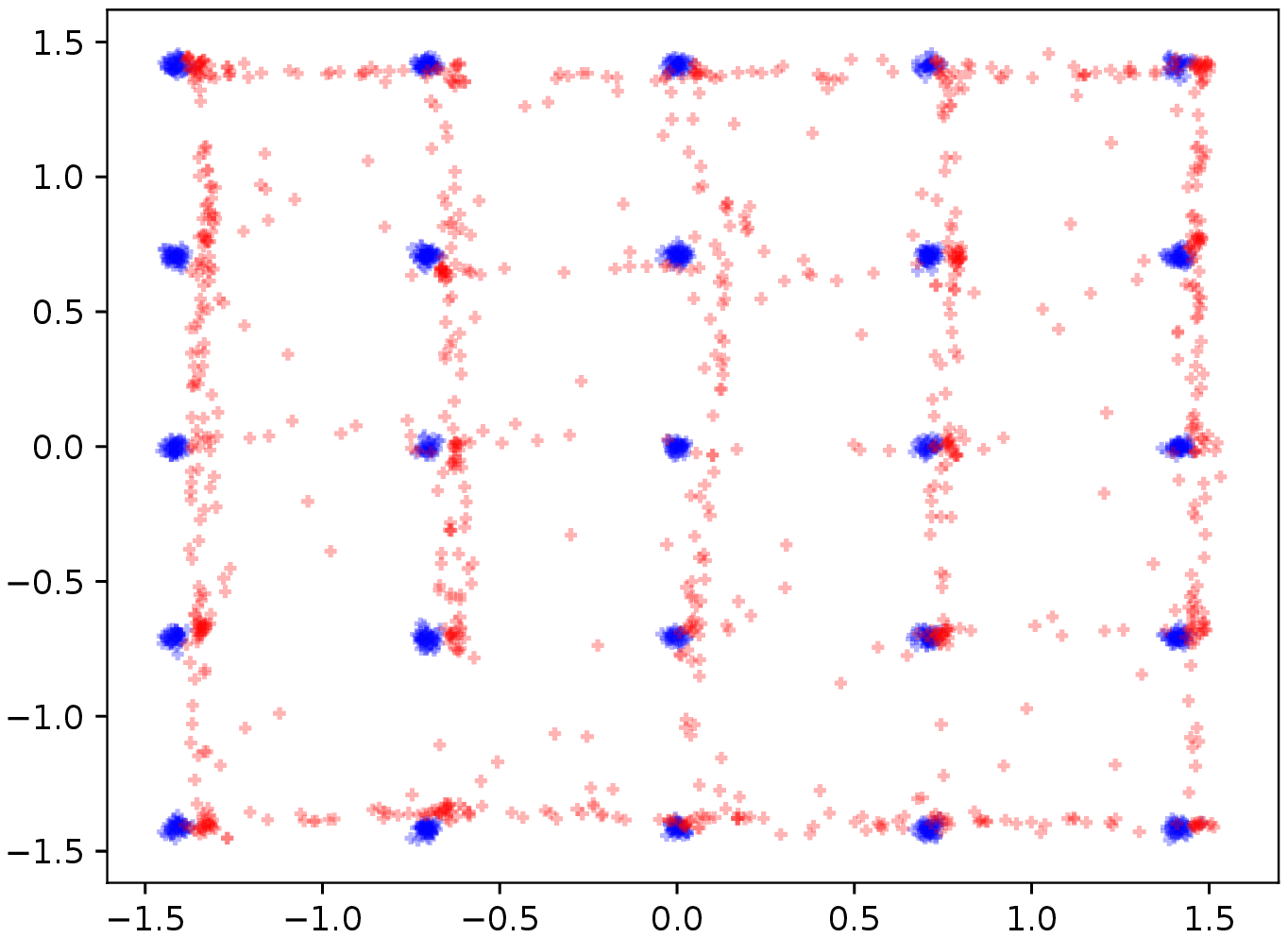}
    ~
    \includegraphics[width = 0.19\linewidth]{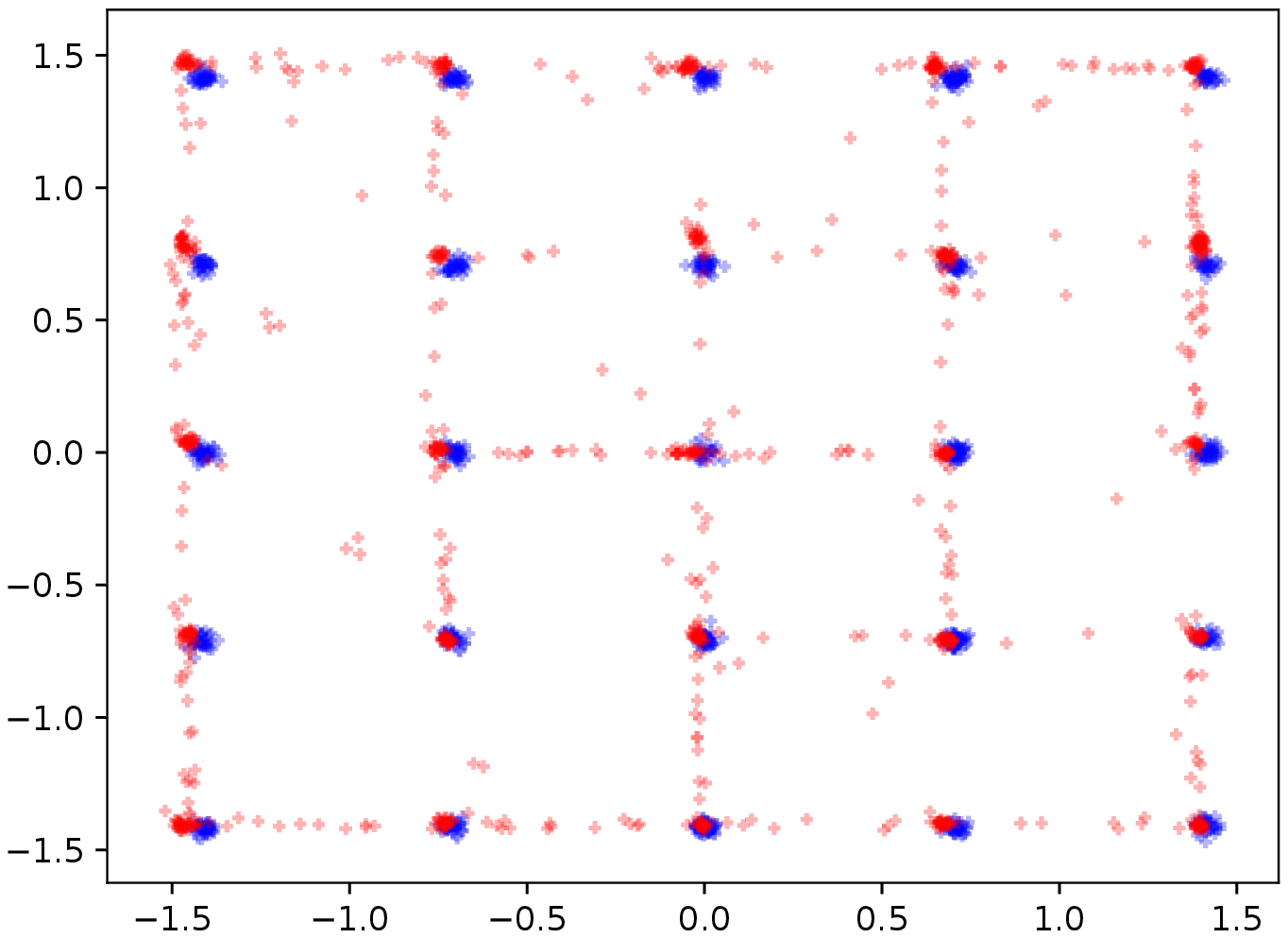}
    ~
    \includegraphics[width = 0.19\linewidth]{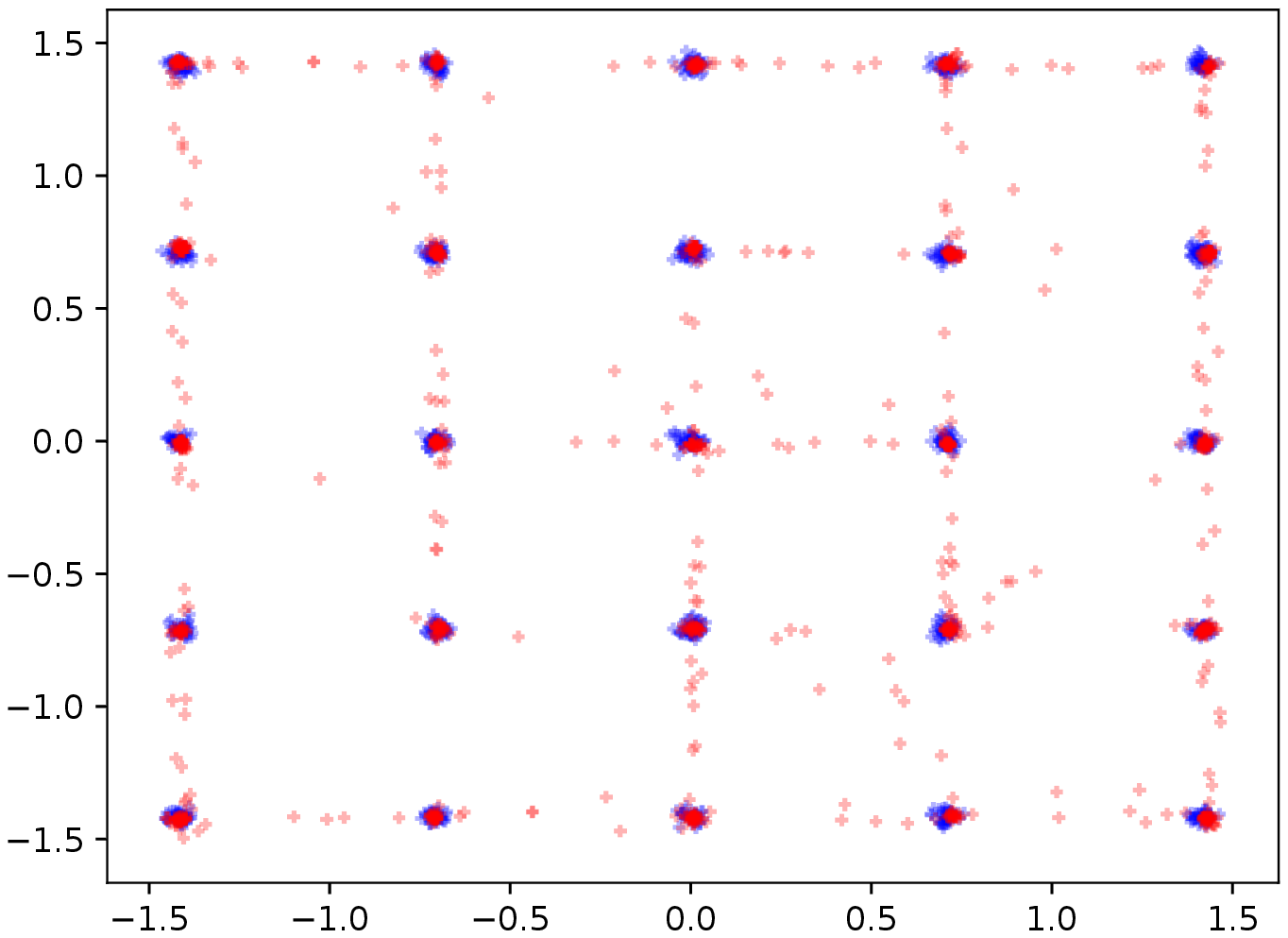}
    ~
    \includegraphics[width = 0.19\linewidth]{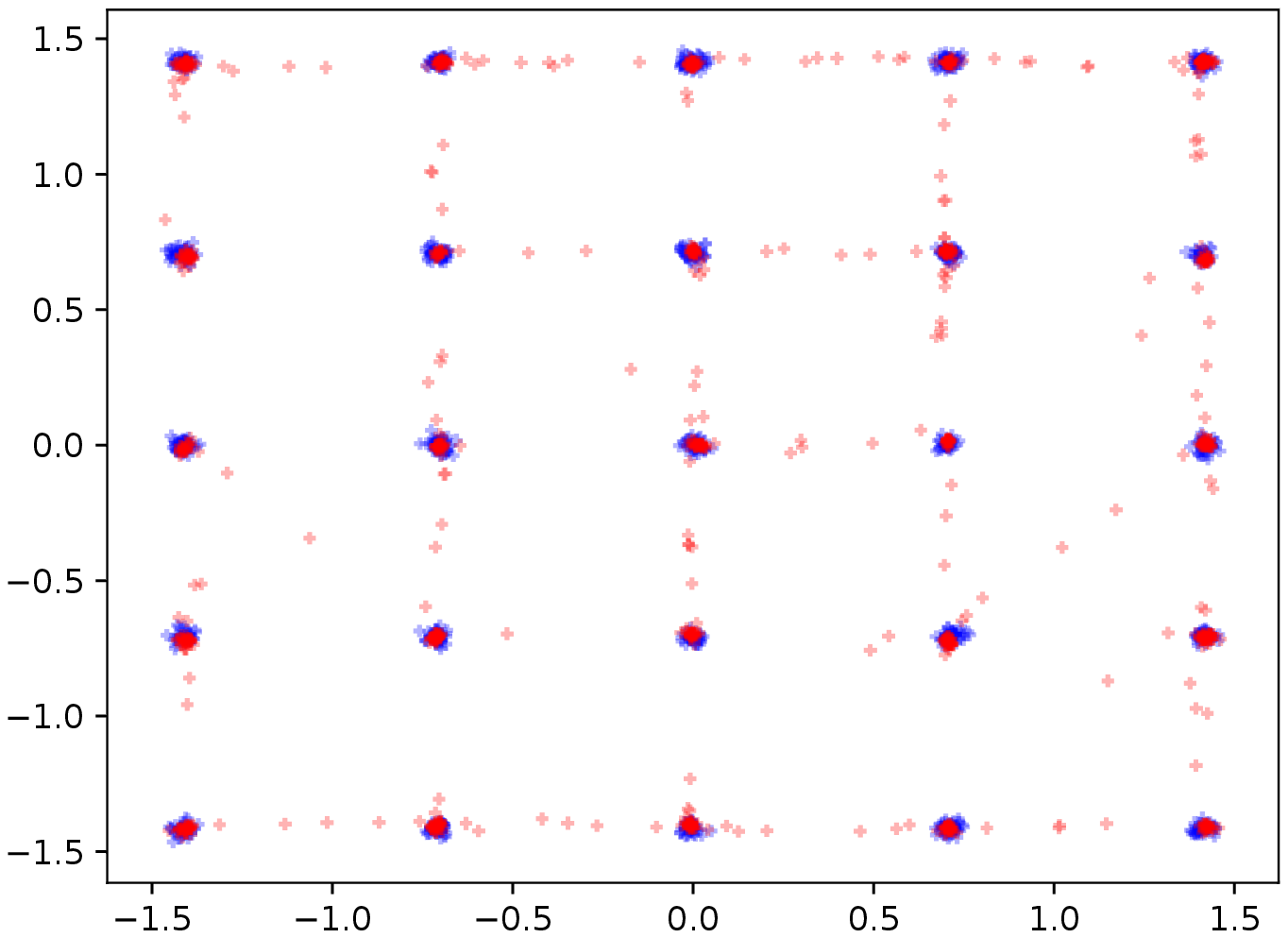}
}\end{adjustwidth}

\begin{adjustwidth}{-1.0in}{-1.0in} 
\subfigure[Mirror-Prox-GAN]{
    \includegraphics[width = 0.19\linewidth]{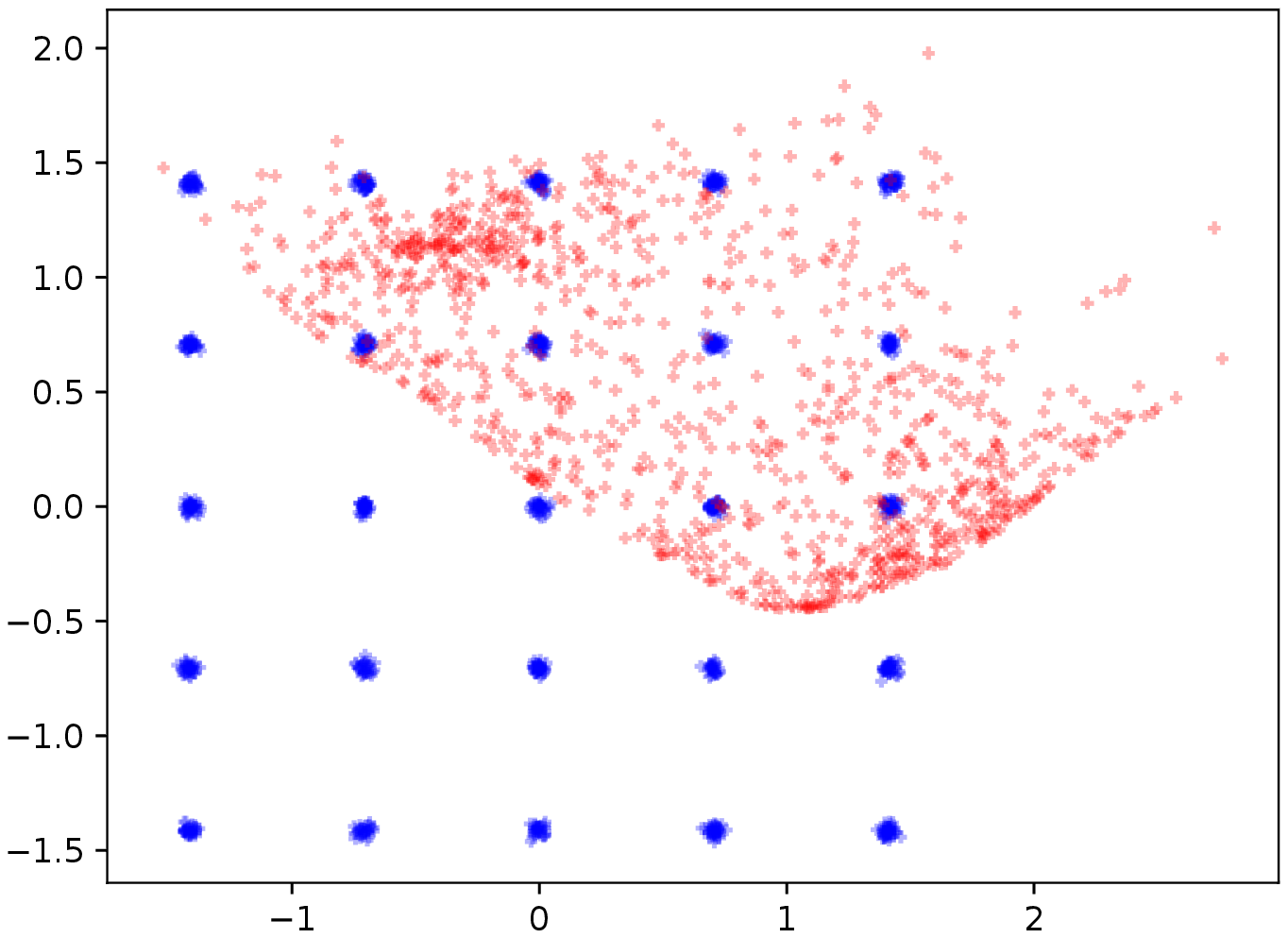}
    ~
    \includegraphics[width = 0.19\linewidth]{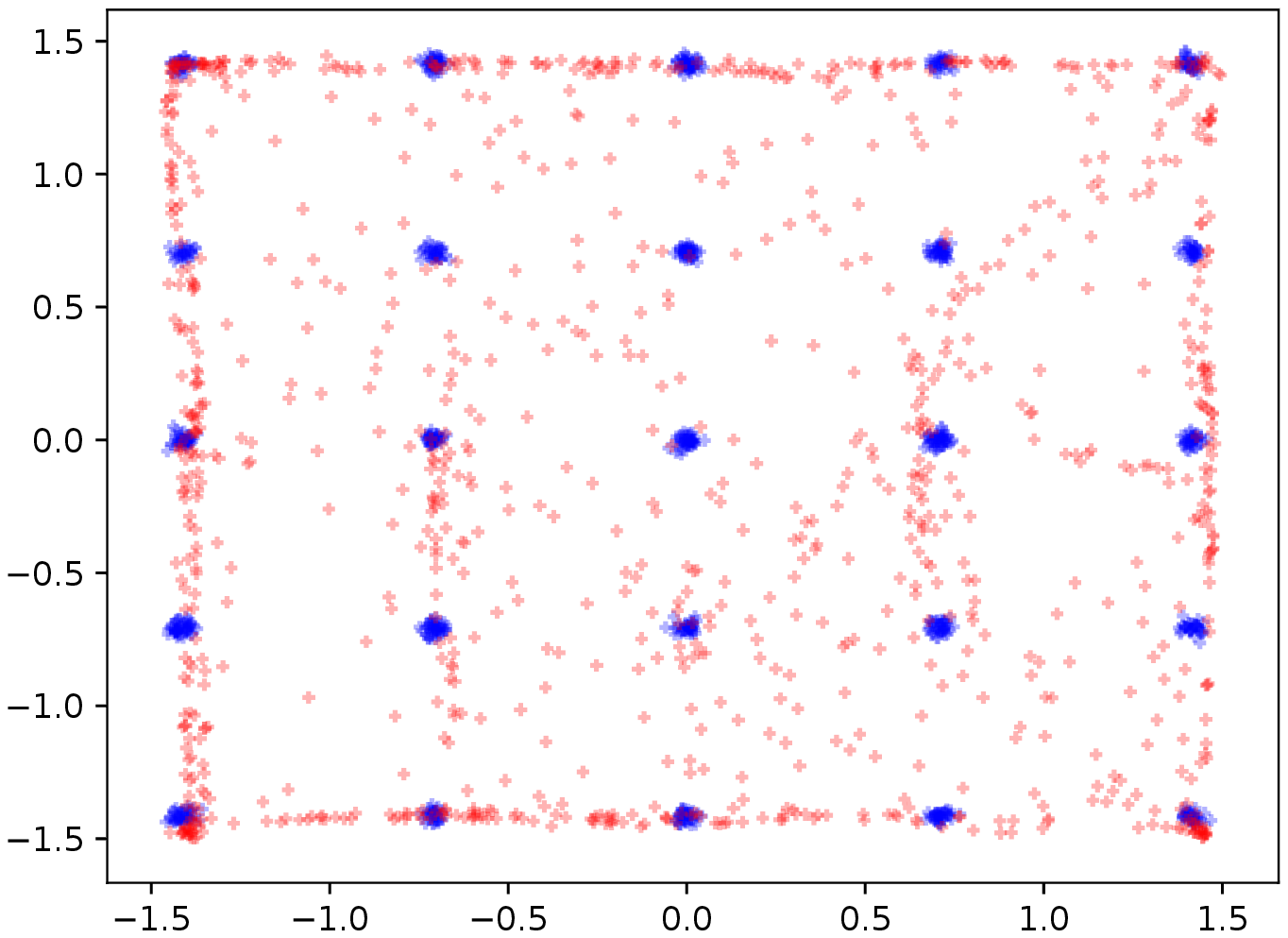}
    ~
    \includegraphics[width = 0.19\linewidth]{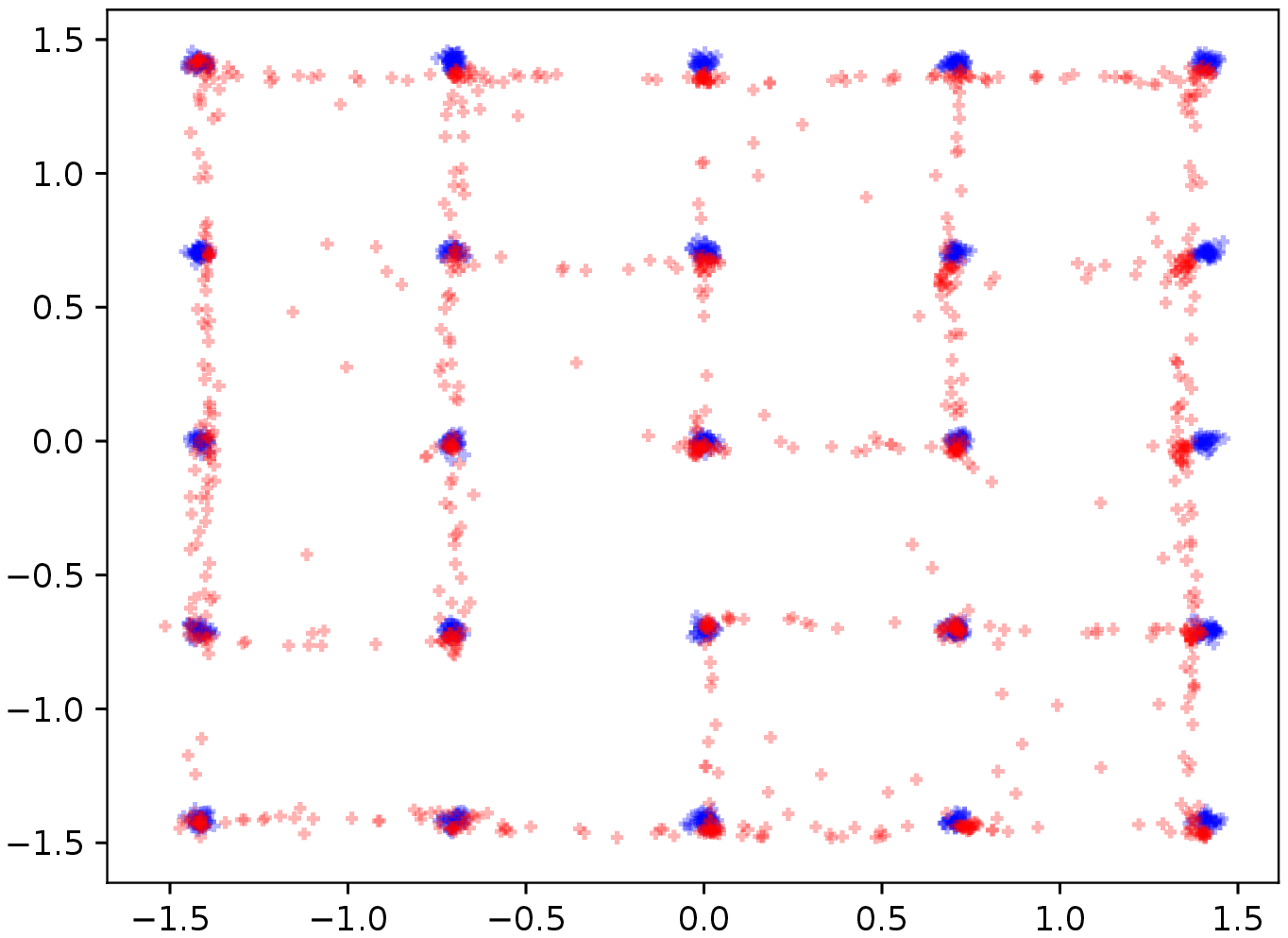}
    ~
    \includegraphics[width = 0.19\linewidth]{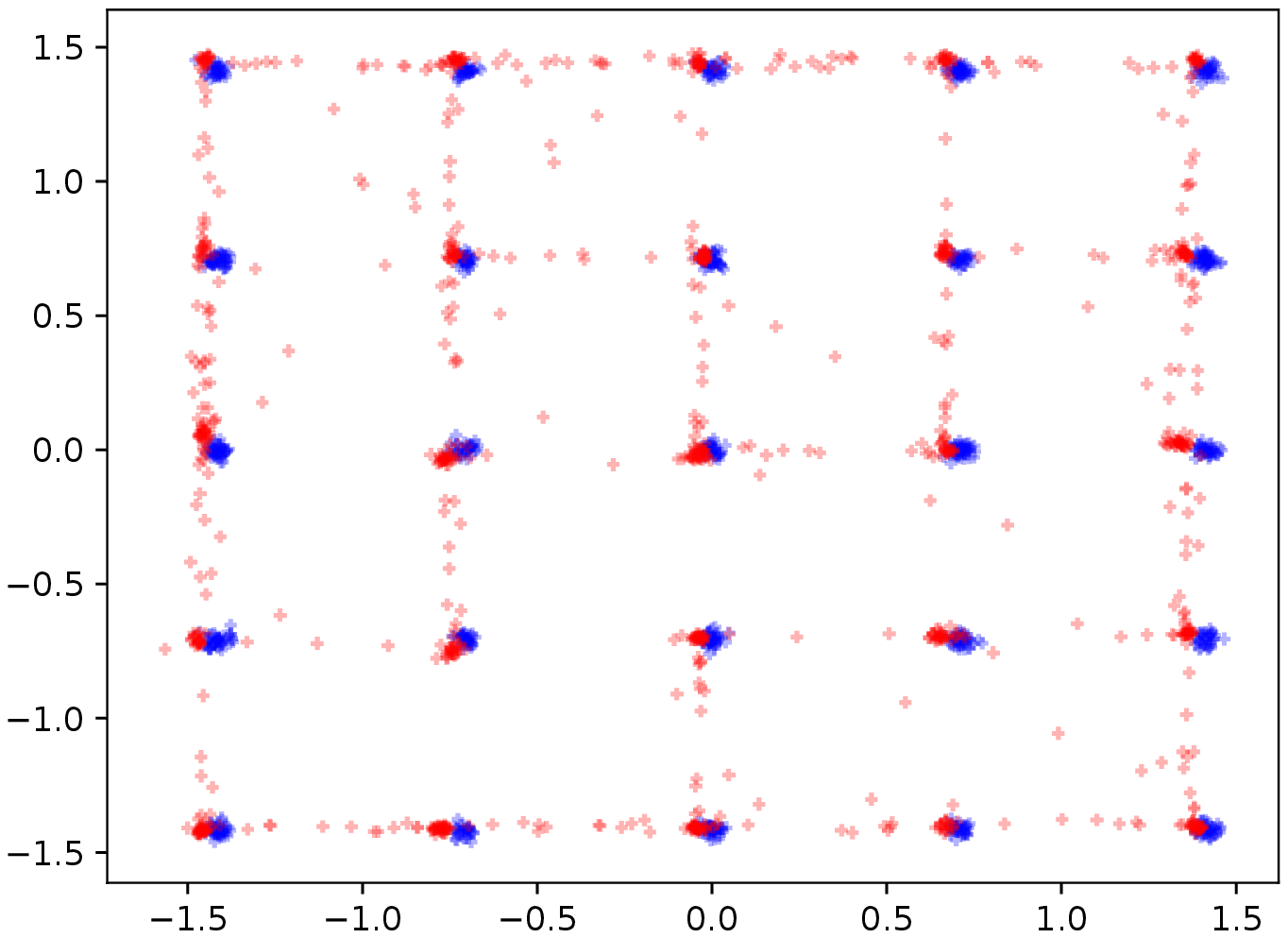}
    ~
    \includegraphics[width = 0.19\linewidth]{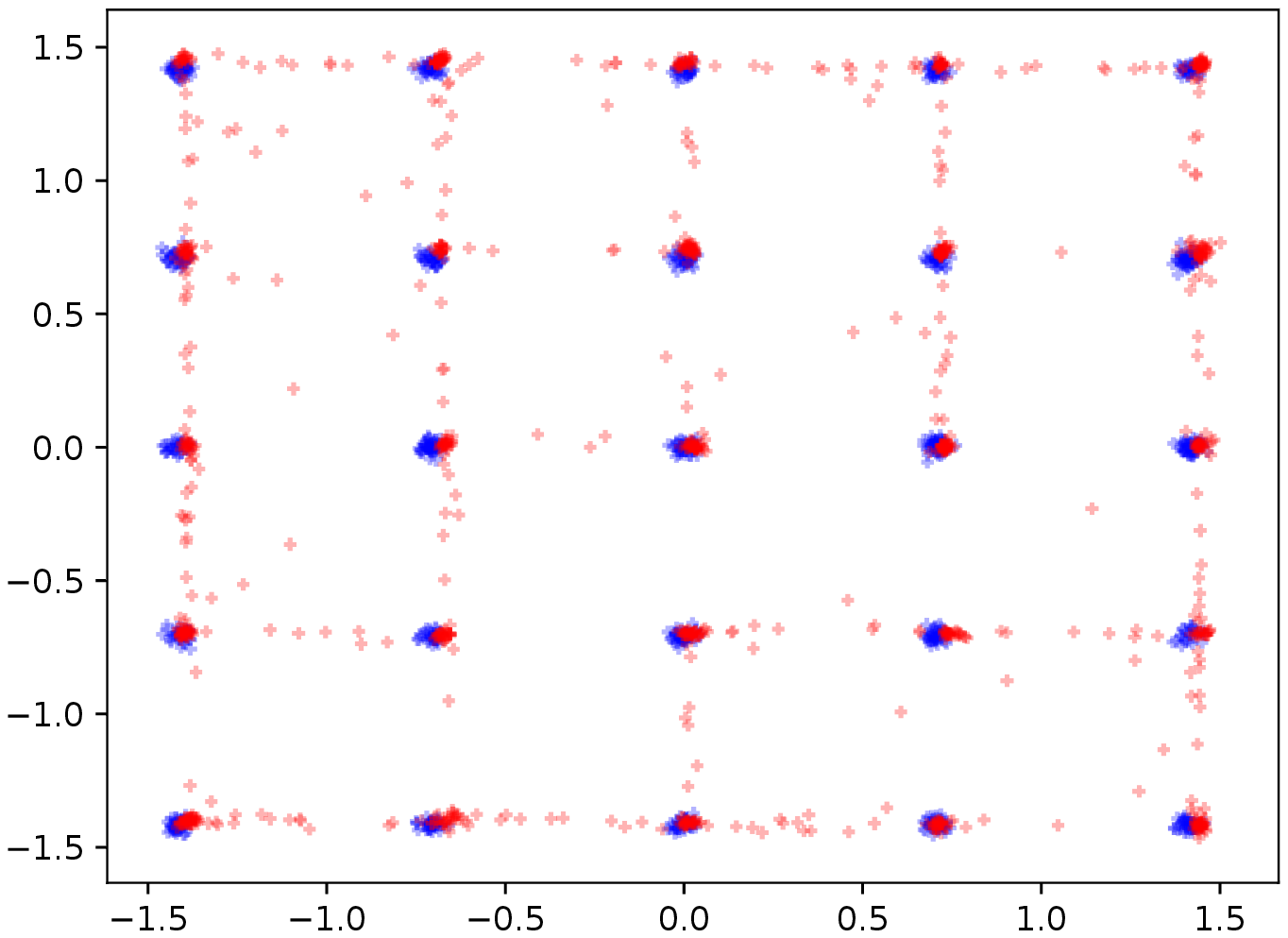}
}\end{adjustwidth}
\caption{Learning 25 Gaussian mixtures accross different iterations. }\label{fig:gauss25_steps}
\end{figure}

\subsection{Real Data} \label{sec:app_exp_real_data}

\subsubsection{\texttt{MNSIT}} \label{sec:app_exp_MNIST}

Results on \texttt{MNIST} dataset are shown in Figure \ref{fig:mnist}.
The models are trained by each algorithm for $10^5$ iterations.
We can see that all algorithms achieve comparable performance. Therefore, the dataset seems too weak to be a discriminator for different algorithms.

\begin{figure}[h!]
\begin{adjustwidth}{-.5in}{-.5in} 
\centering
\subfigure[True Data]{\includegraphics[scale = 0.3]{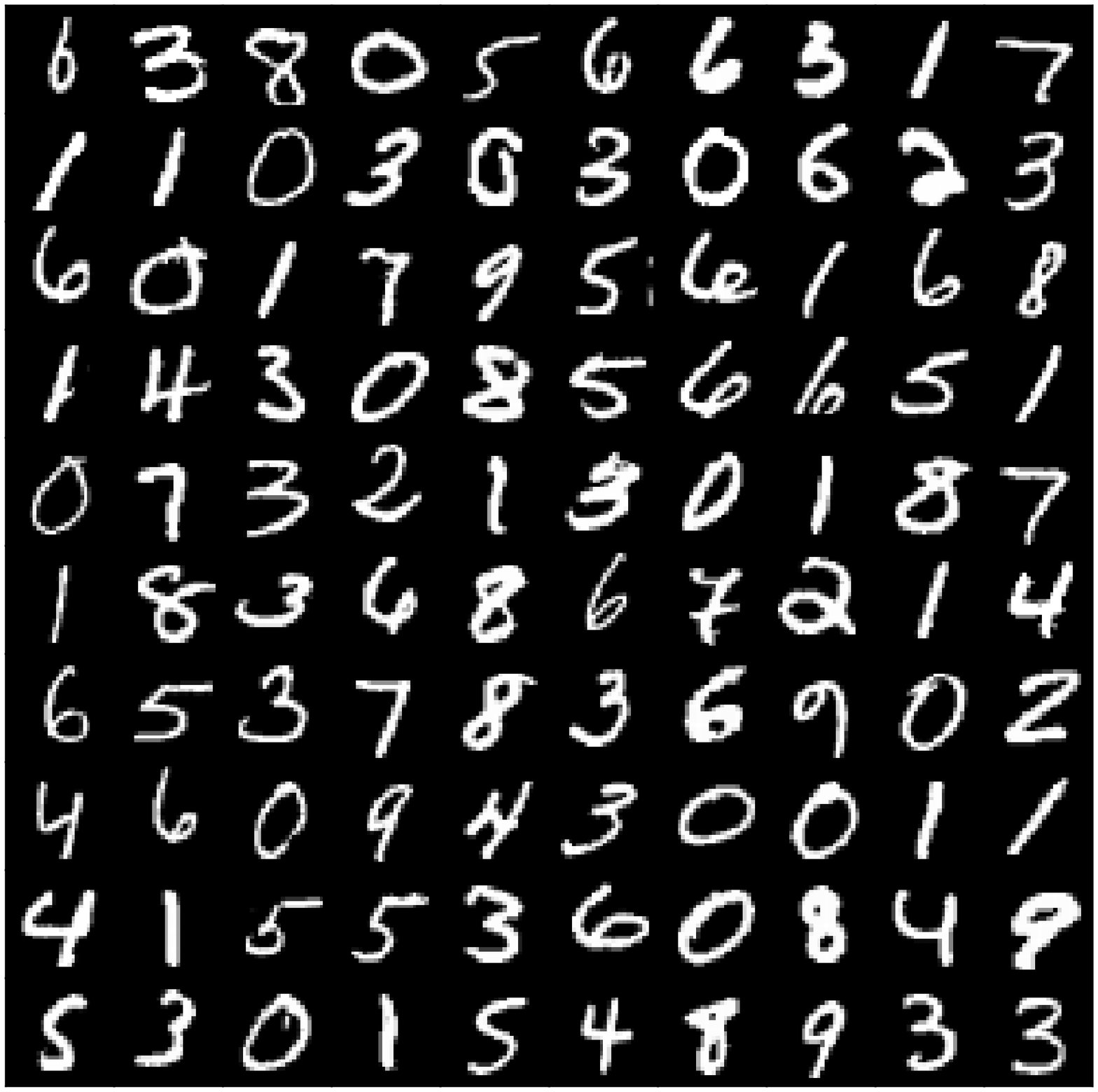}} \\
\subfigure[SGD]{\includegraphics[scale = 0.3]{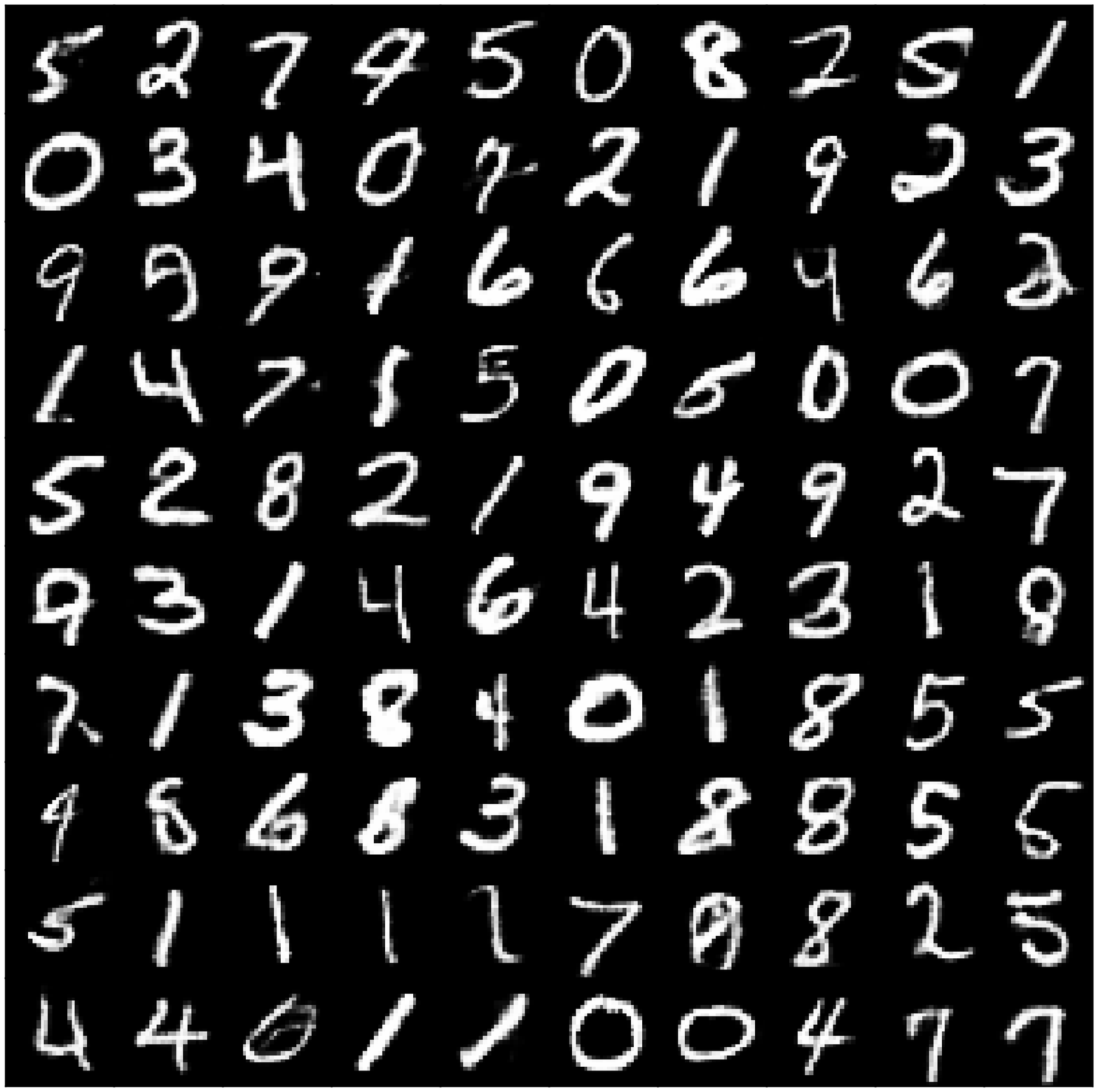}}
\subfigure[Adam]{\includegraphics[scale = 0.3]{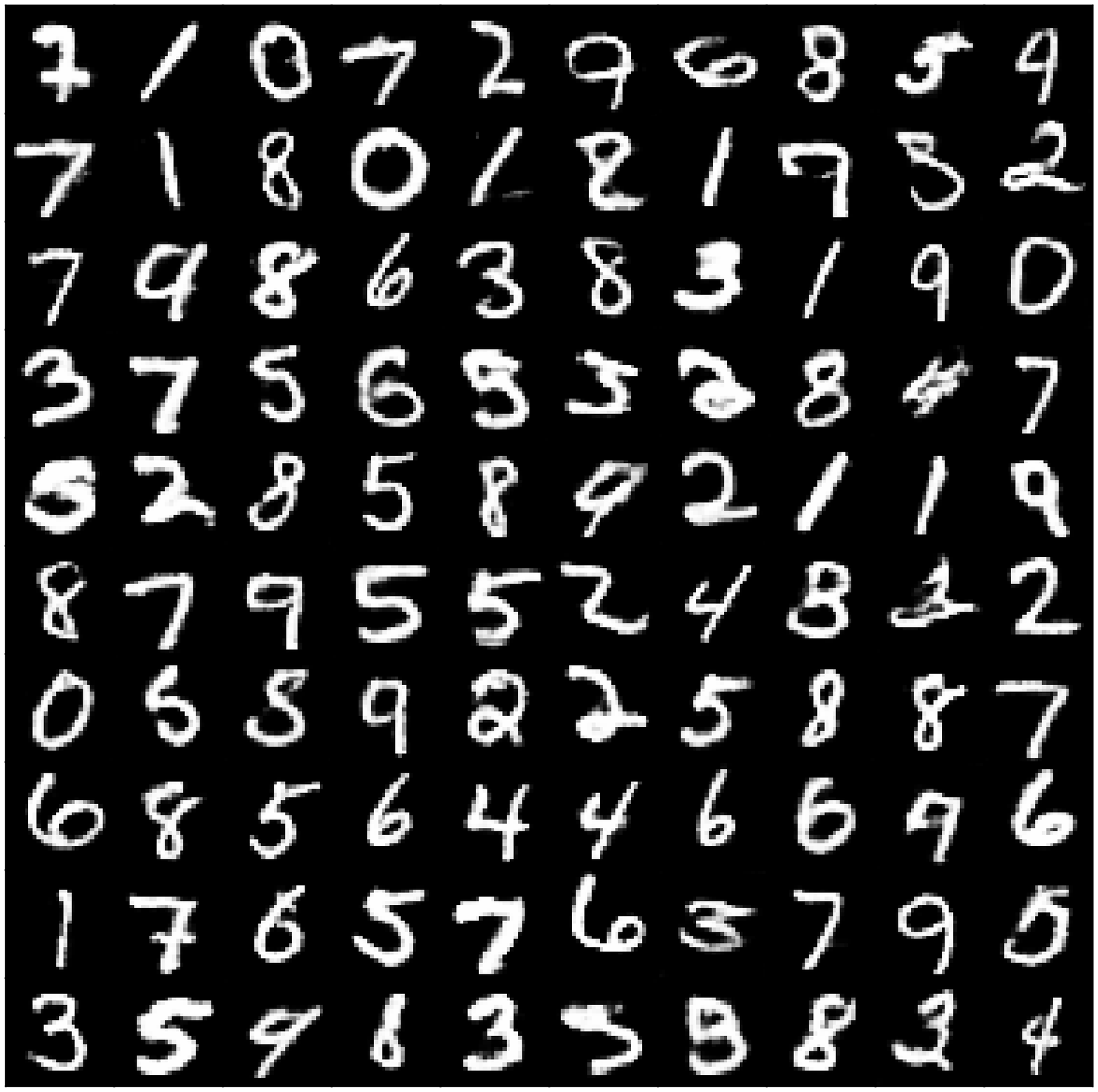}}
\subfigure[Mirror-GAN]{\includegraphics[scale = 0.3]{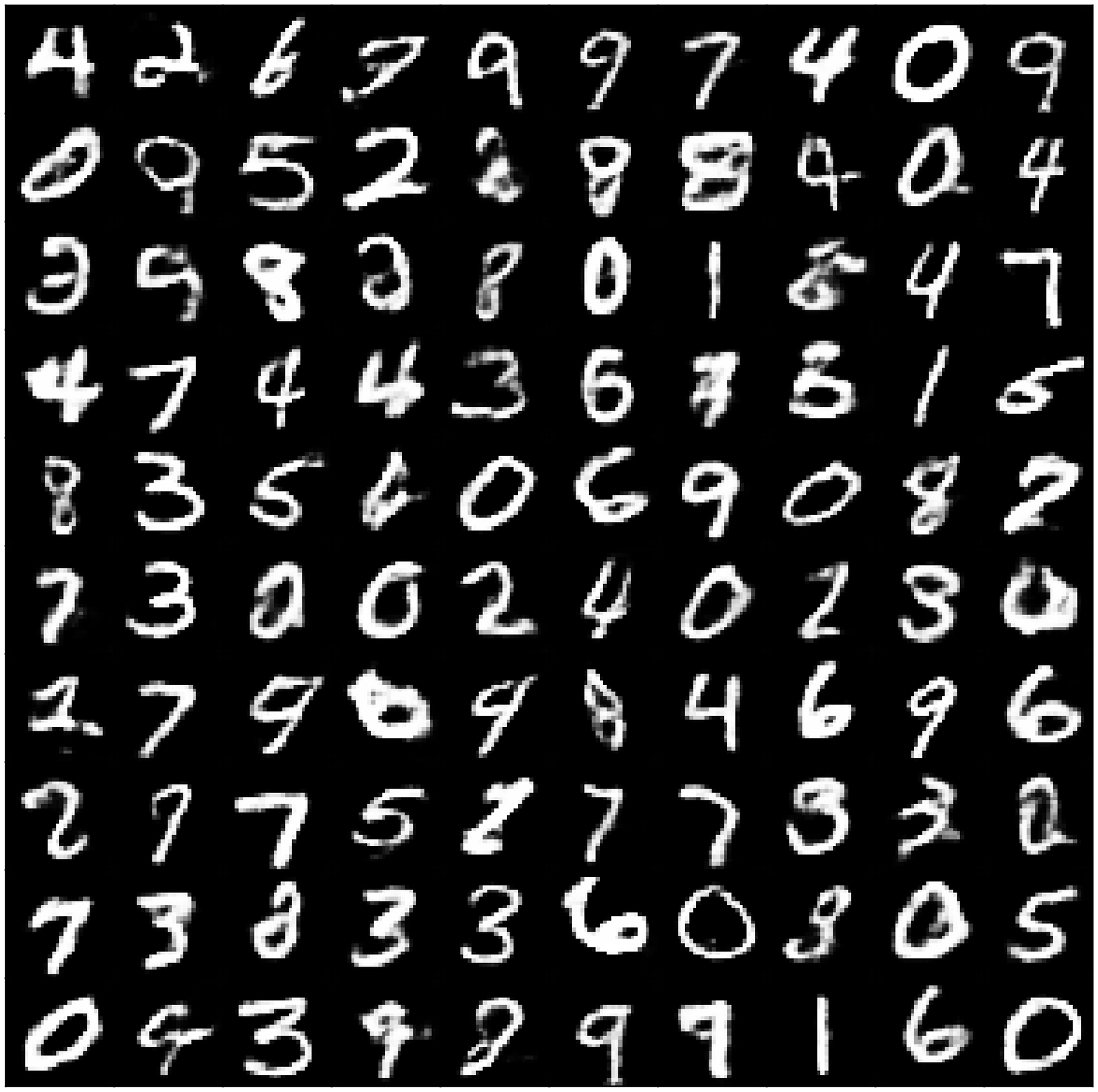}}
\subfigure[Mirror-Prox-GAN]{\includegraphics[scale = 0.3]{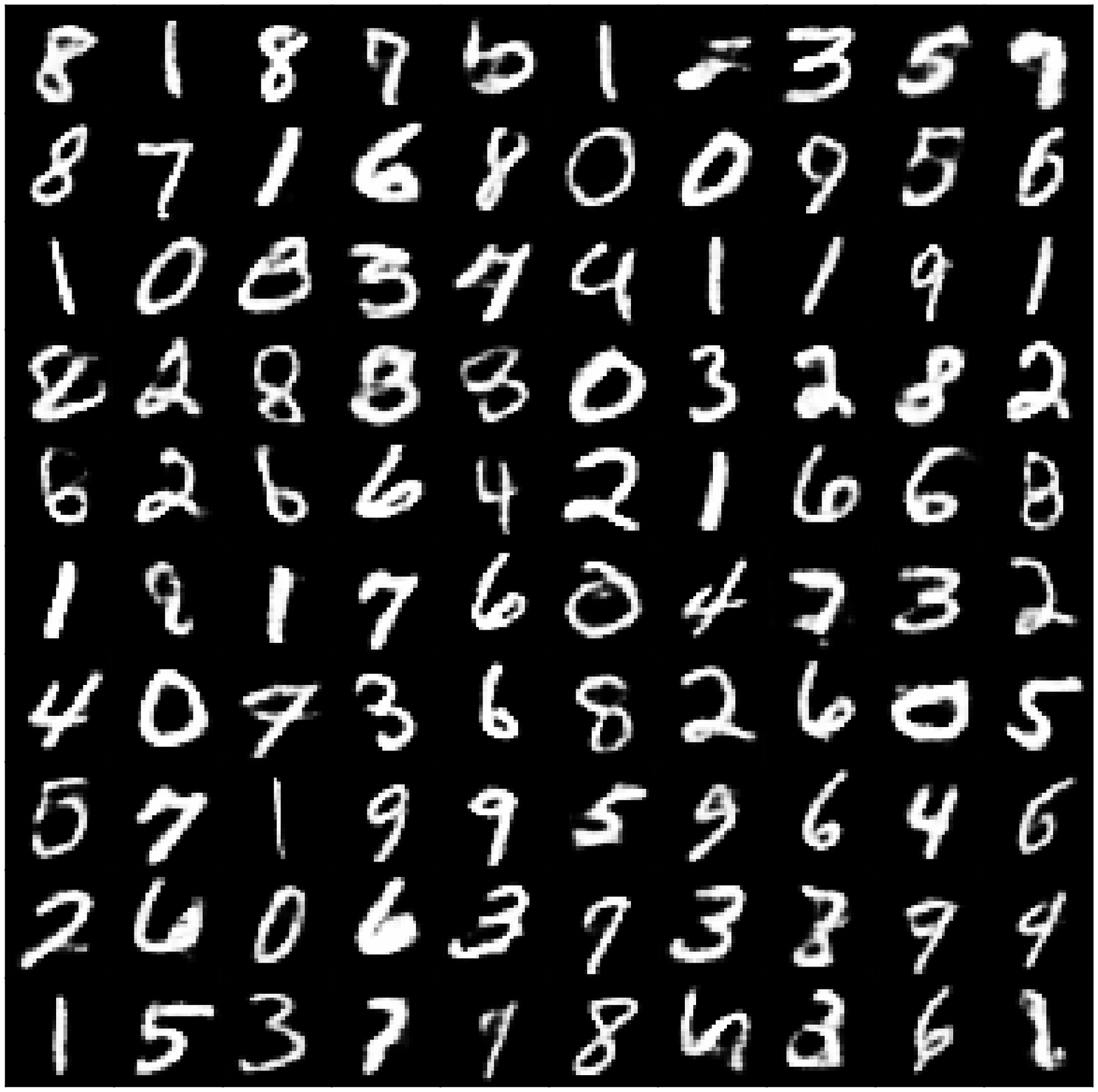}}
\end{adjustwidth}
\caption{True \texttt{MNIST} images and samples generated by different algorithms.}\label{fig:mnist}
\end{figure}
\begin{figure}[h!]


\subsubsection{\texttt{LSUN Bedroom}}\label{sec:app_exp_LSUN}

More results on the \texttt{LSUN bedroom} dataset are shown in Figure \ref{fig:lsun_steps}.
We show images generated after $4\times 10^4, 8\times 10^4$, and $10^5$ iterations by each algorithm.
We can see that the Mirror-GAN (with RMSProp-preconditioned SGLD) outperforms vanilla RMSProp. Adam was able to obtain meaningful images in early stages of training. However, further iterations do not improve the image quality of Adam. In contrast, they lead to severe mode collapse at the $8\times 10^4$th iteration, and converge to  noise later on.

\vspace{10mm}

\begin{adjustwidth}{-1.0in}{-1.0in} 
\hspace{18mm}$4\times 10^4$~iterations\hspace{32mm}$8\times 10^4$~iterations\hspace{38mm}$10^5$~iterations

\hspace{.0mm}  \subfigure[RMSProp]{
    \includegraphics[width = 0.3\linewidth]{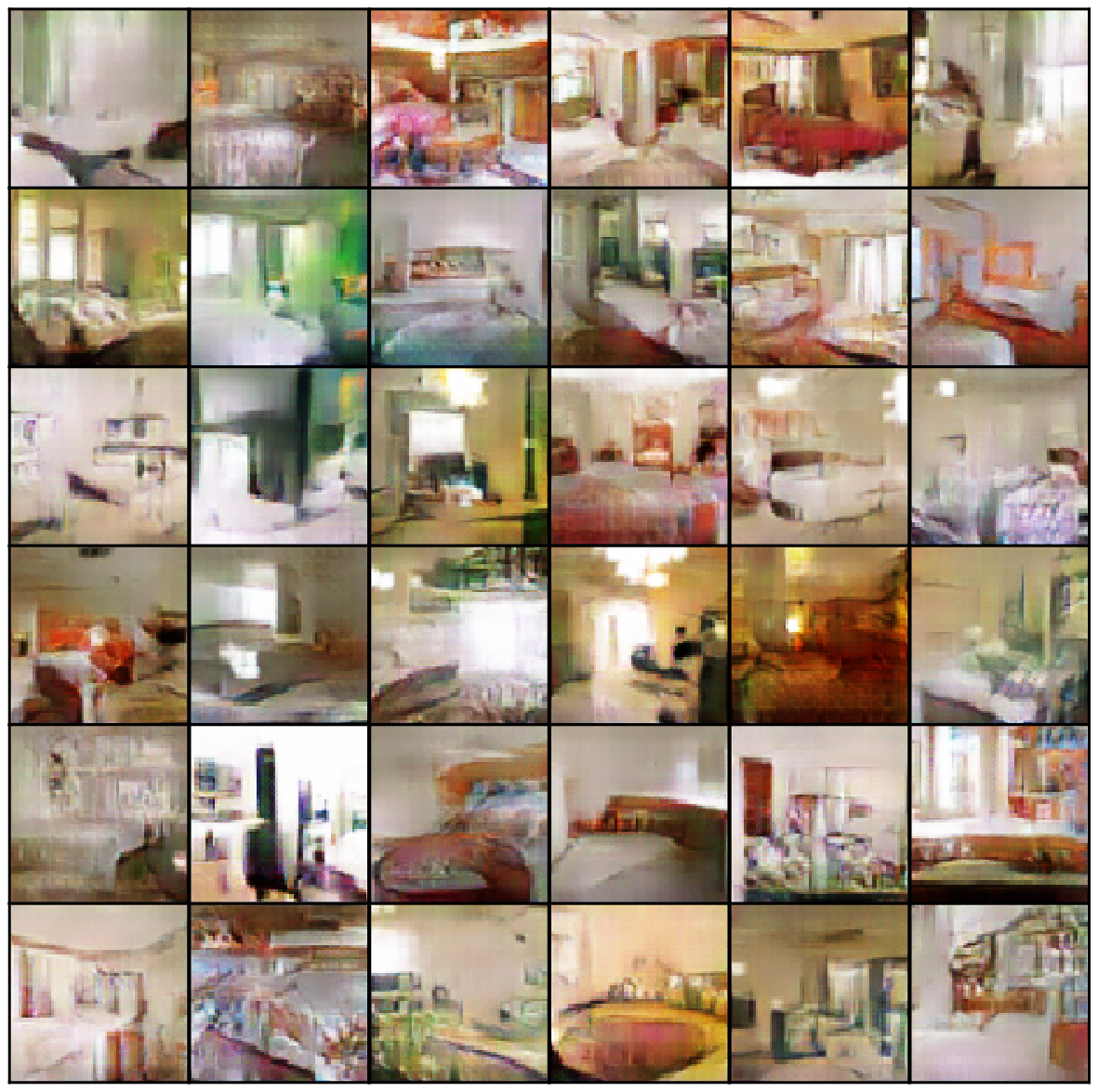}
    ~
    \includegraphics[width = 0.3\linewidth]{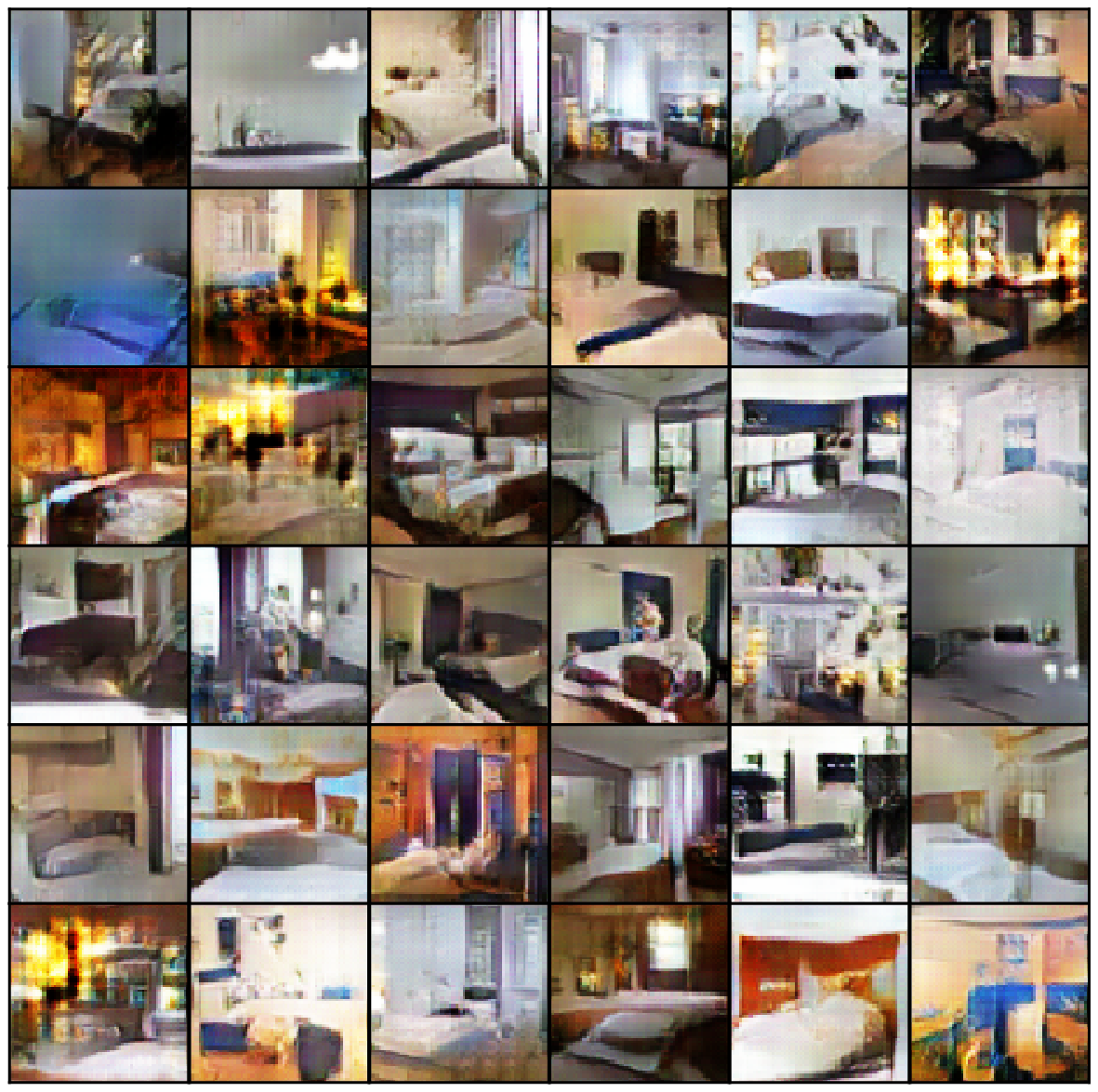}
    ~
    \includegraphics[width = 0.3\linewidth]{figs/lsun.6/rmsprop_100000.eps}
}    
\end{adjustwidth}

\begin{adjustwidth}{-1.0in}{-1.0in} 
\subfigure[Adam]{
    \includegraphics[width = 0.3\linewidth]{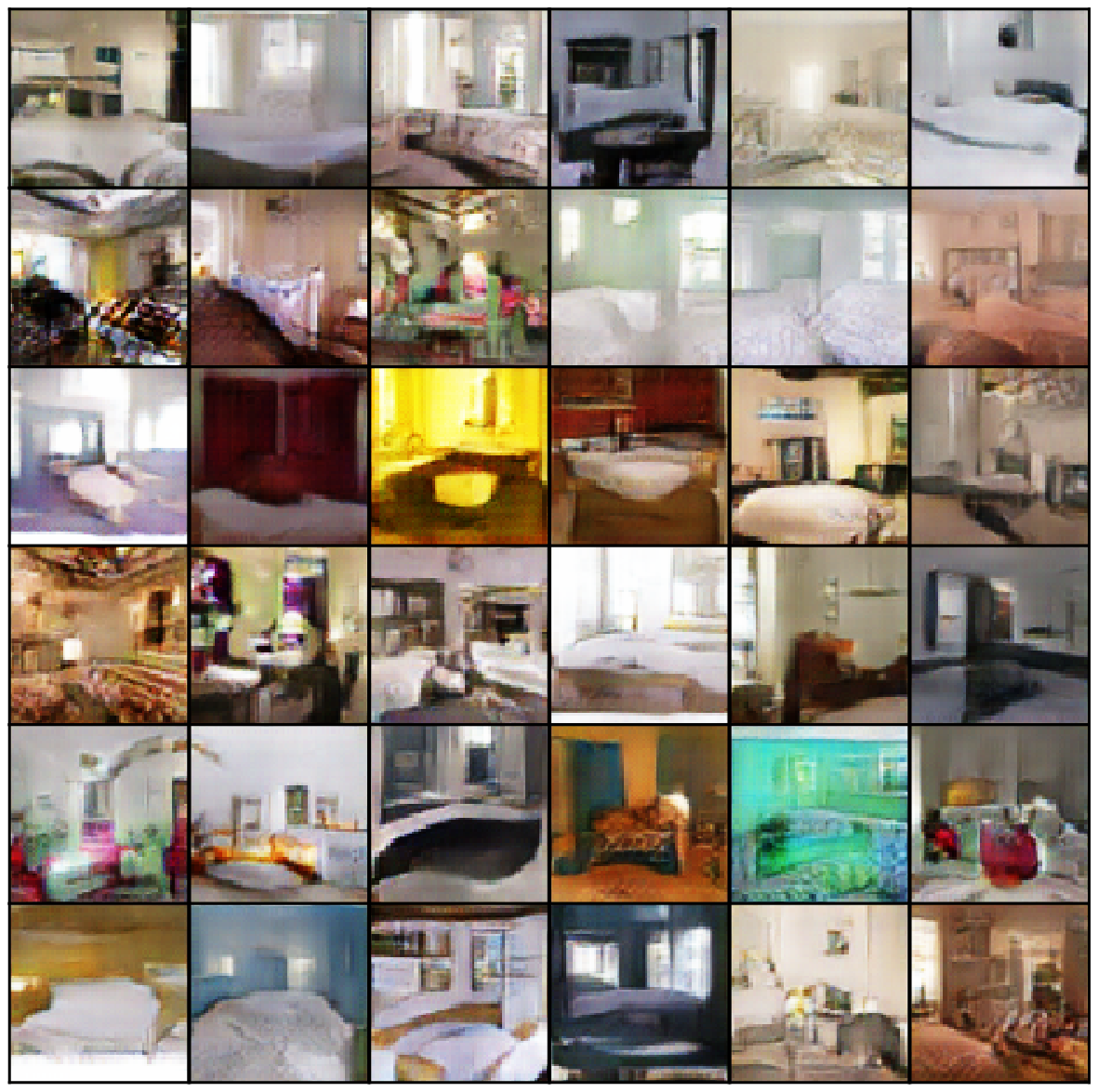}
    ~
    \includegraphics[width = 0.3\linewidth]{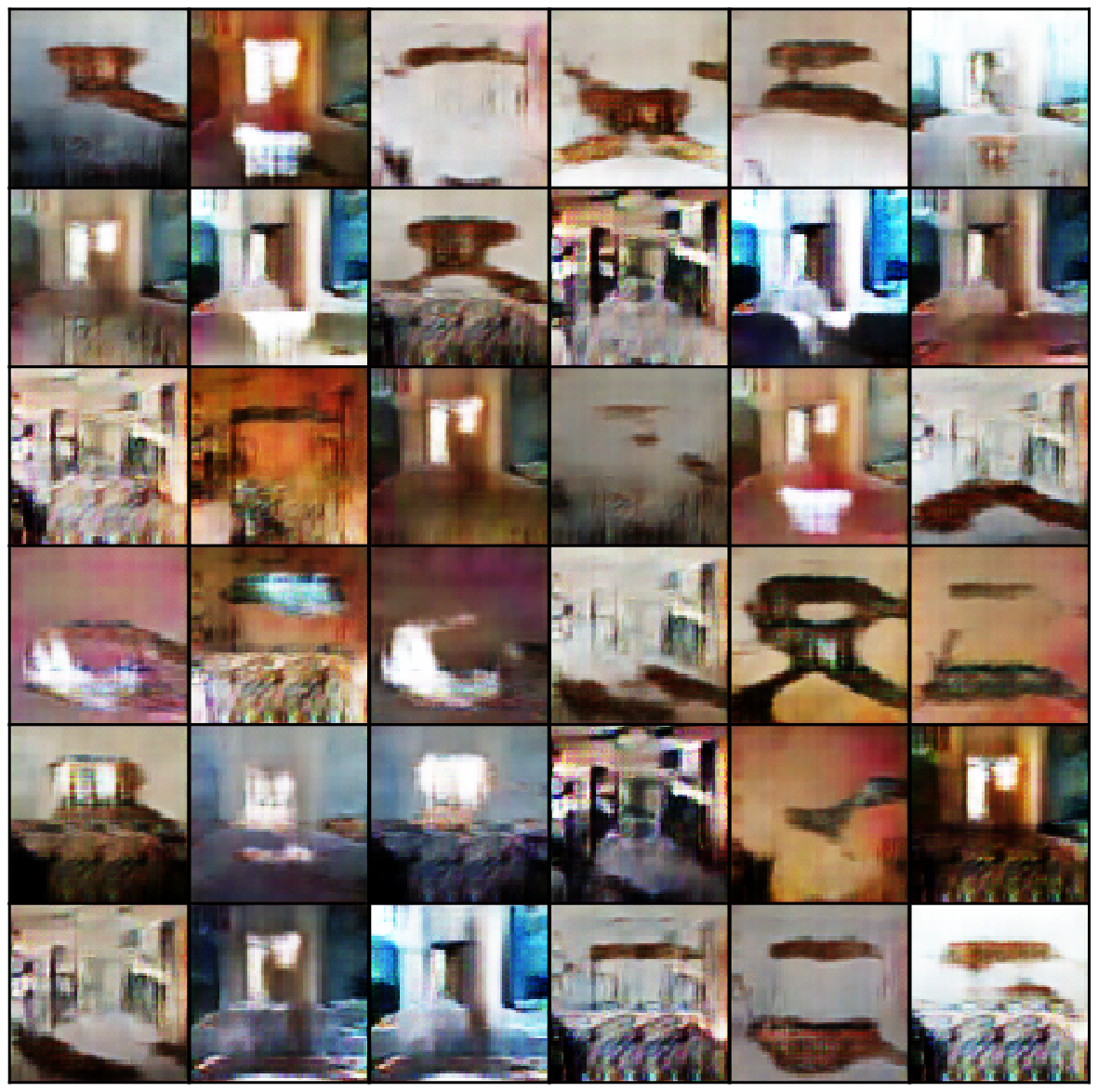}
    ~
    \includegraphics[width = 0.3\linewidth]{figs/lsun.6/adam_100000.eps}
}\end{adjustwidth}

\begin{adjustwidth}{-1.0in}{-1.0in} 
\subfigure[Mirror-GAN, \textbf{Algorithm \ref{alg:pseudo_heuristic_MD}}]{
    \includegraphics[width = 0.3\linewidth]{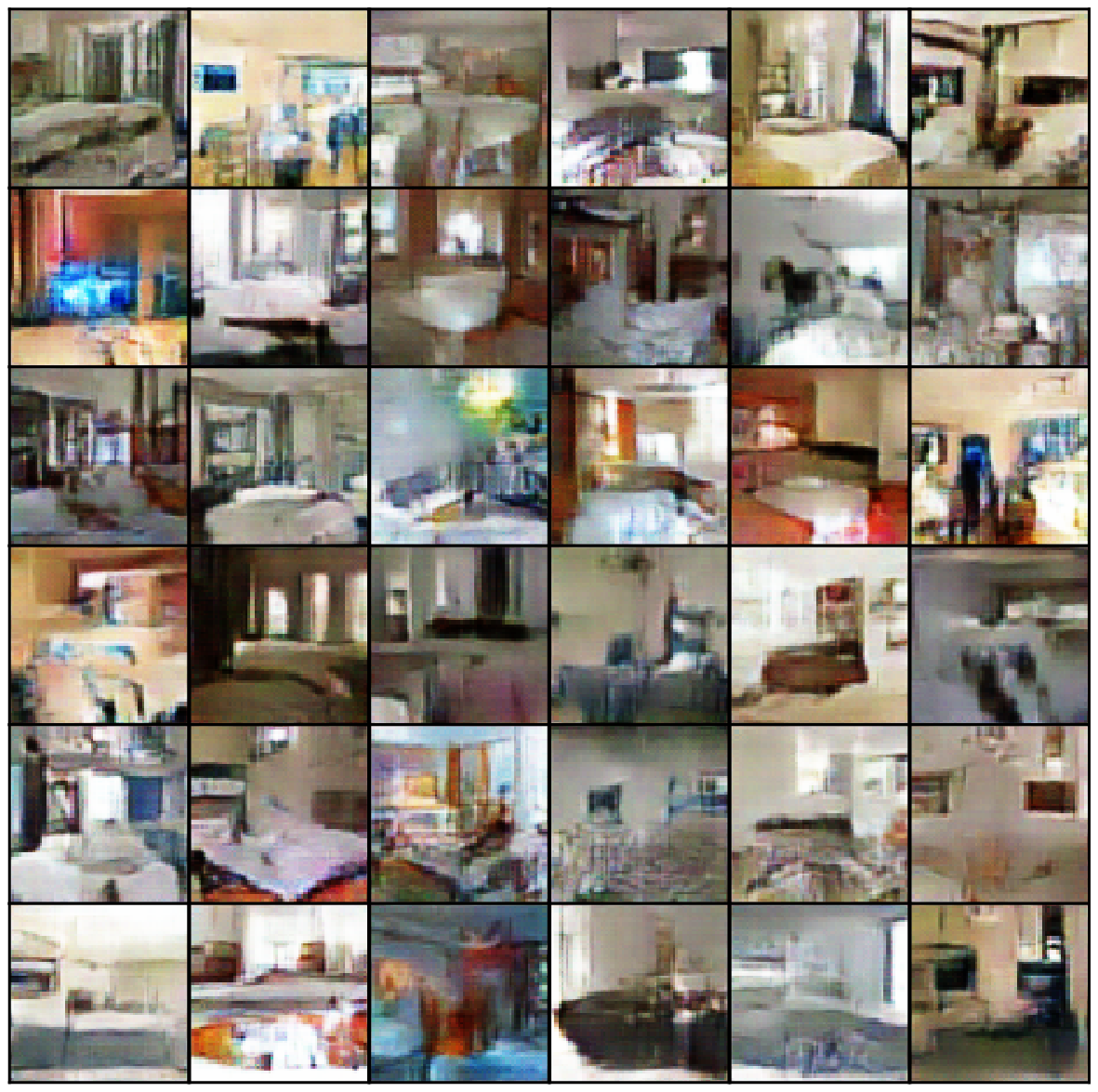}
    ~
    \includegraphics[width = 0.3\linewidth]{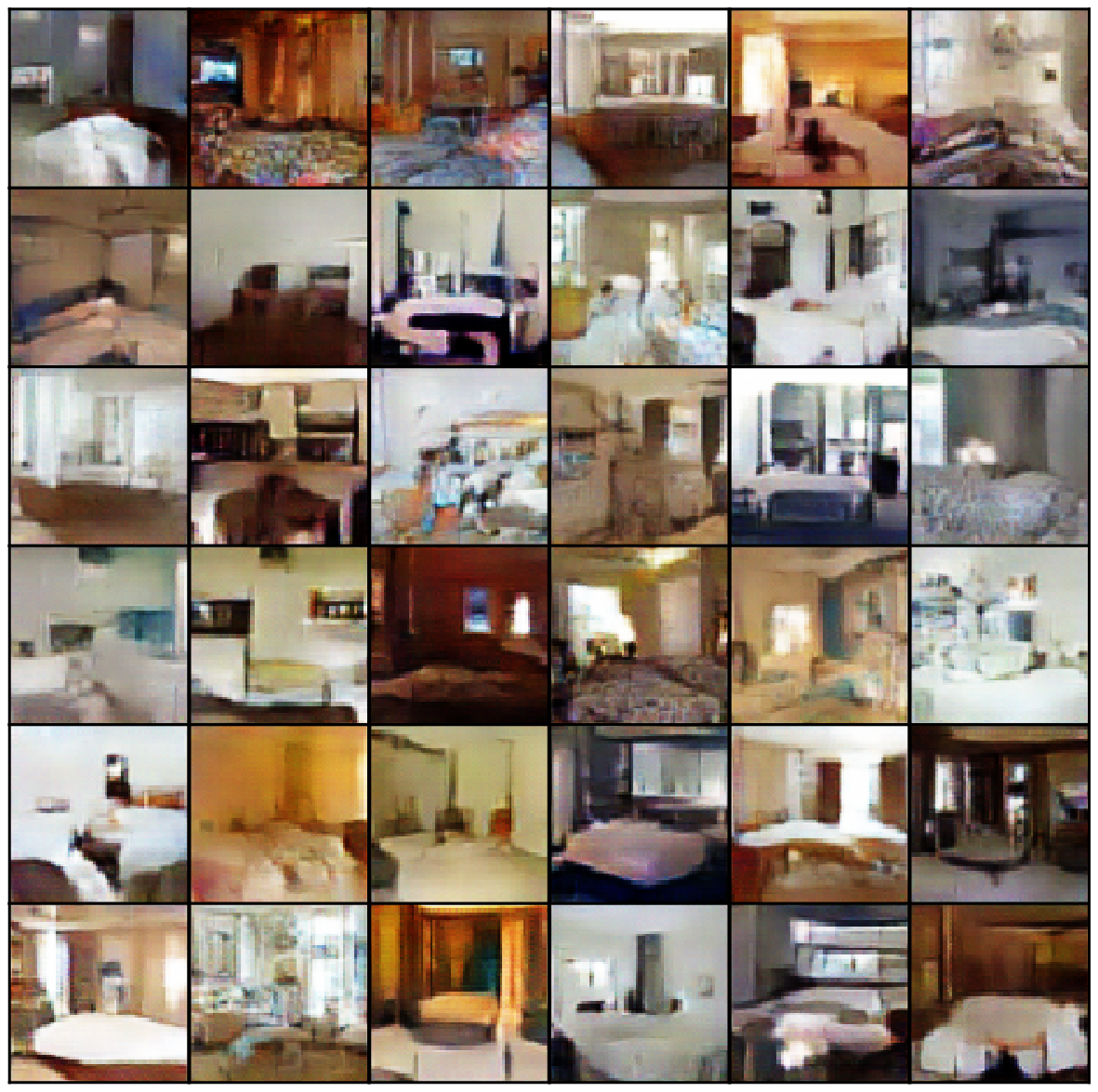}
    ~
    \includegraphics[width = 0.3\linewidth]{figs/lsun.6/md_100000.eps}
}\end{adjustwidth}

\caption{Image generated by RMSProp, Adam and Mirror-GAN on the \texttt{LSUN bedroom} dataset. }\label{fig:lsun_steps}
\end{figure}

\end{document}